\documentclass{article}

    \PassOptionsToPackage{numbers}{natbib}
 \usepackage[preprint]{neurips_2026}

\usepackage[utf8]{inputenc} 
\usepackage[T1]{fontenc}    
\usepackage{xcolor}
\definecolor{mydarkblue}{rgb}{0,0.08,0.45}
\usepackage[
    colorlinks=true,
    linkcolor=mydarkblue,
    citecolor=mydarkblue,
    urlcolor=mydarkblue
]{hyperref}
\usepackage{url}            
\usepackage{booktabs}       
\usepackage{nicefrac}       
\usepackage{microtype}      
\usepackage{graphicx}
\usepackage{subcaption}

\usepackage{amsmath}
\usepackage{amssymb}
\usepackage{amsfonts}       
\usepackage{mathtools}
\usepackage{amsthm}

\usepackage{algorithm}
\usepackage{algorithmic}

\usepackage{todonotes}

\DeclareMathOperator*{\argmin}{arg\,min}
\DeclareMathOperator*{\argmax}{arg\,max}
\newcommand{\EI}{\operatorname{EI}}

\usepackage[capitalize,noabbrev]{cleveref}

\theoremstyle{plain}
\newtheorem{theorem}{Theorem}[section]
\newtheorem{proposition}[theorem]{Proposition}

\theoremstyle{definition}

\theoremstyle{remark}
\newtheorem{remark}[theorem]{Remark}

\renewcommand{\cite}[1]{\citep{#1}}

\usepackage{wrapfig}

\title{LAGO: A Local-Global Optimization Framework Combining Trust Region Methods and Bayesian Optimization}

\author{
  Eliott Van Dieren \\
  Institute of Mathematics\\
  EPFL\\
  Lausanne, Switzerland\\
  \texttt{eliott.vandieren@epfl.ch} \\
  \And
  Tommaso Vanzan\\
  Dipartimento di Scienze Matematiche\\
  Politecnico di Torino\\
  Turin, Italy\\
  \texttt{tommaso.vanzan@polito.it} \\
  \AND
  Fabio Nobile\\
  Institute of Mathematics\\
  EPFL\\
  Lausanne, Switzerland\\
  \texttt{fabio.nobile@epfl.ch} \\
}

\begin{document}

\maketitle

\begin{abstract}
We introduce LAGO, a \textbf{L}oc\textbf{A}l-\textbf{G}lobal
\textbf{O}ptimization framework coupling Bayesian
Optimization (BO) and gradient-based trust region local refinement through an
adaptive competition mechanism for smooth expensive-to-evaluate objective functions with available gradients.
At each iteration, global and local optimization strategies independently propose candidate points, and the next
evaluation is selected based on predicted improvement.
LAGO separates global exploration from local refinement at the proposal level: the BO acquisition
function is optimized outside the active trust region, while local candidates are proposed within the trust region.
Points in the vicinity of the accepted local step are incorporated in the global GP dataset only when satisfying a lengthscale-based minimum-distance
criterion, hence reducing the risk of numerical instability during local exploitation.
LAGO enhances BO with efficient local refinement when reaching promising
regions, and reverts to exploratory behavior when local steps are not competitive.
\end{abstract}

\section{Introduction}

\subsection{Problem setting and motivations}

We consider optimization problems in which evaluating the objective is computationally expensive,
for instance, in engineering design \cite{mdobook} when using PDE-based models, shape optimization in fluid- and aero-dynamics when using 
computational fluid dynamics models \cite{ferziger2002computational}, and more generally in PDE-constrained optimization \cite{hinze2008optimization}. 
In these contexts, accurate (noise-free) gradients are often available via
automatic differentiation \cite{baydin2018automatic} or adjoint-based methods \cite{plessix2006review}, at a cost comparable to, if not lower than, that of a single function evaluation.

A key challenge in this setting is whether to pursue a global or a local
optimization strategy. On the one hand, Bayesian Optimization (BO) \cite{garnett2023bayesian} is well known for its strong performance under limited
evaluation budgets. Given a compact set $\mathcal{X} \subseteq \mathbb{R}^d$, $d\in \mathbb{N}$, and a continuous function $f : \mathcal{X} \to \mathbb{R}$, BO addresses the problem of finding
\begin{equation*}
    \mathbf{x}^\star \in \argmin_{\mathbf{x} \in \mathcal{X}} f(\mathbf{x}),
\end{equation*}
by placing a Gaussian Process (GP) \cite{rasmussen2005} prior over $f$.
Posterior inference is used to improve the knowledge of $f$ and to guide
the selection of the next evaluation of $f$ through the maximization of an
acquisition function, which is inexpensive to
optimize over the design space relatively to $f$ \cite{frazier2018tutorial}. 

While BO is effective at identifying promising regions, it is less suited for
local refinement. Indeed, acquisition functions often induce clustering of evaluation
points \cite{qin2017improving}, which can lead to near-singular kernel matrices
and numerical instability in GP inference \cite{wendland2004scattered,
wynne2021convergence}. In practice, the ill-conditioning is mitigated by adding a regularization term
(nugget) to the diagonal of the kernel matrix.
This regularization improves numerical stability, but is equivalent to
introducing additional observation noise, thereby modifying the GP posterior
\cite{rasmussen2005} and potentially affecting the sequential selection of
candidates.
Additionally, it limits the accuracy of the recovered minimum due to the induced regularization floor \cite{wynne2021convergence}.
Beyond numerical issues, global stationary GPs are inherently limited for local
refinement, as they must trade off global fit against local fidelity \cite{rasmussen2005}.
In particular, resolving ill-conditioned minima typically requires
either dense sampling or localized modeling \cite{eriksson2019scalable, visser2025labcat}.
As a result, the GP surrogate is generally not accurate enough for
local optimization with tight tolerances.

On the other hand, local gradient-based methods \cite{nocedal2006numerical}, such as trust region, gradient
descent, or quasi-Newton methods, efficiently
exploit local information and often exhibit fast convergence. However, they are
sensitive to initialization and lack guarantees of global optimality in
non-convex settings. While multiple restarts can partially address this issue,
they usually do not leverage information across runs, leading to poor sample efficiency
when evaluations are expensive.

Hybrid strategies combining global exploration and local refinement have been widely studied \cite{brunzema2025bayesqp, diouane2023trego, eriksson2019scalable, jones1993lipschitzian, mcleod2018optimization, regis2007improved}.
Most approaches alternate between global and local phases based on predefined
criteria, such as sufficient decrease or acquisition thresholds. For instance,
\citet{mcleod2018optimization} use BO to minimize the overall regret, followed
by a local refinement phase triggered by an acquisition-based rule.
However, such sequential design introduces an allocation problem: how to
balance exploration and refinement without prior knowledge of the objective.
Misspecified switching rules can lead to premature exploitation or conversely
insufficient refinement, hence degrading performance.

We address this limitation by proposing a framework that uses an adaptive
decision rule to balance global and local optimization steps, allowing
exploration and refinement to interact continuously without predefined
switching heuristics.

\subsection{Contributions, scope and limitations}\label{sec:contributions}
We propose \emph{LAGO}, a unified framework that couples BO with gradient-based trust region local
refinement through an adaptive decision rule.
The method addresses the allocation problem between global exploration and local
optimization by adaptively interleaving both regimes.
The proposed framework is characterized by the following key components and properties:
\begin{enumerate}
    \item[(C$_1$)] \hypertarget{C_1}{} \textbf{Adaptive selection mechanism:} We introduce a decision
            rule that dynamically selects between global exploration
            and local refinement based on predicted
            improvements in each regime. This enables adaptive allocation of
            evaluations without predefined switching rules. 
    \item[(C$_2$)] \hypertarget{C_2}{} \textbf{Decoupled local-global structure:} LAGO enforces a strict
            separation between global search and local refinement. The BO
            component proposes candidates exclusively outside the active trust
            region, while the local step, acceptance rule, radius updates, and
            Hessian updates coincide with those of the underlying
            trust region method.
            Global evaluations can only affect the local routine by relocating
            the trust region center when they improve upon the current best
            point.
            Consequently, conditional on global recentering ceasing, LAGO
            reduces to a standard trust region method and inherits its
            local convergence guarantees (\Cref{prop:local_convergence}).
    \item[(C$_3$)] \hypertarget{C_3}{} \textbf{Local search ill-conditioning mitigation:} 
        During dense local refinement, the GP surrogate selectively incorporates points far enough
        from the trust region center to avoid near-duplicate samples that would
        otherwise induce ill-conditioning. This preserves
        numerical stability while retaining sufficient local function knowledge to inform
        global exploration. 
    \item[(C$_4$)] \hypertarget{C_4}{} \textbf{Adaptive robustness to problem structure:}
        LAGO naturally adjusts its behavior to the optimization landscape. In regimes
        where global exploration is favorable, such as highly multimodal problems,
        it effectively suppresses local refinement and reverts to BO-like behavior with
        negligible overhead. Conversely, in weakly multimodal or convex regimes, LAGO actively
        engages the local component, exploiting gradient and curvature information to
        rapidly refine solutions and recover fast local convergence.
\end{enumerate}
These properties are evaluated through Research Questions
(RQs) discussed in \Cref{sec:experiments}, while a comparison to related works on hybrid methods is 
discussed in \Cref{sec:related_works}.

\textbf{Scope and limitations.} 
Standard BO methods degrade in high dimensions due to approximation challenges
unless additional functional and model assumptions (e.g., lengthscale tuning,
low intrinsic dimensionality or additive structure) are imposed \cite{binois2022survey,
pmlr-v267-papenmeier25a}.
Indeed, accurately training the surrogate becomes increasingly
difficult due to the curse of dimensionality \cite{bellman1966dynamic}, leading to poor approximations of
the objective and hence unreliable acquisition functions.
In this work, we restrict to low-dimensional problems ($d \le 10$) where BO remains tractable \cite{binois2022survey} and we can clearly illustrate the behavior of the proposed method.
Applications to high-dimensional problems with structural assumptions are currently under investigation and will be reported in a future work.

We restrict attention to deterministic (noise-free) objectives, as commonly
encountered in simulation-based optimization, where repeated evaluations are
identical and the dominant error stems from misspecification of the underlying
simulation model (e.g., a PDE) rather than stochastic noise. In this regime,
LAGO operates naturally and its accurate local refinement is meaningful.
However, the achievable accuracy remains limited by simulation model misspecification, so high-precision local
solves should be interpreted relative to this model rather than the true physical system.
Extending LAGO to handle stochastic noise is left for
future work, e.g., using noise-tolerant trust regions \cite{sun2023trust}.

\section{Background}\label{sec:background}

This section describes the two main components used in our algorithm: BO \cite{jones1998efficient} and the Symmetric Rank-One (SR1) trust region method \cite{nocedal2006numerical}.

\textbf{Bayesian Optimization.} 
In BO, we usually model the objective function using a Gaussian process \cite{rasmussen2005}. 
This model is attractive due to the availability of closed-form formulas for conditioning.
Indeed, conditioning a GP $\mathcal{GP}(m,k)$, $m : \mathcal{X} \to \mathbb{R}$
and $k : \mathcal{X} \times \mathcal{X} \to \mathbb{R}$ being the prior mean
and covariance respectively, on (possibly) noisy observations
$\{(\mathbf{x}_i,y_i)\}_{i=1}^n$, with $y_i=f(\mathbf{x}_i)+\xi_i$ and $\xi_i
\sim \mathcal{N}(0,\sigma^2)$, yields a posterior GP with mean and covariance
\begin{equation*}
    \begin{split}
    &\bar{m}(\mathbf{x}) := m(\mathbf{x}) + k_{\mathbf{x},X}(K + \sigma^2 I_n)^{-1}(\mathbf{y} - m_{X}),\\ 
    &\bar{k}(\mathbf{x},\mathbf{x}') := k(\mathbf{x},\mathbf{x}') -  k_{\mathbf{x},X}(K + \sigma^2 I_n)^{-1}k_{X,\mathbf{x}'},
    \end{split}
\end{equation*}
where $m_{X}=[m(\mathbf{x}_1),\dots,m(\mathbf{x}_n)]^\top$, $K=(k(\mathbf{x}_i,\mathbf{x}_j))_{ij}$, $k_{\mathbf{x},X} = [k(\mathbf{x},\mathbf{x}_1),...,k(\mathbf{x},\mathbf{x}_n)]$ and $\mathbf{y} = [y_1,...,y_n]^\top$.
In noise-free settings, such as our case, $\sigma^2$ is often replaced by a regularization term $\lambda$ for numerical stability of the kernel matrix inversion.

At each iteration of BO, an acquisition function is maximized, and guides the sequential sampling decision.
This work focuses on the Expected Improvement (EI) \cite{jones1998efficient}, and we refer to \citet{frazier2018tutorial} for an overview of alternative choices. 
The EI is a classical acquisition function that quantifies the expected decrease in value of the objective function if we query it at $\mathbf{x} \in \mathcal{X}$. 
Mathematically, it reads
\begin{equation*}
    \EI(\mathbf{x}) := \mathbb{E} \left[ (f_{\text{best}} - f(\mathbf{x}))_+\right],
\end{equation*}
with $f_{\text{best}}$ being the lowest value of $f$ found so far, and $(\cdot)_+ := \max(\cdot,0)$. The closed form expression using the conditional Gaussian process distribution of $f \sim \mathcal{GP}(\bar{m},\bar{k})$ is 
\begin{equation}\label{eq:ei}
    \EI(\mathbf{x}) = z(\mathbf{x})\Phi\left(\frac{z(\mathbf{x})}{\bar{\sigma}(\mathbf{x})}\right) + \bar{\sigma}(\mathbf{x}) \phi \left(\frac{z(\mathbf{x})}{\bar{\sigma}(\mathbf{x})}\right),
\end{equation}
with $z(\mathbf{x}) = f_{\text{best}}-\bar{m}(\mathbf{x})$,
$\bar{\sigma}(\mathbf{x}) = \bar{k}(\mathbf{x},\mathbf{x})^{1/2}$, and $\Phi$
and $\phi$ the standard normal CDF and PDF, respectively. EI therefore solely
depends on the GP posterior mean and standard deviation, which are inexpensive
to evaluate.
As a result, the acquisition function can be optimized at negligible cost compared to the objective function if the latter is expensive to evaluate.

Lastly, the covariance function $k$ typically depends on hyperparameters (e.g.,
lengthscale $\ell$ and signal variance) that control the correlation and amplitude of
the model. These hyperparameters are estimated from data by maximizing the
marginal likelihood of the GP \cite{rasmussen2005}, and updated throughout BO
steps as new observations are collected.

\textbf{Trust region algorithm.} The local optimization algorithm used in LAGO consists of a quadratic surrogate model optimization inside a trust region \cite{nocedal2006numerical}. 
The algorithm places a region around the current best point, and trusts the local quadratic surrogate to approximate the objective function well within that region.
The local surrogate is optimized within this region, and the proposed candidate is then accepted or rejected. 
At iteration $k$, given the trust region center $\mathbf{x}_{k-1}$, the
method computes a trial step $\mathbf{s}_k$ and trial point
$\mathbf{x}^+_k=\mathbf{x}_{k-1}+\mathbf{s}_k$ solving the \emph{trust region subproblem}
\begin{equation}\label{eq:tr_subproblem}
    \mathbf{s}_k \in \argmin_{\|\mathbf{s}\|\le \Delta_{k-1}} m^q(\mathbf{s}),
\end{equation}
where $\Delta_{k-1}$ is the trust region radius, and $m^q : \mathbb{R}^d \to \mathbb{R}$ is the quadratic model
$$m^q(\mathbf{s}) := f(\mathbf{x}_{k-1}) + \nabla f(\mathbf{x}_{k-1})^\top \mathbf{s} + \frac{1}{2} \mathbf{s}^\top H_{k-1}\mathbf{s},$$
with $H_{k-1}$ the current approximated Hessian of $f$.
Theorem~$4.1$ in \citet{nocedal2006numerical} ensures that the trust region subproblem admits a global minimizer.
We solve \eqref{eq:tr_subproblem} using the standard iterative procedures for trust region subproblems described in \citet[Section~4.3]{nocedal2006numerical}.
To update $H_{k-1}$ throughout the iterations, we leverage the SR1 update method \citep{broyden1967quasi}
\begin{equation}\label{eq:update_SR1}
    H_{k} = H_{k-1} + \frac{(\mathbf{y}_k - H_{k-1} \mathbf{s}_k)(\mathbf{y}_k - H_{k-1} \mathbf{s}_k)^\top}{(\mathbf{y}_k-H_{k-1}\mathbf{s}_k)^\top \mathbf{s}_k},
\end{equation}
where $\mathbf{y}_k = \nabla f(\mathbf{x}^+_{k})-\nabla f(\mathbf{x}_{k-1})$.
This update is done only if
\begin{equation}\label{eq:condition_update_H}
|(\mathbf{y}_k- H_{k-1} \mathbf{s}_k)^\top\mathbf{s}_k| \ge r \|\mathbf{s}_k\|\|\mathbf{y}_k - H_{k-1} \mathbf{s}_k\|
\end{equation}
with $r \ll 1$, to avoid numerical instability in \eqref{eq:update_SR1}. The quality of the trial step is measured by the improvement ratio
\begin{equation}\label{eq:improvement_ratio}
\rho_k=\frac{f(\mathbf{x}_{k-1})-f(\mathbf{x}^+_k)}{f(\mathbf{x}_{k-1})-m^q(\mathbf{s}_k)},
\end{equation}
representing realized versus predicted function decrease.
The trial point $\mathbf{x}^+_k$ is accepted if $\rho_k$ exceeds a prescribed
threshold, in which case the trust region center is updated to $\mathbf{x}_k =
\mathbf{x}^+_k$; otherwise, the step is rejected and $\mathbf{x}_k =
\mathbf{x}_{k-1}$. The radius is adjusted accordingly, typically
increased for successful steps and decreased otherwise.
The SR1 trust region algorithm is given in \Cref{alg:TR_algorithm} in \Cref{appendix:SR1}.

Under the usual assumptions of trust region theory, the trust region algorithm used here guarantees convergence to first-order
stationary points, i.e.,
\begin{equation*}
\lim_{k\to\infty} \nabla f(\mathbf{x}_k) = 0,
\end{equation*}
where $\{\mathbf{x}_k\}_{k \ge 0}$ is the sequence of trust region centers throughout the iterations \citep[Theorem~4.6]{nocedal2006numerical}.
Furthermore, under additional
regularity assumptions in a neighborhood of a local minimizer
$\mathbf{x}^\star$ with \(\nabla^2 f(\mathbf{x}^\star)\succ 0\), the iterates of \Cref{alg:TR_algorithm} converge
superlinearly \citep[Theorem~6.7]{nocedal2006numerical}. These results form the basis for the local convergence guarantees of LAGO, presented
in \Cref{prop:local_convergence}.

\section{LAGO framework description}\label{sec:lago}
LAGO relies on the competition between the two optimization routines presented
in \Cref{sec:background}: a \emph{global} BO strategy
and a \emph{local} trust region method.
At each iteration, we follow a selection criterion which adaptively decides
whether the local or global step is accepted. 

\textbf{Global optimization.} The global component is implemented using
standard BO. At each iteration, the acquisition function is
optimized outside the active trust region, based on a
global GP surrogate of the objective. This promotes exploration of the
search space while avoiding interference with the local refinement phase.
In this work, we use the EI acquisition function,
as it is also employed in the selection criterion between local and global
candidates. Other acquisition functions could be used without modifying the
overall framework.

Since local steps compute gradient information, one could consider replacing
the standard BO in LAGO by gradient-enhanced BO (gradBO) \cite{wu2017bayesian} where
the available gradients are incorporated in the GP.
While beneficial in some settings (see \hyperlink{RQ_3}{\textbf{RQ3}}), these
models incur a computational cost scaling cubically with the dimension $d$ due to conditioning the GP on $d$ partial derivatives at each point, which can be prohibitive. A brief
overview of this approach is provided in \Cref{appendix:gradGPs}.
Importantly, LAGO is agnostic to the choice of GP model as its
contribution lies in the interleaving of BO and local
refinement, rather than in the GP modeling itself.

\textbf{Local optimization.} We use the SR1-TR method
(\Cref{alg:TR_algorithm}) for the local component.
The trust region is centered at the current best point $\mathbf{x}_c$, and
the function and gradient values at this point are used to construct the
quadratic model $m^q$.
The Hessian of the quadratic surrogate is initialized using the Hessian of the
GP posterior mean at $\mathbf{x}_c$ and updated throughout the local steps using
\eqref{eq:update_SR1}. 

\textbf{Selection criterion.} For expensive-to-evaluate functions, we wish to evaluate the function (and potentially its gradient) only once per iteration. 
We therefore compare at each step the global and local candidates without evaluating the function at either location. 
Let $\mathbf{x}_G$ and $\mathbf{x}_L := \mathbf{x}_c +\mathbf{s}_t$ denote the global and local candidates, respectively.
For the global candidate, we compute its expected improvement $\EI(\mathbf{x}_G)$ using \eqref{eq:ei}.
For the local candidate, we define the local (deterministic) improvement
$I_t := f_{\text{best}} - m^q(\mathbf{s}_t)$,
where $f_{\text{best}}$ denotes the objective value at the current trust region center (which always coincides with the current best minimizer).
We select the global candidate if its expected improvement exceeds the predicted local improvement, that is 
\begin{equation}\label{eq:global_condition}
    \EI(\mathbf{x}_G) > \gamma I_t,
\end{equation}
with $\gamma > 0$ a parameter tuning the local-global behavior of the algorithm (see sensitivity analysis in \Cref{appendix:sensitivity_gamma}), which we set to $1$ by default.
Both quantities estimate the decrease obtained by spending the next evaluation
globally or locally, and are therefore comparable at the level of predicted
objective improvement.
Intuitively, selecting the global candidate indicates a higher
expected improvement outside the current trust region, whereas selecting the
local candidate favors exploiting the current trust region.
We note that the selection is based on \emph{predicted} improvement and does not guarantee the largest \emph{realized} improvement.
\Cref{fig:iteration_lago} shows an iteration of LAGO.

\begin{wrapfigure}{r}{0.4\textwidth}
\centering
\vspace{-15pt}
\includegraphics[width=0.4\textwidth]{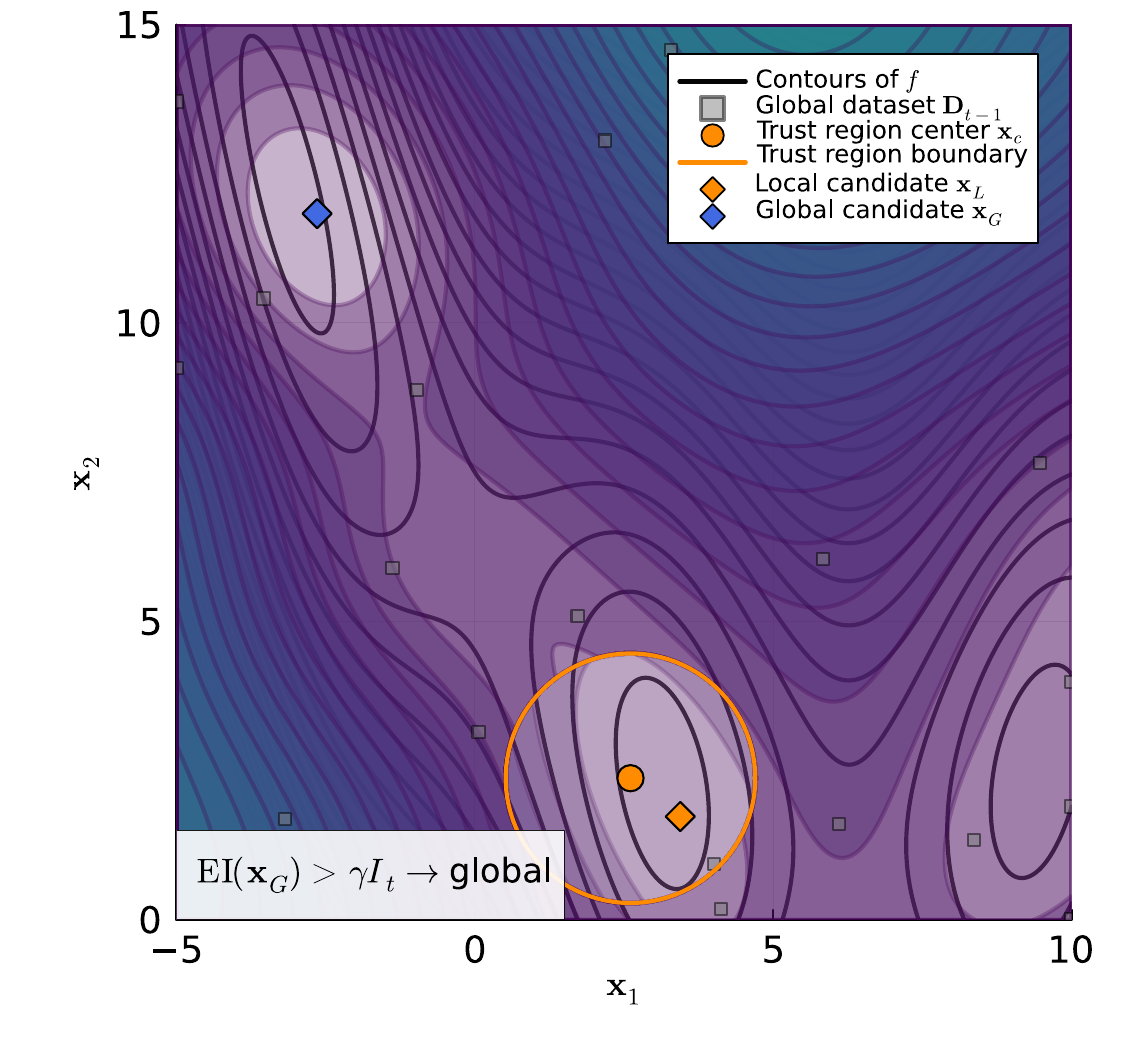}
\caption{\textbf{One LAGO iteration.} The TR is centered at the
current best minimizer. A local candidate is proposed inside the region,
while a global candidate is proposed outside. The shaded contours are the GP posterior mean outside of the TR, and the quadratic surrogate inside it.}
\vspace{-15pt}
\label{fig:iteration_lago}
\end{wrapfigure}

\textbf{Information flow.} A key feature of LAGO is its one-sided communication from the local to the global component.
In particular, updating the global GP does not modify the local optimization routine, except if the trust region is relocated to a better minimizer after a BO step. 
This preserves the convergence properties of the trust region method, yielding
\Cref{prop:local_convergence} (proof in \Cref{appendix:local_convergence}).

\begin{proposition}[Conditional reduction of LAGO to SR1-TR]\label[proposition]{prop:local_convergence}
Let $f \in C^2(\mathcal{X})$, and let $\{(\mathbf{x}_c^k, \Delta_k, H_k)\}_{k \ge 0}$ denote the sequence of
trust region centers, radii, and Hessian approximations generated by LAGO.
Assume that there exists $K \in \mathbb{N}$ such that, for all $k \ge K$, no
BO recentering occurs. Then, for all $k \ge K$,
\[
(\mathbf{x}_c^{k+1}, \Delta_{k+1}, H_{k+1})
=
\operatorname{SR1-TR}(\mathbf{x}_c^k, \Delta_k, H_k).
\]
Under the assumptions in \citep[Theorem~ 6.7]{nocedal2006numerical}, $\{\mathbf{x}_c^k\}_{k \ge K}$ converges superlinearly to a stationary point 
$\mathbf{x}^\star$.
\end{proposition}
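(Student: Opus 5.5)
The argument has two parts. First we check, by induction over iterations, that LAGO's state update reduces to one SR1-TR step whenever no BO recentering occurs. Then we apply \citep[Theorem~6.7]{nocedal2006numerical} to the resulting tail sequence.

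\textbf{Step 1: per-iteration reduction.} Fix $k \ge K$ and the LAGO state $(\mathbf{x}_c^k, \Delta_k, H_k)$. We split into two cases according to the selection rule \eqref{eq:global_condition}.

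\emph{Local case.} The local candidate is selected. Then $\mathbf{x}_L = \mathbf{x}_c^k + \mathbf{s}_{k+1}$, where $\mathbf{s}_{k+1}$ solves \eqref{eq:tr_subproblem} with the quadratic model built from $f(\mathbf{x}_c^k)$, $\nabla f(\mathbf{x}_c^k)$ and $H_k$. The ratio \eqref{eq:improvement_ratio}, the acceptance test, the radius update, and the SR1 update \eqref{eq:update_SR1} guarded by \eqref{eq:condition_update_H} are exactly those of \Cref{alg:TR_algorithm}. By construction (C$_2$), LAGO uses these operations verbatim. Hence $(\mathbf{x}_c^{k+1},\Delta_{k+1},H_{k+1}) = \operatorname{SR1-TR}(\mathbf{x}_c^k,\Delta_k,H_k)$.

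\emph{Global case.} The global candidate $\mathbf{x}_G$ is selected. Since $k\ge K$, by hypothesis $f(\mathbf{x}_G)$ does not improve on $f_{\text{best}}$, so the center is unchanged. By the one-sided information flow, the only effect of a global evaluation on the local routine is recentering. The GP update therefore touches neither $\Delta_k$ nor $H_k$.

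This second case is the delicate point. A global step leaves the TR state frozen, so it is not literally an SR1-TR iteration. I would handle it by indexing the sequence $\{(\mathbf{x}_c^k,\Delta_k,H_k)\}$ by \emph{local} iterations, i.e., updates of the trust region state, which I expect is the paper's convention. Under this indexing, global steps with no recentering are idle iterations that do not advance $k$. I would state this convention explicitly. I would also note that the Hessian initialization from the GP posterior mean happens only at (re)centering, so it does not occur for $k\ge K$. With this convention the identity holds for all $k\ge K$ by induction.

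\textbf{Step 2: convergence.} There are two possibilities. If only finitely many local iterations occur after $K$, the claim is vacuous, or it must be excluded as a degenerate case, and I would remark on this. Otherwise the tail $\{\mathbf{x}_c^k\}_{k\ge K}$ is exactly the SR1-TR sequence started at $(\mathbf{x}_c^K,\Delta_K,H_K)$. This is a valid initialization, since \Cref{alg:TR_algorithm} admits any center, positive radius and symmetric $H$.

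Because $f\in C^2$ and the assumptions of \citep[Theorem~6.7]{nocedal2006numerical} are taken as hypotheses, the theorem applies directly to the tail. These hypotheses include $\mathbf{x}_c^k\to\mathbf{x}^\star$ with $\nabla^2 f(\mathbf{x}^\star)\succ0$, bounded Hessian approximations, and uniform linear independence of the steps. It gives superlinear convergence to the stationary point $\mathbf{x}^\star$. Global convergence to stationarity, \citep[Theorem~4.6]{nocedal2006numerical}, can be invoked in the same way if needed.

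The main obstacle is the bookkeeping in Step 1, namely making precise that global steps without recentering leave the TR state unchanged and do not count as TR iterations. Once that is settled, the rest is a direct citation.
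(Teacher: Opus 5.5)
Your proposal is correct and takes the same route as the paper's proof: once BO recentering stops, the TR state is updated only by \Cref{alg:TR_algorithm}, and \citep[Theorem~6.7]{nocedal2006numerical} is then applied to the tail. It is in fact more careful than the paper, which leaves implicit both the indexing by local iterations and the possibility of only finitely many local steps after $K$.

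The one case your Step~1 misses is the TR-termination reset $\Delta \leftarrow \min(\Delta, \ell/2)$ in \Cref{alg:LAGO_algorithm}, which changes $\Delta$ without any recentering, so a global step does not always leave the TR state frozen. This is harmless: with $\mathbf{x}_c$ and $H$ fixed and $\Delta$ only decreasing, the TR stays terminated and no further local step occurs, which is your finite case. It does, however, show that for $\varepsilon_{\text{step}}>0$ this finite case is the generic one whenever the SR1-TR iterates converge, so the superlinear-convergence conclusion really concerns the idealized method with the termination test disabled.
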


Conversely, to inform the global search about the local landscape without
densely conditioning on nearby local iterates, the GP is conditioned on a filtered dataset that retains
the trust region center and points satisfying a lengthscale-based criterion, that is
\begin{equation}\label{eq:lengthscale_criterion}
    \mathbf{D}_t
    =
    \{(\mathbf{x}_c, f(\mathbf{x}_c))\}
    \cup
    \{(\mathbf{x}, f(\mathbf{x})) \in \mathbf{D}_{t-1} :
    \|\mathbf{x} - \mathbf{x}_c\|_2 > \nu \ell\},
\end{equation}
where $\mathbf{D}_{t-1}$ is the GP dataset at iteration $t-1$, $\ell$ is
the GP lengthscale, and $\nu$ is a hyperparameter.
In the terminology of scattered data approximation,
\eqref{eq:lengthscale_criterion} enforces a lower bound on the separation distance 
from the trust region center: it removes samples within a
lengthscale-dependent ball around $\mathbf{x}_c$, thereby preventing dense local
conditioning and mitigating kernel matrix ill-conditioning~\cite{wendland2004scattered}.

\textbf{Trust region termination.} 
We use a local termination criterion to detect when the trust-region refinement
has locally converged, indicating that no further significant improvement is
expected within the current trust region.
We check here a step-size condition, declaring the trust region as terminated if $\|\mathbf{s}_t\| \le \varepsilon_{\text{step}}$, (set to $10^{-7}$ across all our experiments).
The trust region radius $\Delta$ is then reset to $\min\left(\Delta, \ell/2\right)$\footnote{This choice reduces the risk of the global GP
to repeatedly propose candidates near the trust region boundary,
which may occur when its radius is too large relative to the GP lengthscale.}, and the BO routine is allowed to pursue further exploration, if necessary.

\textbf{Early stopping criterion.} 
The algorithm terminates early, i.e., before exhausting the allocated evaluation budget, when the following two conditions are \emph{simultaneously} satisfied:
\begin{itemize}
    \item[(S$_1$)] \hypertarget{termination_1}{} a sequence of $N$ global candidates yields EI values below a threshold $\varepsilon_{\text{T}}$.
    \item[(S$_2$)] \hypertarget{termination_2}{} the predicted local improvement $I_t$ is below $\varepsilon_{\text{T}}$.
\end{itemize}
Condition \hyperlink{termination_1}{(S$_1$)} reflects the absence of meaningful
global exploration, while \hyperlink{termination_2}{(S$_2$)} indicates no further predicted
local improvement. 
Only when both conditions are satisfied do we conclude that further improvement
by either component is unlikely. The stopping tolerance $\varepsilon_T$ (set to $10^{-12}$ in all experiments)
provides a direct control over the trade-off between solution accuracy and
evaluation cost, with smaller values leading to stricter convergence and higher
evaluation counts (see \Cref{appendix:sensitivity_eps}).

\begin{algorithm}
\caption{LAGO: Local-Global Framework Combining BO and SR1-TR.}
\label{alg:LAGO_algorithm}
\begin{algorithmic}[1]
    \STATE \textbf{Input:} objective $f$, gradient $\nabla f$, design space $\mathcal{X}$, parameters $\nu, \gamma, \varepsilon_T$, a prior $\mathcal{GP}(m,k)$, 
initial dataset $\mathbf{D}_0 := \{(\mathbf{x}_i,f(\mathbf{x}_i))\}_{i=1}^{N_0}$ of size $N_0$, initial trust region radius $\Delta_0$.
\STATE Obtain the posterior of the GP conditioning on $\mathbf{D}_0$, and fit the GP hyperparameters using MLE. 
\STATE Compute $\mathbf{x}^{\text{I}} \leftarrow \argmin_{\mathbf{x} \in \mathcal{X}} \bar{m}(\mathbf{x})$, a first informed candidate based on the GP.
\STATE Set $\mathbf{D}_1 \leftarrow \mathbf{D}_0  \cup (\mathbf{x}^{\operatorname{I}},f(\mathbf{x}^{\operatorname{I}}))$.
\STATE Initialize the trust region (TR) with center $\mathbf{x}_c\leftarrow
\arg\min_{\mathbf{x} \in \mathbf{D}_1} f(\mathbf{x})$, gradient $\nabla
f(\mathbf{x}_c)$, radius $\Delta \leftarrow \Delta_0$ and approximate Hessian
$H \leftarrow \nabla^2 \bar{m}(\mathbf{x}_c)$. 
\FOR{\(t=2,3,\dots\) until budget exhaustion or early stopping criterion is met}
    \STATE Refit GP hyperparameters (scale and lengthscale $\ell$) periodically (e.g., every $10$ iterations).
    \STATE \textbf{Local proposal:} Get local candidate $\mathbf{x}_L \leftarrow \mathbf{x}_c + \mathbf{s}_t$ where $\mathbf{s}_t$ solves \eqref{eq:tr_subproblem}.
    \STATE Check whether the TR is terminated, and if so, set $\Delta \leftarrow \min(\Delta, \frac{1}{2}\,\ell)$. 
    \STATE \textbf{Global proposal:} Obtain the global candidate 
            $$\mathbf{x}_G \leftarrow \argmax_{\mathbf{x}\in \mathcal{X}\setminus B(\mathbf{x}_c,\Delta)} \EI(\mathbf{x}).$$
    \IF{TR is terminated}
        \STATE Evaluate $f(\mathbf{x}_G)$ and set $\mathbf{D}_t \leftarrow \mathbf{D}_{t-1} \cup \{(\mathbf{x}_G,f(\mathbf{x}_G))\}$.
        \STATE Reinitialize the TR if new minimizer found (after evaluating $\nabla f(\mathbf{x}_G)$).
        \STATE Condition the GP on $\mathbf{D}_t$.
        \STATE \textbf{Continue to next iteration.}
    \ENDIF

    \STATE Compute the improvement $I_t \leftarrow f(\mathbf{x}_c) - m^q(\mathbf{s}_t)$.
    \IF{$\EI(\mathbf{x}_G) > \gamma I_t$}
        \STATE Evaluate $f(\mathbf{x}_G)$ and set $\mathbf{D}_t \leftarrow \mathbf{D}_{t-1} \cup \{(\mathbf{x}_G,f(\mathbf{x}_G))\}$.
        \STATE Reinitialize the TR if new minimizer found (after evaluating $\nabla f(\mathbf{x}_G)$).

    \ELSE
        \STATE Evaluate $(f(\mathbf{x}_L), \nabla f(\mathbf{x}_L))$.
        \STATE Perform one SR1-TR step (Algorithm~\ref{alg:TR_algorithm}) to accept/reject $\mathbf{x}_L$ and update $(\mathbf{x}_c,\Delta,H)$.
        \IF{new trust region center $\mathbf{x}_c$ found}
        \STATE Set $\mathbf{D}_t$ according to the filtering rule~\eqref{eq:lengthscale_criterion}.
        \ELSE
        \STATE $\mathbf{D}_t \leftarrow \mathbf{D}_{t-1}$.
        \ENDIF
    \ENDIF
    \STATE Condition the GP on $\mathbf{D}_t$. 
\ENDFOR
\STATE \textbf{Return:} the best minimizer found $\mathbf{x}_c$.
\end{algorithmic}
\end{algorithm}

\textbf{Design rationale.} 
Compared to classical local optimizers, the global surrogate reduces the risk of
stagnation in suboptimal regions by continuously proposing candidates in
unexplored or uncertain areas using all accumulated global information.
This global knowledge also informs the trust region placement and warm-starts its
initialization: the quadratic model inherits curvature information from the GP,
avoiding repeated identity-Hessian initializations.

The candidate-level competition is the central design choice: local refinement can
begin as soon as its predicted decrease is competitive, while global exploration
can still interrupt local refinement when the GP suggests a more promising
evaluation; the parameter $\gamma$ tunes this local-global trade-off.
\Cref{alg:LAGO_algorithm} outlines the pseudo-code for LAGO.

\section{Experimental results}\label{sec:experiments}
We start by comparing LAGO against related local-global methods discussed in \Cref{sec:related_works} and other representative baselines.
Then, we evaluate LAGO through a set of RQs designed to isolate its key components (\hyperlink{C_1}{C$_1$}-\hyperlink{C_4}{C$_4$}) and assess the properties introduced in \Cref{sec:contributions,sec:lago}.
Unless specified otherwise, LAGO's hyperparameters are fixed across benchmarks.
Gradient evaluations are charged as $d$ function evaluations on synthetic
benchmarks, corresponding to a conservative finite-difference-like cost model,
whereas the adjoint gradient in the PDE problem has unit cost.
See \Cref{appendix:benchmarks_pde_opt_problem,appendix:experimental_setting} for the problem definitions and the experimental setting, respectively.
Convergence plots show the median with Inter-Quartile Range (IQR), i.e. $25$-$75^{\text{th}}$ percentiles over $50$ random seeds.

\textbf{\textcolor{mydarkblue}{How does LAGO compare against other optimization methods?}}
We start by comparing LAGO\emph{-BO} (LAGO with BO) and LAGO\emph{-grad} (LAGO with gradBO), against a representative set of optimization algorithms,
including BO (EI) \cite{jones1998efficient}, gradBO (EI with derivative information)
\cite{wu2017bayesian}, BLOSSOM \cite{mcleod2018optimization}, trust region BO
variants namely TuRBO with $M$ regions (TurBO-$M$) \cite{eriksson2019scalable}, TREGO
\cite{diouane2023trego} and LABCAT \cite{visser2025labcat} with restarts, mesh-based optimization like BADS
\cite{acerbi2017practical} and local methods such as L-BFGS
\cite{liu1989limited} with restarts. These methods are evaluated on a set of representative synthetic benchmarks and a PDE-constrained optimization problem (see
\Cref{appendix:synthetic_problems,appendix:pde_constrained_problem} for details).

Across benchmarks, \Cref{fig:comparison} 
shows that LAGO is competitive with, and often outperforms, the strongest baselines in terms of convergence speed, final accuracy, and reliability (narrow IQR).
On some problems such as Styblinski-Tang (ST) ($2$-$5$D), LAGO outperforms all
methods with lower variance, and retains faster convergence than (grad)BO. On moderate multimodal functions such
as Lévy, ST in $5$D and the PDE-problem, LAGO-BO demonstrates improved robustness, steadily refining solutions
where several hybrid approaches stagnate or exhibit high variance. In ill-conditioned settings such as the Rosenbrock which
features a narrow valley with steep gradients, LAGO-grad provides
significant gains over both BO and most hybrid baselines (see also \hyperlink{RQ_3}{\textbf{RQ3}}).

\begin{figure*}[ht]
\centering
\includegraphics[width=0.32\textwidth]{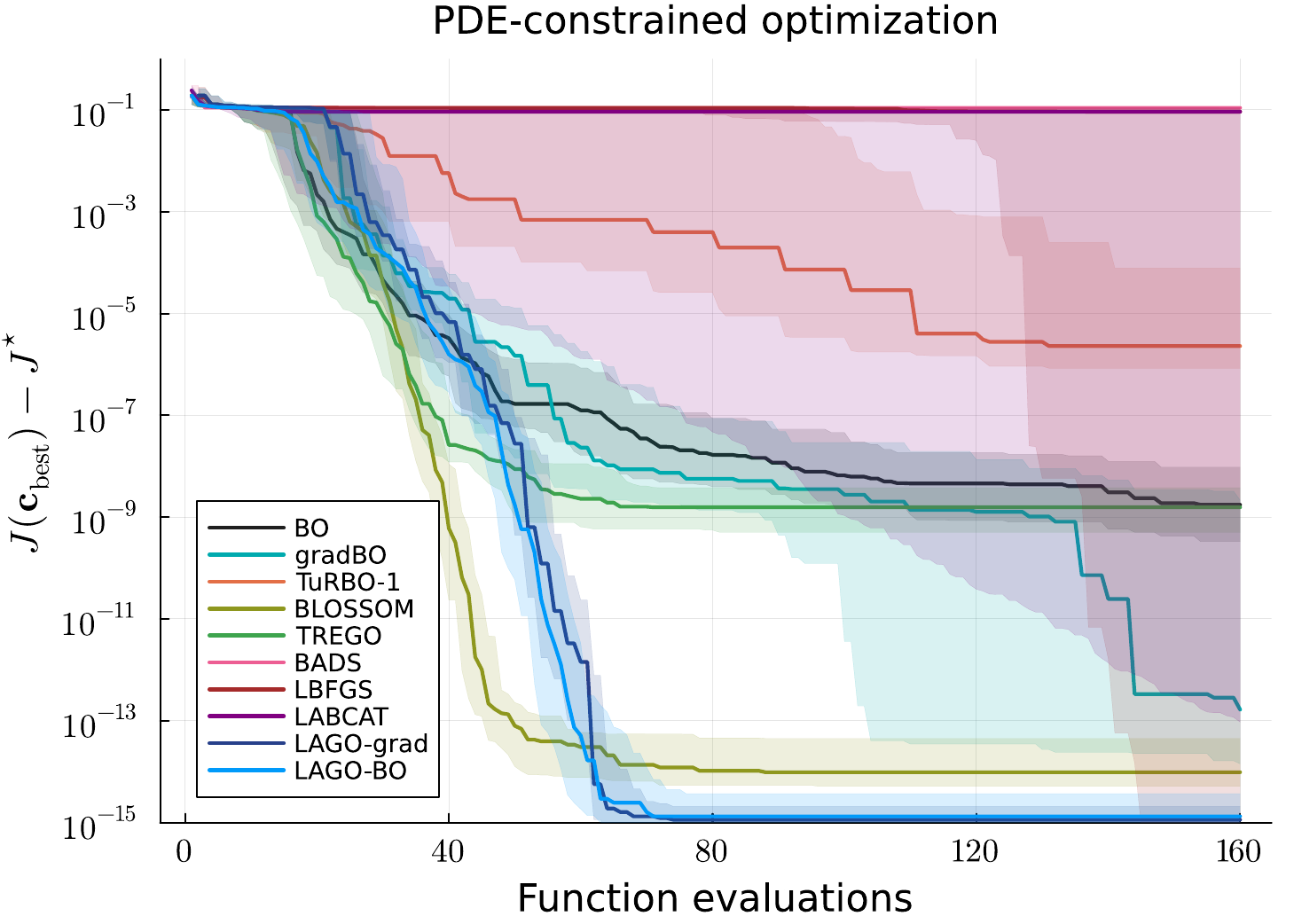}
\includegraphics[width=0.32\textwidth]{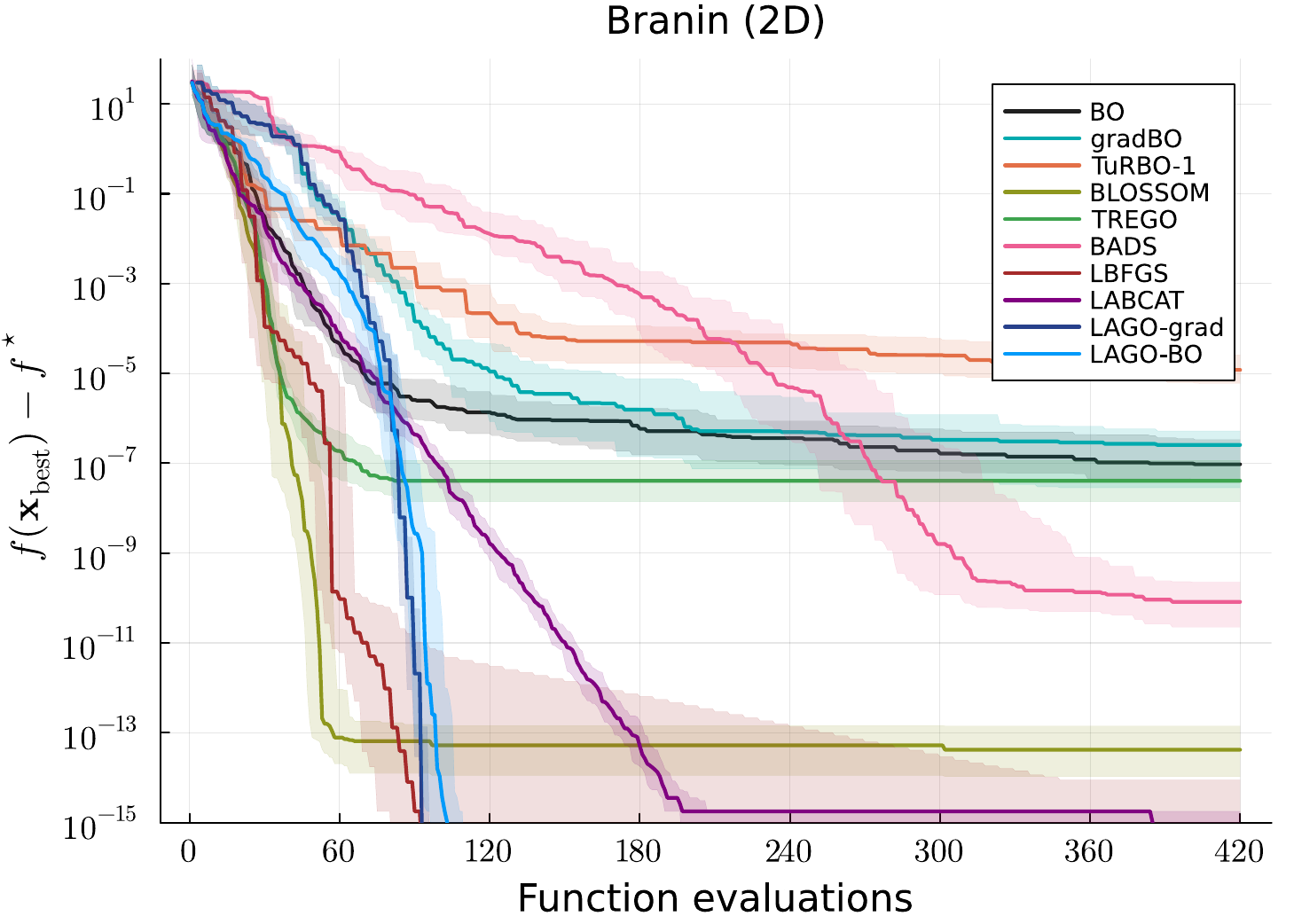}
\includegraphics[width=0.32\textwidth]{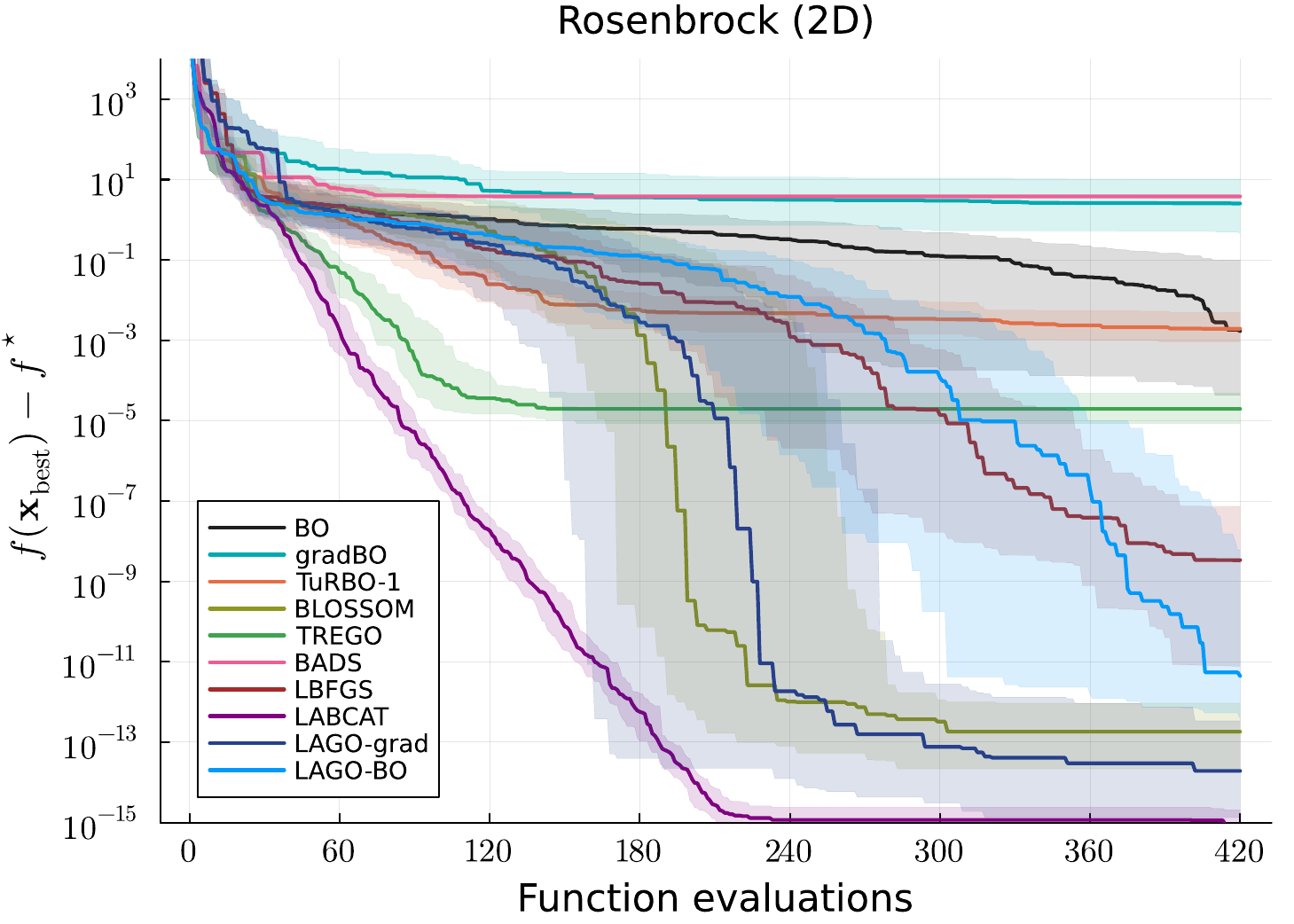}
\includegraphics[width=0.32\textwidth]{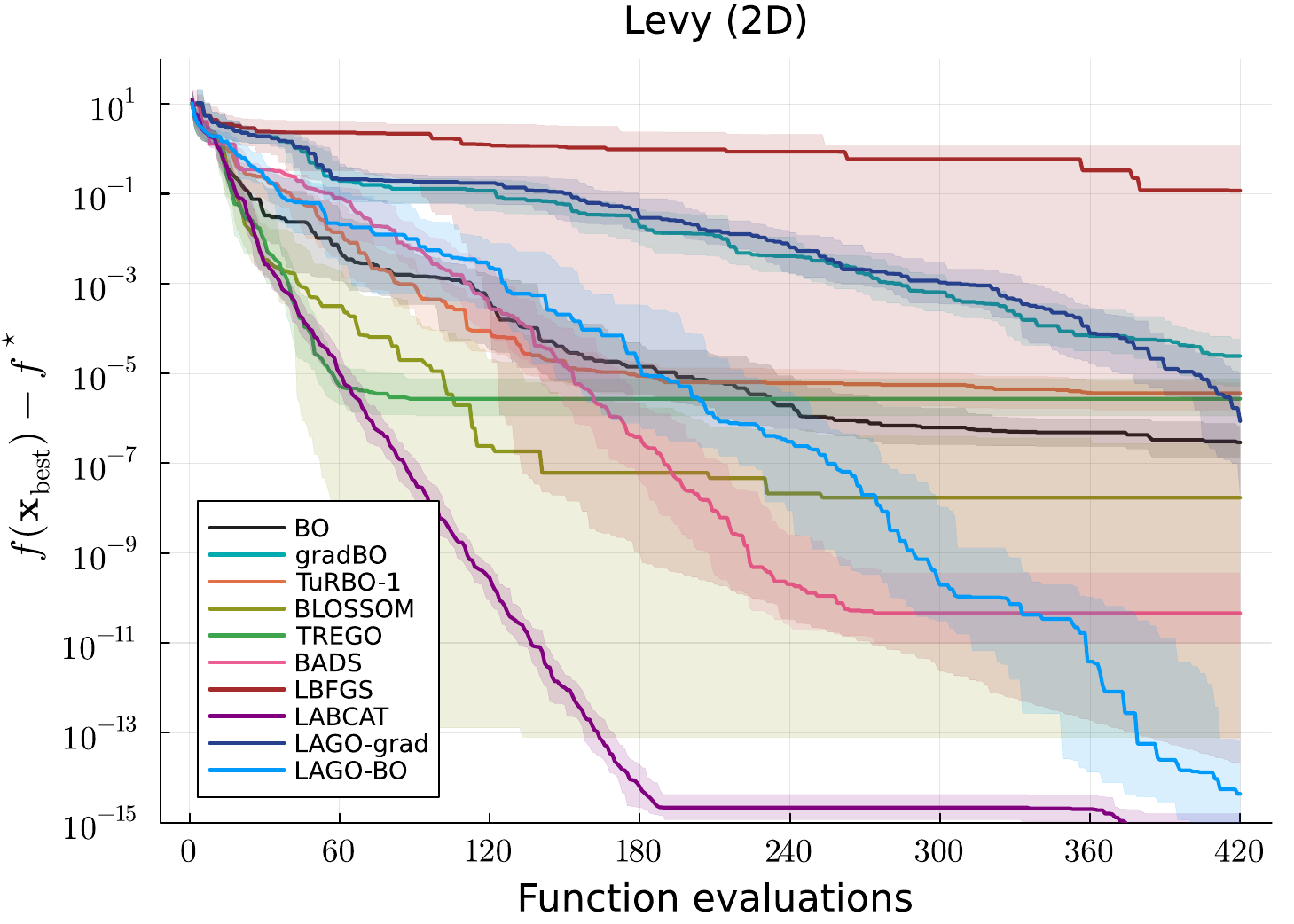}
\includegraphics[width=0.32\textwidth]{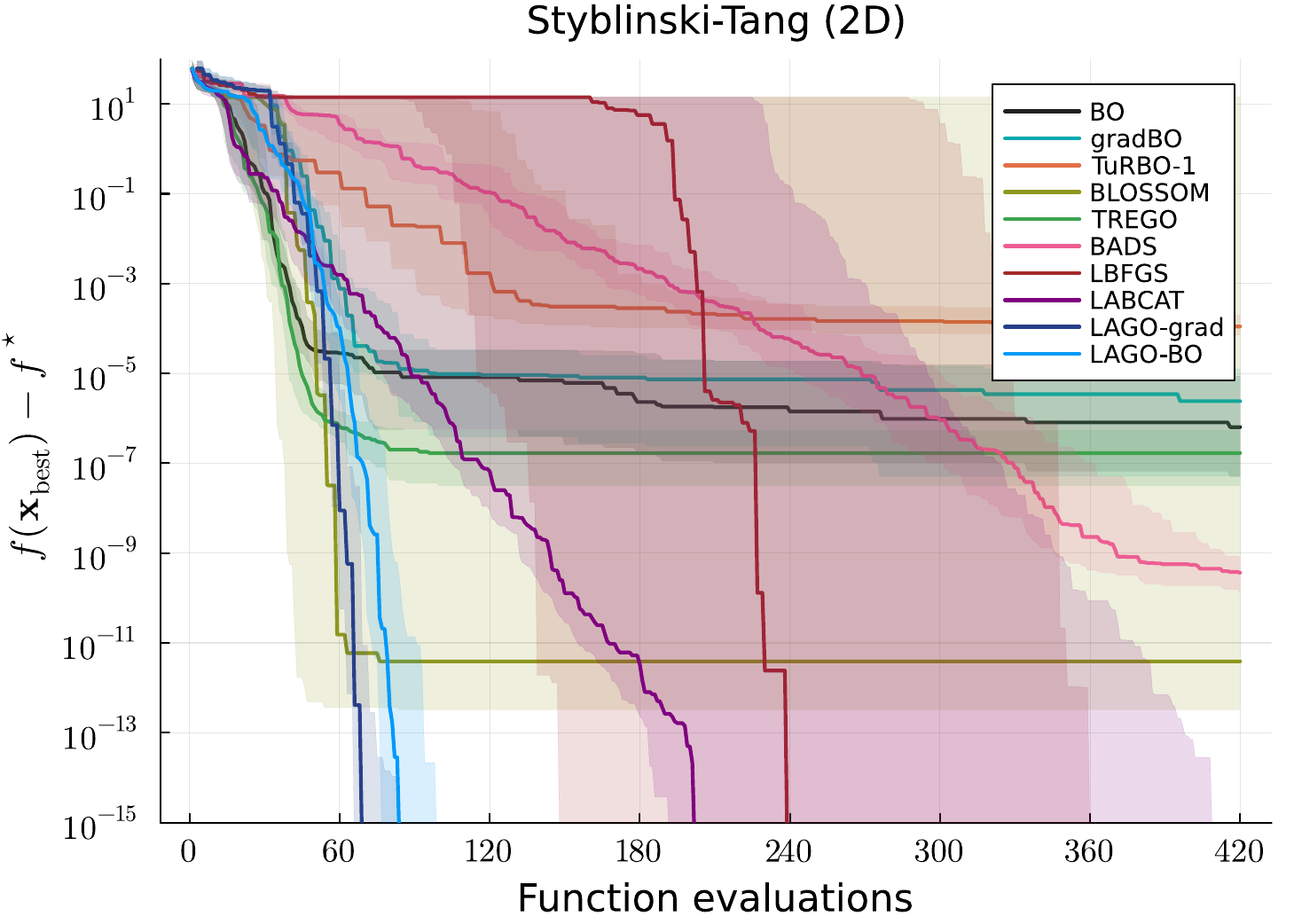}
\includegraphics[width=0.32\textwidth]{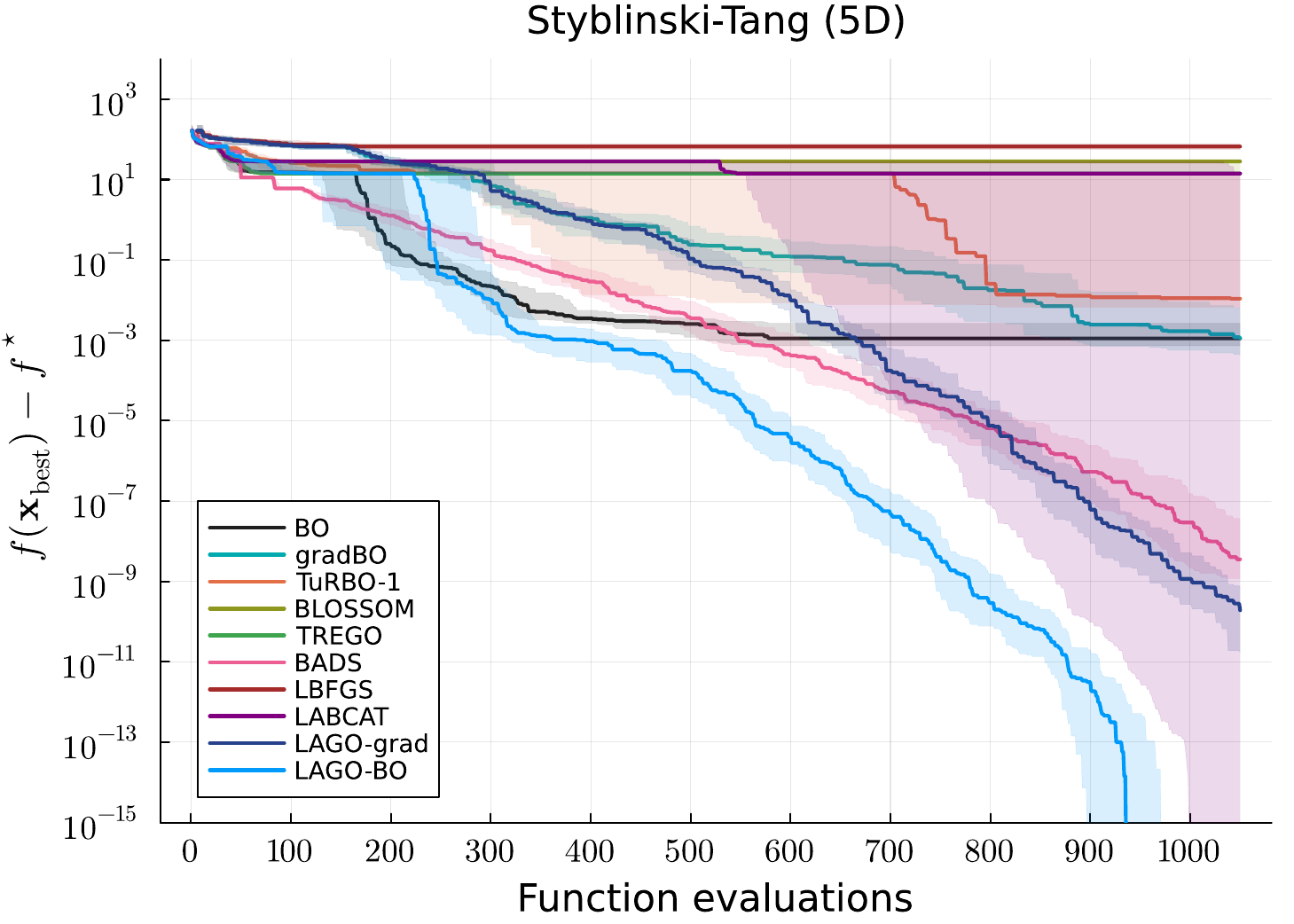}
\caption{\textbf{Convergence comparison.} Convergence of optimization routines on a PDE-constrained optimization problem and synthetic benchmarks (Median $\pm$ IQR). 
    LAGO performs consistently well across benchmarks, both convex and multimodal ones.
    It also performs strongly in higher dimensional settings, where other hybrid methods start struggling.
    Gradient evaluations cost $d$ function evaluations, except for the PDE example where gradients have unit cost (leveraging adjoints).}
\label{fig:comparison}
\end{figure*}

\hypertarget{RQ_1}{}\textbf{\textcolor{mydarkblue}{RQ1 (\hyperlink{C_1}{C$_1$}) : Does interleaving improve robustness relative to phase-based switching?}}
We evaluate whether, without requiring a predefined switching rule, LAGO provides
a competitive and robust alternative to sequential BO followed by local
refinement (SR1-TR from the best point found), as well as to other
phase-switching methods such as BLOSSOM \cite{mcleod2018optimization} and
TREGO \cite{diouane2023trego}.
For BO+local baselines, switching is triggered either when
the last five EI values are below $10^{-3}$ (BO-LOCAL-ACQ), or via a
$50/50$ allocation of the function evaluation budget (BO-LOCAL-SPLIT).
\begin{figure*}
\centering
\includegraphics[width=0.32\textwidth]{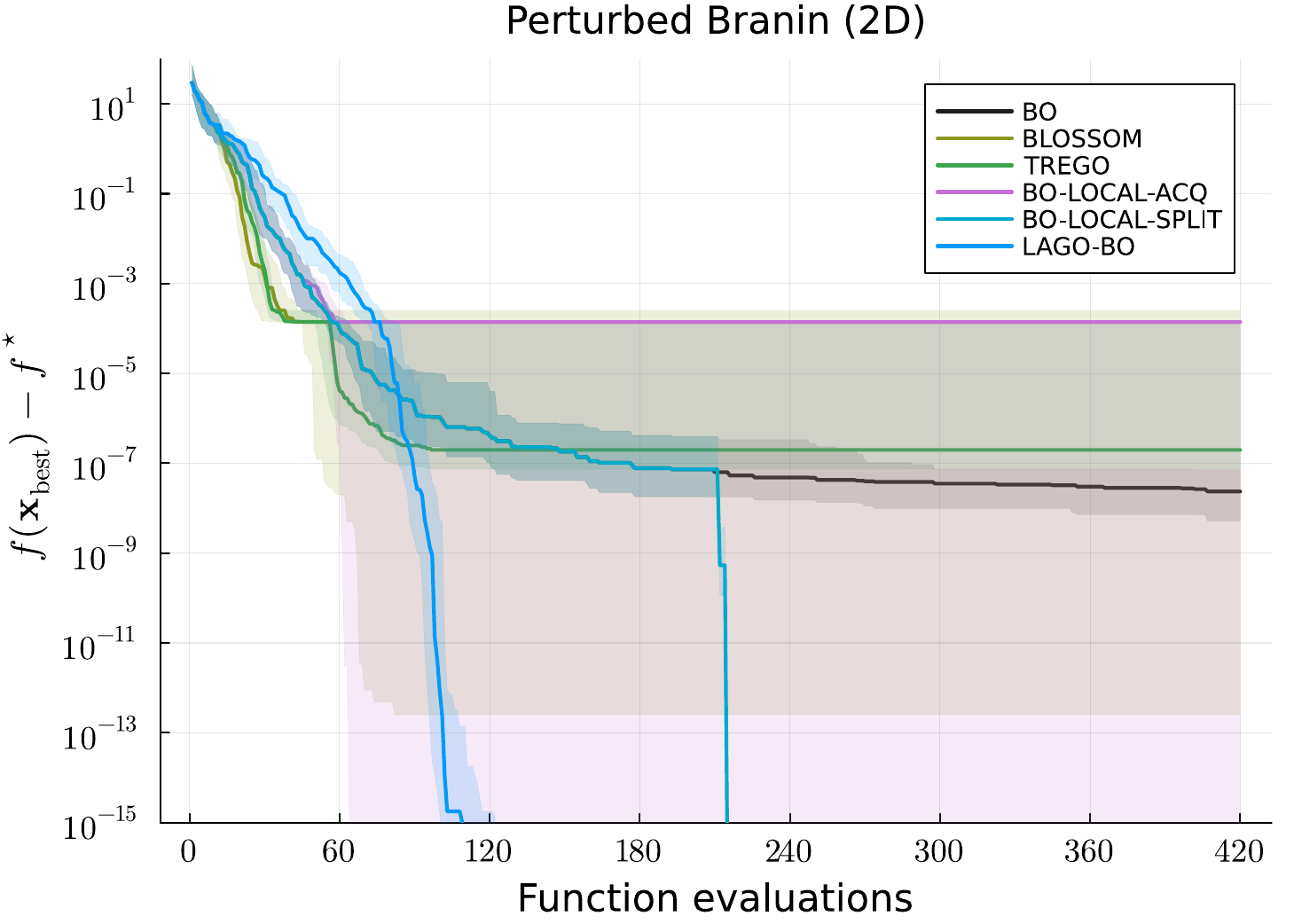}
\includegraphics[width=0.32\textwidth]{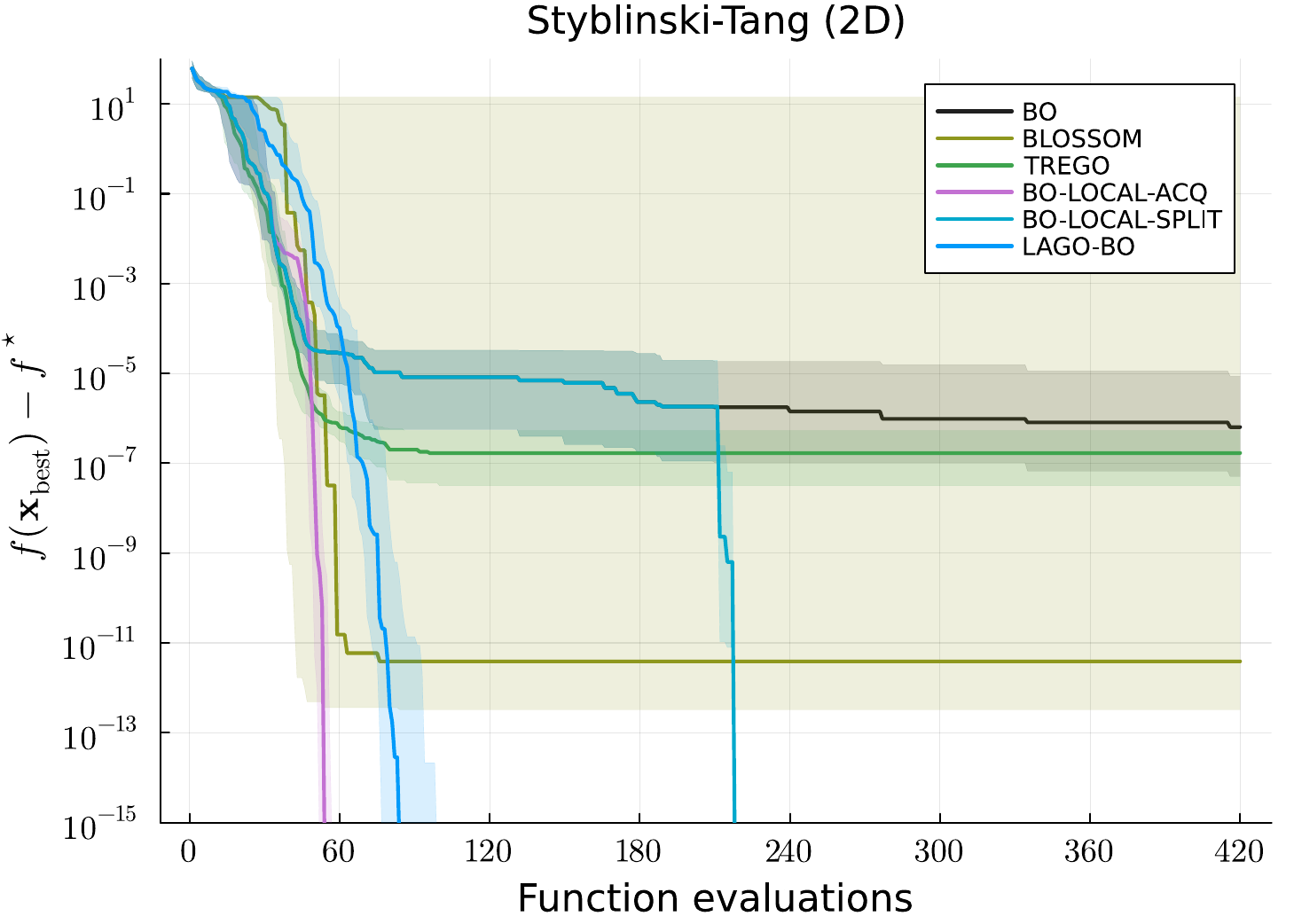}
\caption{\textbf{RQ1.} Performance comparison of LAGO and phase-switching
methods on synthetic benchmarks (Median $\pm$ IQR error). LAGO exhibits
more robust performance, particularly on benchmarks where phase-based
switching fails to identify the correct basin, leading to high variance.}
\label{fig:RQ1}
\end{figure*}

To showcase the interleaving significance, we tested it on two functions: a perturbed Branin function which
features a favorable perturbation around one of the three Branin minimizers and
the ST function. 
In these two settings, LAGO outperforms or is on par with
BO+local search variants, TREGO and BLOSSOM. 
Sequential approaches based on a single global-to-local switch, such as
BO-LOCAL-$\ast$ and BLOSSOM, rely on selecting a problem-dependent switching
point, unknown a priori, and may commit to the wrong basin.
This is particularly evident on the
perturbed Branin function (\Cref{fig:RQ1}, left), where BO-LOCAL-ACQ frequently
fails to identify the correct basin, resulting in a low success rate (38\% of runs satisfy
$|f^T_{\text{best}} - f^\star| < 10^{-12}$, compared to 100\% for
LAGO). A similar effect is observed on the ST function for BLOSSOM (\Cref{fig:RQ1}, right). In both settings, the $50/50$ split delays convergence, while
TREGO identifies the true basin but suffers from BO local refinement limitations.

In contrast, LAGO allocates each evaluation by comparing the current global and
local candidates, rather than committing to predefined global or local phases.
It achieves competitive performance across problems (see also comparison above) while consistently identifying high-quality solutions, without
requiring a problem-dependent switching rule. Rather than aiming to outperform
problem-tuned sequential strategies in all cases, LAGO offers an adaptive
approach that eliminates the need for such tuning, and leverages the two
components whenever these become competitive.
Further analysis of the interleaving patterns, including the proportion of
local steps, is provided in \Cref{appendix:interleaving}.

\hypertarget{RQ_2}{}\textbf{\textcolor{mydarkblue}{RQ2 (\hyperlink{C_3}{C$_3$}): What is the impact of the data assimilation filter?}}
We here quantify the role of selective local data assimilation \eqref{eq:lengthscale_criterion} by assessing (i) its impact on convergence speed and (ii) its effect on GP conditioning on the Lévy function.
Setting the parameter $\nu$ too high delays convergence to the true minimizer (\Cref{fig:RQ2}, left) due to filtering out too many points from the GP dataset near the trust region center.
However, if $\nu = 0.0$, meaning that \eqref{eq:lengthscale_criterion} never triggers, the kernel matrix condition number sharply increases due to uncontrolled local conditioning (\Cref{fig:RQ2}, right).
These results highlight a trade-off: the filter is necessary to prevent numerical instability of the global surrogate, but overly aggressive filtering slows down convergence.
Setting $\nu = 0.1$ as default performs well across all benchmarks, mitigating ill-conditioning and performing equally well as lower values of $\nu$.
\begin{figure*}
\centering
\includegraphics[width=0.32\textwidth]{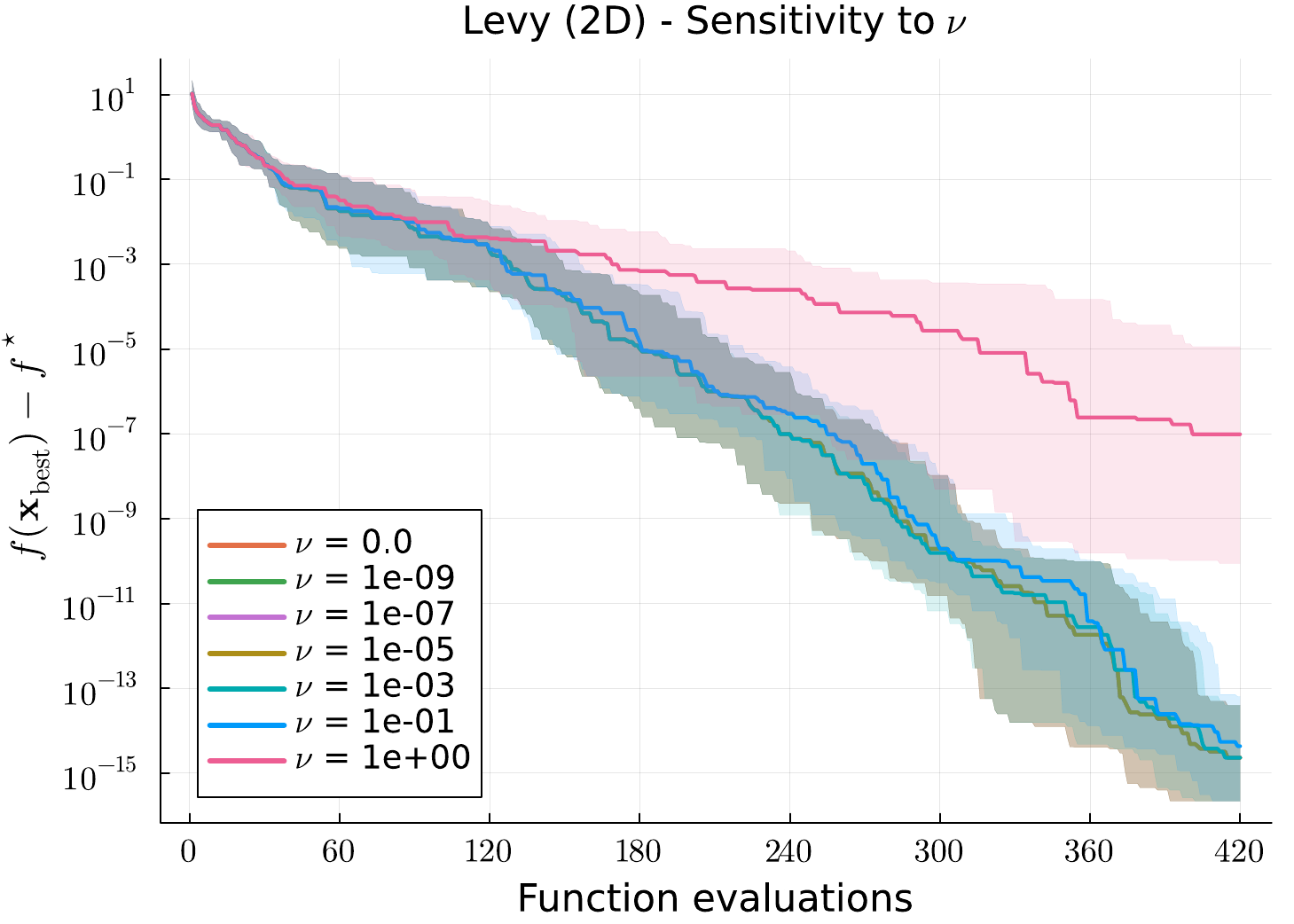}
\includegraphics[width=0.32\textwidth]{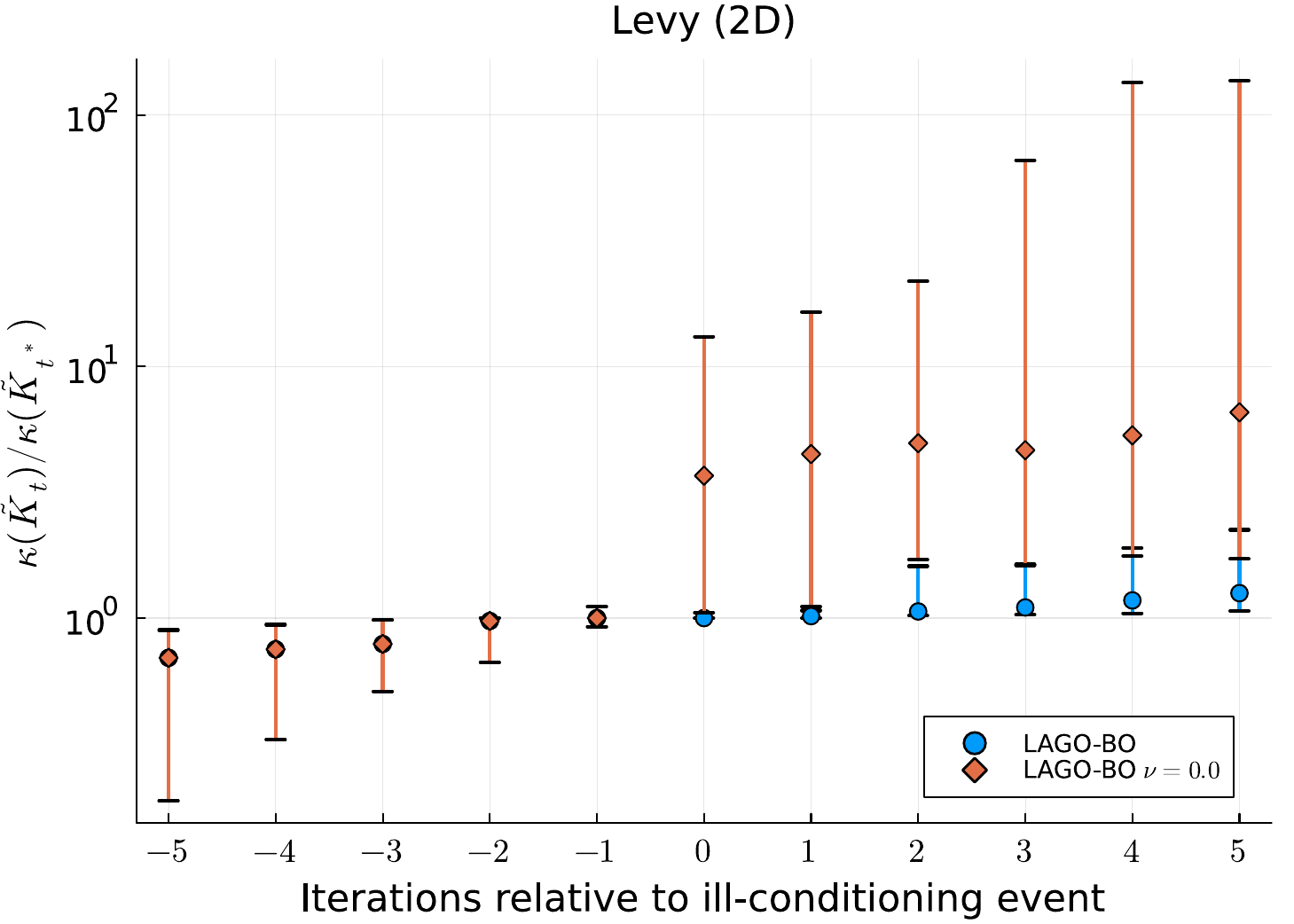}
\caption{\textbf{RQ2.} Convergence of LAGO (Median $\pm$ IQR) for different values of $\nu$ on Lévy (left), and increase in the kernel matrix condition number ratio (Median $\pm$ IQR)
when $\nu = 0$, shown over a window of $\pm 5$ iterations around the iteration where the first trigger of \eqref{eq:lengthscale_criterion} happens (right).
Setting $\nu$ too high delays convergence, while setting it to $0$ increases the ill-conditioning of the GP.}
\label{fig:RQ2}
\end{figure*}

\hypertarget{RQ_3}{}\textbf{\textcolor{mydarkblue}{RQ3: To what extent should gradient information be used globally?}} 
We study whether incorporating gradients into the global surrogate via gradGPs in gradBO
provides benefits over using them solely for local refinement. \Cref{fig:RQ3} shows that compared to LAGO-BO, LAGO-grad can either (i) perform similarly (Branin), (ii) provide additional
gains in ill-conditioned settings (Rosenbrock), or (iii) converge slower under a fixed evaluation budget (Lévy). 
The latter occurs because gradBO requires one gradient evaluation per iteration, reducing the allowed number of iterations when gradients cost $d$ function evaluations.
Overall, LAGO-grad is most useful over LAGO-BO in ill-conditioned settings where improved global modeling outweighs the gradient cost.
\begin{figure*}
\centering
\includegraphics[width=0.32\textwidth]{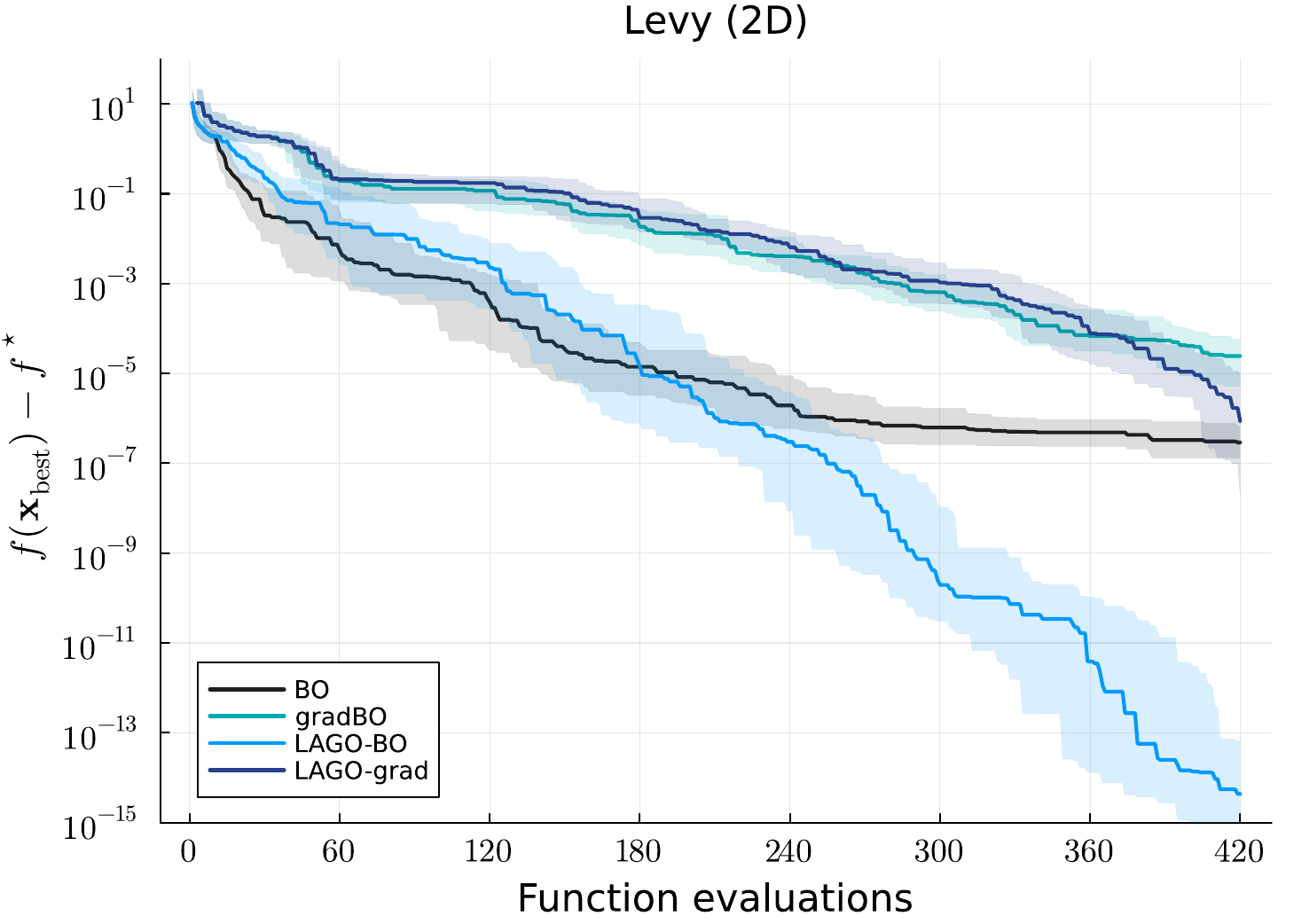}
\includegraphics[width=0.32\textwidth]{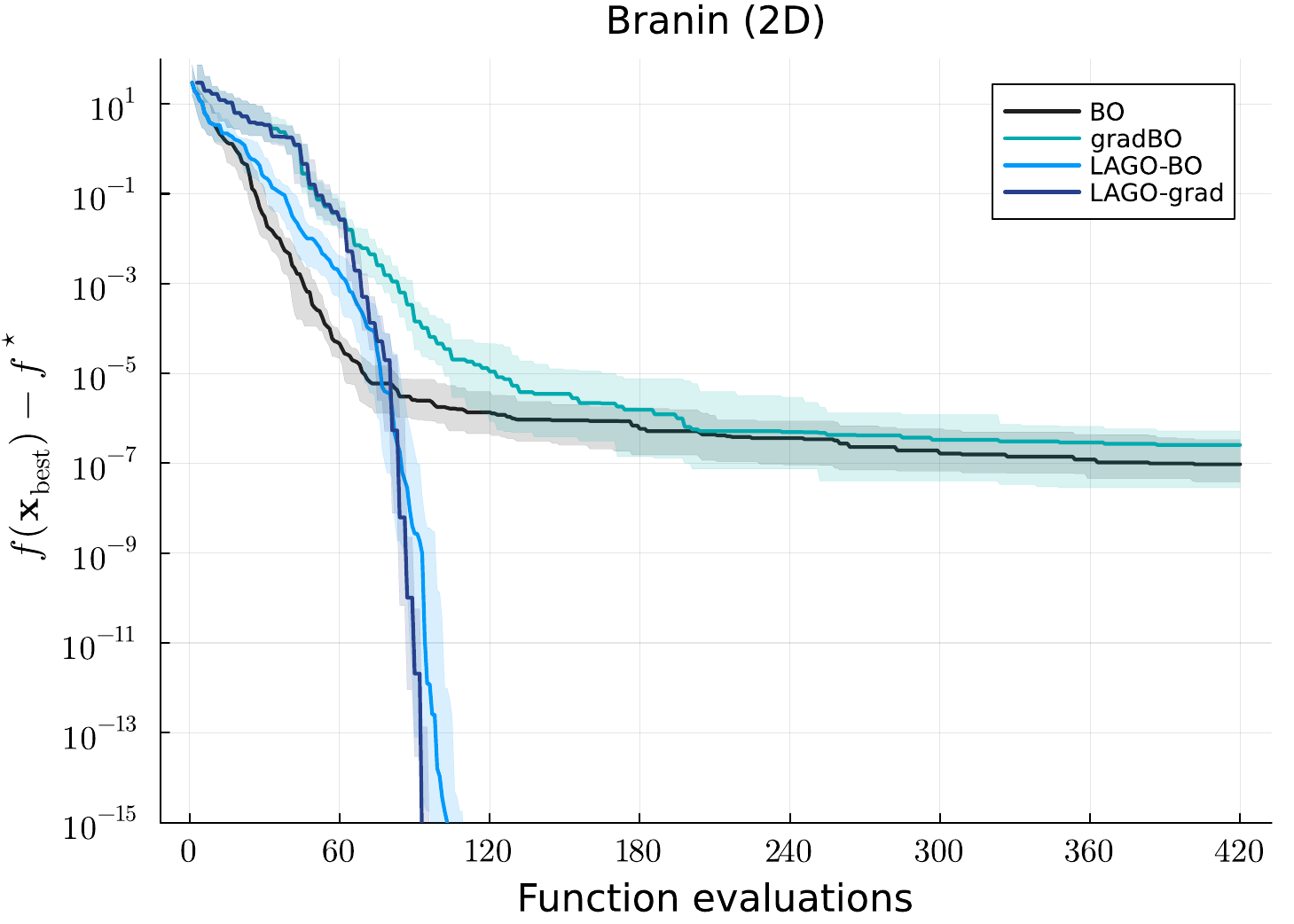}
\includegraphics[width=0.32\textwidth]{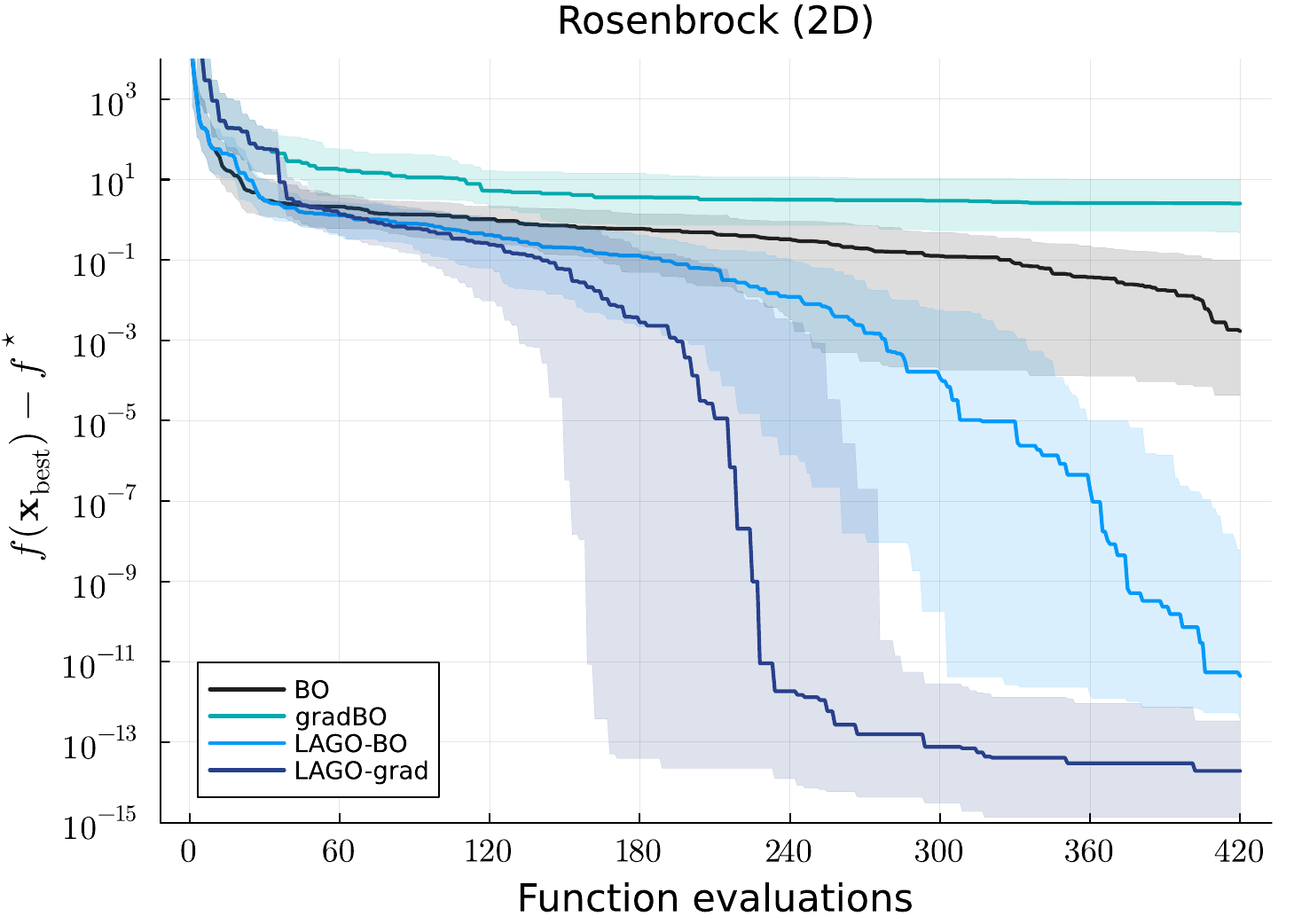}
\caption{\textbf{RQ3.} Comparison of LAGO and LAGO-grad, together with
their BO and gradBO counterparts (median $\pm$ IQR, shaded).
LAGO-grad improves over LAGO primarily on ill-conditioned problems such as
Rosenbrock but can underperform in multimodal settings such as Lévy.}
\label{fig:RQ3}
\end{figure*}

\hypertarget{RQ_4}{}\textbf{\textcolor{mydarkblue}{RQ4 (\hyperlink{C_4}{C$_4$}) : Does LAGO adapt to the function structure?}}
A key property of LAGO is its ability to remain competitive in regimes
where global exploration dominates and local refinement is less informative,
without incurring significant overhead. We evaluate whether LAGO
naturally suppresses local steps in highly multimodal settings such as
Rastrigin and Griewank in $2$D, and ST in $10$D.
As \Cref{fig:RQ4} shows in these regimes, LAGO predominantly selects global steps and closely matches
its BO component, resulting in comparable performance without
degradation nor strong overhead. This confirms that LAGO adapts to the problem structure,
leveraging local refinement when beneficial (e.g., convex or weakly multimodal problems) while reverting to global search
when it is not. \Cref{appendix:interleaving} shows LAGO trajectories showcasing the low use of local steps for these highly multimodal regimes.
\begin{figure*}
\centering
\includegraphics[width=0.32\textwidth]{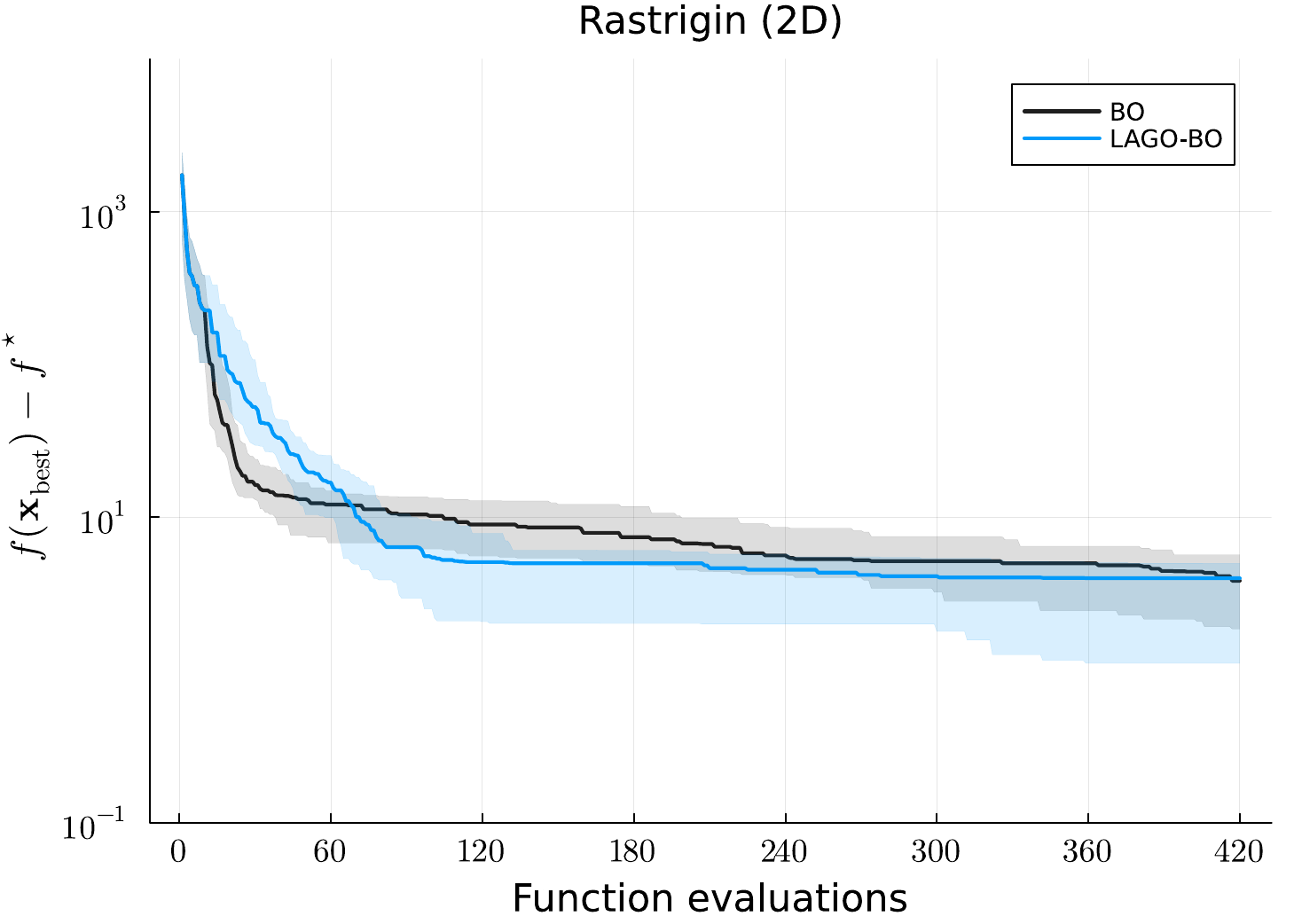}
\includegraphics[width=0.32\textwidth]{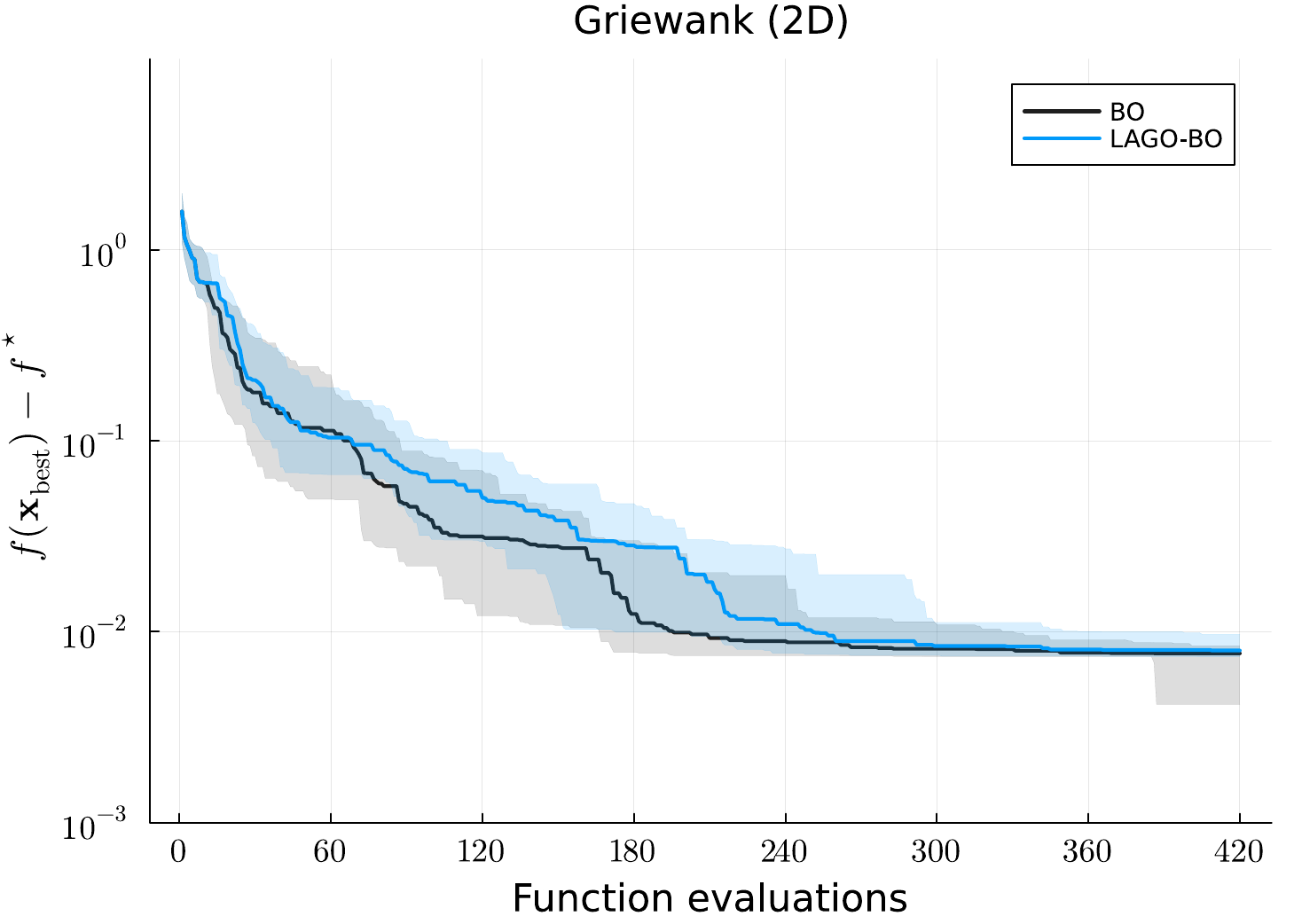}
\includegraphics[width=0.32\textwidth]{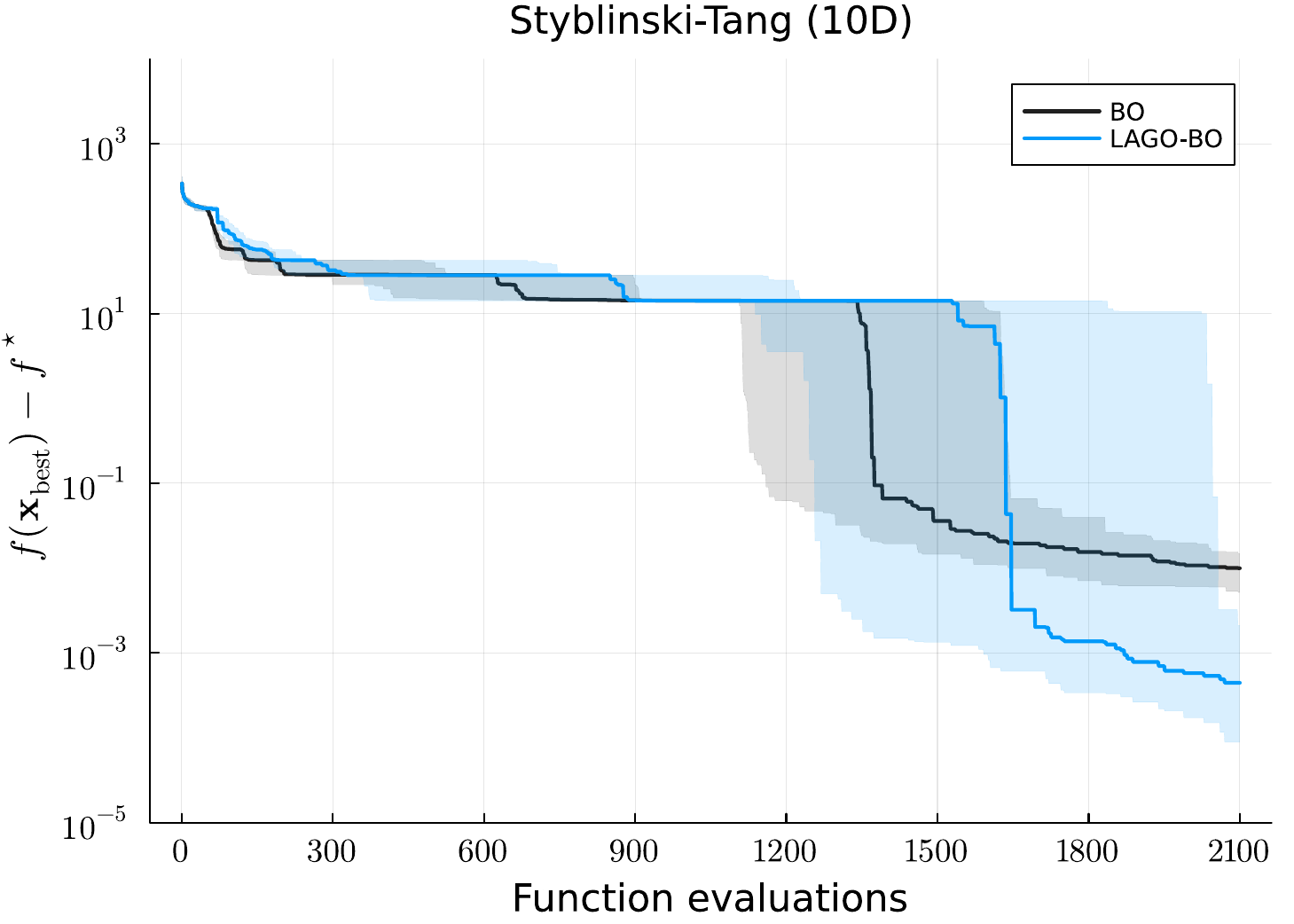}
\caption{\textbf{RQ4.} Median optimization error ($\pm$ IQR) on Rastrigin and Griewank (2D), and Styblinski-Tang (10D). 
LAGO-BO closely matches BO, indicating negligible overhead in highly multimodal
regimes, and outperforms BO in 10D on this example. This confirms that LAGO adapts to the problem structure and
reverts to BO-like behavior when exploration dominates.
}
\label{fig:RQ4}
\end{figure*}

\section{Conclusion}
We introduced \emph{LAGO}, a framework that couples Bayesian optimization with
gradient-based trust region local refinement through a predictive allocation rule, enabling
adaptive balance between global exploration and local exploitation without
predefined switching heuristics.

Our results support the key components of the method (\hyperlink{C_1}{C$_1$}-\hyperlink{C_4}{C$_4$}). The selection criterion
\eqref{eq:global_condition} enables effective interleaving of global and local
steps, providing a robust alternative to phase-based strategies without
problem-dependent tuning, while also adapting LAGO's behavior between strong local refinement in
convex or weakly multimodal regimes and BO-like behavior in highly multimodal
ones.
Furthermore, the strict local-global decoupling yields a conditional reduction to the
underlying trust-region method (\Cref{prop:local_convergence}).
Finally, the assimilation criterion \eqref{eq:lengthscale_criterion} mitigates ill-conditioning by avoiding
near-duplicate samples during local refinement. 
Across synthetic benchmarks and a PDE-constrained optimization application,
LAGO demonstrates consistent performance, being competitive with, or
outperforming, representative baselines while exhibiting lower variance.

Future work will extend LAGO beyond the low-dimensional, noise-free setting
considered here. In higher dimensions, this requires incorporating
functional structure to maintain surrogate fidelity, while stochastic objectives could be handled using noise-aware trust
regions \cite{sun2023trust}. 
\bibliography{biblio}

\appendix

\section{Extended related works}\label[appendix]{sec:related_works}

Several methods explicitly combine BO with trust regions, or other local optimization techniques.
TREGO \cite{diouane2023trego}, for instance, integrates a trust region algorithm with BO, and triggers local steps only when a global step fails to decrease the objective value sufficiently.
While effective, this failure-driven switching may delay local refinement, even
when a promising basin has already been identified, and ties local refinement to the inability of the global strategy to improve, rather than to an
explicit comparison of local and global improvement potential. 

LABCAT \cite{visser2025labcat} is another trust region BO technique that relies on rotation alignment of the box trust region based on principal components.
The method adaptively discards data points outside the active trust region, which reduces the numerical cost of GP conditioning.
However, as noted by the authors, LABCAT is best viewed as a robust local optimization routine rather than a truly global search.
Indeed, discarding points outside of the trust region prevents the global overview of the search space.

TuRBO \cite{eriksson2019scalable} addresses scalability by maintaining multiple
independent trust regions, each equipped with a local GP surrogate.
Although effective in high-dimensional settings, this approach fragments the global modeling of the objective function and relies on restarts to reintroduce global exploration.
Related extensions \cite{chen2025enhancing} incorporate Newton-type local solvers but similarly relocate trust regions only intermittently via global acquisition optimization.

BLOSSOM \cite{mcleod2018optimization} relies on global regret minimization, and combines global BO with local search. 
It enters the local phase once the regret condition is satisfied and then terminates after local refinement.

BADS \cite{acerbi2017practical} combines BO with mesh adaptive direct search \cite{audet2006mesh}, alternating between local BO searches and global exploration over a discretized mesh based on repeated local failures.

In contrast to these approaches, LAGO considers global and local candidates \emph{simultaneously} at every iteration and selects between them based on predicted improvement. 
This removes the need for explicit phase switching or failure detection and enables local exploitation as soon as it becomes competitive, including in early stages of the optimization. 
Moreover, LAGO explicitly decouples local refinement from global exploration. 
Local trust region steps are not influenced by the global GP surrogate, except
for the initialization of the trust region.
They are then allowed to proceed aggressively without causing ill-conditioning of
the global GP thanks to the selective incorporation of local information
which preserves a coherent global surrogate that continues to guide exploration.

\section{SR-1 Trust Region Algorithm} \label[appendix]{appendix:SR1}
The SR1 trust region algorithm is used as the local refinement
component in LAGO and follows \citet[Algorithm ~6.2]{nocedal2006numerical}.
The SR1-TR step at iteration $k$ is summarized in \Cref{alg:TR_algorithm}.
\begin{algorithm}
\caption{SR1 Trust Region algorithm step at iteration $k$}
  \label{alg:TR_algorithm}
  \begin{algorithmic}[1]
    \STATE {Input:} Radius $\Delta_{k-1}$, Maximum radius $\Delta_{\text{max}}$, Acceptance threshold $\eta \in (0, 0.001)$, the center $\mathbf{x}_{k-1}$.
    \STATE Compute $\mathbf{s}_k$ solving \eqref{eq:tr_subproblem}.
    \STATE Define $\mathbf{x}^+_k = \mathbf{x}_{k-1}+\mathbf{s}_k$.
    \STATE Evaluate $f(\mathbf{x}^+_k)$, and $\nabla f(\mathbf{x}^+_k)$.
    \STATE Compute the improvement ratio $\rho_k$ with \eqref{eq:improvement_ratio}.
    \IF{$\rho_k > \eta$}
        \STATE (accept) $\mathbf{x}_{k} \leftarrow \mathbf{x}^+_{k}$.    
    \ELSE
        \STATE (reject) $\mathbf{x}_{k} \leftarrow \mathbf{x}_{k-1}$.
    \ENDIF

    \IF{$\rho_k > 0.75$ and $\|\mathbf{s}_k\| > 0.8 \Delta_{k-1}$}
        \STATE $\Delta_{k} \leftarrow \min(\Delta_{\text{max}}, 2 \Delta_{k-1})$.
    \ELSE
        \IF{$\rho_k < 0.1$}
            \STATE $\Delta_{k} \leftarrow 0.5\Delta_{k-1}$.
        \ELSE
            \STATE $\Delta_k = \Delta_{k-1}$.
        \ENDIF
    \ENDIF
    \IF{\eqref{eq:condition_update_H} holds}
        \STATE Set $H_{k} \leftarrow$ SR1 update \eqref{eq:update_SR1}.
    \ELSE
        \STATE Set $H_{k} \leftarrow H_{k-1}$.
    \ENDIF
  \end{algorithmic}
\end{algorithm}

\section{Gradient-enhanced Gaussian Processes and gradBO} \label[appendix]{appendix:gradGPs}

One can extend the conditioning of the GP not only on the function evaluations but also on its gradients, resulting in gradient-enhanced GPs \cite{solak2002derivative}. 
Instead of modeling $f$ alone, we leverage the linearity of the gradient operator and place a joint GP prior over $[f,\nabla f]^\top$ \citep[Section~9.4]{rasmussen2005}
\begin{equation*}
\begin{aligned}
\begin{bmatrix}
    f \\
    \nabla f
\end{bmatrix}
&\sim \mathcal{GP}\Bigg(
\mathbf{m}_{\nabla}(\mathbf{x})
:= \begin{bmatrix}
    m(\mathbf{x}) \\
    \nabla m(\mathbf{x})
\end{bmatrix},
k_{\nabla}(\mathbf{x},\mathbf{x}')
:= \begin{bmatrix}
    k(\mathbf{x},\mathbf{x}') 
    & \nabla_{\mathbf{x}'} k(\mathbf{x},\mathbf{x}')^\top \\
    \nabla_{\mathbf{x}} k(\mathbf{x},\mathbf{x}')
    & \nabla_{\mathbf{x}} \nabla_{\mathbf{x}'}^\top k(\mathbf{x},\mathbf{x}')
\end{bmatrix}
\Bigg).
\end{aligned}
\end{equation*}

The posterior distribution of the vector $[f,\nabla f]^\top$ is again a GP when conditioned on $\{(\mathbf{x}_i,\mathbf{y}^\nabla_i)\}_{i=1}^n$ with
$\mathbf{y}^{\nabla}_i = [f(\mathbf{x}_i) , \nabla f(\mathbf{x}_i)]^\top + \boldsymbol{\xi}_i$ and $\boldsymbol{\xi}_i \sim
\mathcal{N}\!\left(\mathbf{0},\,\sigma^2 I_{d+1}\right).$
The posterior mean and covariance are, respectively,
\begin{equation*}
\begin{split}
    &\bar{\mathbf{m}}_{\nabla}(\mathbf{x}) := \mathbf{m}_{\nabla}(\mathbf{x}) + k_{\nabla,\mathbf{x},X}\tilde{K}_{\nabla}^{-1}(\mathbf{y}^\nabla-\mathbf{m}_{\nabla,X}),\\ 
    &\bar{k}_{\nabla}(\mathbf{x},\mathbf{x}') :=  k_{\nabla}(\mathbf{x},\mathbf{x}') - k_{\nabla,\mathbf{x},X}\tilde{K}_{\nabla}^{-1}k_{\nabla,X,\mathbf{x}'},
\end{split}
\end{equation*}
with $k_{\nabla,\mathbf{x},X} = [k_{\nabla}(\mathbf{x},\mathbf{x}_1),\ldots,k_{\nabla}(\mathbf{x},\mathbf{x}_n)]$, 
$\tilde{K}_\nabla := K_\nabla + \sigma^2 I_{n(d+1)}$ and the block-matrix $K_\nabla := (k_{\nabla}(\mathbf{x}_i,\mathbf{x}_j))_{i,j}$ is made of $n\times n$ blocks of size $d+1$.
Inverting this matrix hence costs $O(n^3(d+1)^3)$ which can be prohibitive in high dimensional settings.
 Gradient-enhanced GPs have been used in BO, and yield gradient-enhanced BO (gradBO) \cite{cheng2023gradient,wu2017bayesian}.

In particular, LAGO can be instantiated with standard GP
models (LAGO-\emph{BO}), or extended with gradBO (LAGO-\emph{grad}). 
The algorithmic structure, including the selection criterion and the
separation between global and local steps, remains similar.
Indeed, as gradBO leverages gradients at each query, one would therefore always perform function and gradient evaluations, no matter if we are doing a global or local step, compared to LAGO-BO where 
gradients are only queried when (i) performing a local step, or (ii) when recentering the trust region after a better solution is found using a global step.
Another possibility would be to condition gradient-enhanced GPs only on local
gradients queried during refinement, while conditioning solely on function
evaluations during BO steps. We do not explore this variant here.

In the context of LAGO, gradient-enhanced surrogates can improve performance in
ill-conditioned problems where gradient information significantly refines the
global model (see \hyperlink{RQ_3}{\textbf{RQ3}}). However, this is not required for the effectiveness of the
framework (LAGO-BO still converges without this information).

\section{Reduction of LAGO to SR1 Trust-Region Method}\label[appendix]{appendix:local_convergence}

This appendix provides the proof of \Cref{prop:local_convergence}.

\begin{proof}[Proof of \Cref{prop:local_convergence}]
By construction, LAGO alternates between global BO steps and local
trust region updates. A BO recentering step relocates the trust region center
if it found a better minimizer, while local steps follow the SR1 trust region
procedure.

By assumption, there exists an iteration $K$ such that no BO recentering occurs
for all $k \ge K$. Therefore, from iteration $K$ onward, all updates of the
trust region center are performed exclusively through the local refinement
mechanism $\operatorname{SR1-TR}$. In particular, the step computation, acceptance rule, trust region
radius updates, and Hessian updates coincide exactly with those of the SR1
trust region method.

It follows that, for all $k \ge K$, the sequence $\{\mathbf{x}_c^k\}_{k \ge K}$ generated
by LAGO is identical to that produced by the SR1 trust region method initialized
at $(\mathbf{x}_c^K,\Delta_K,H_K)$, namely $\operatorname{SR1-TR}(\mathbf{x}_c^K,\Delta_K, H_K)$.

The convergence result then follows from standard theory for SR1 trust region
methods (see, e.g., \citep[Theorems~4.2, 6.7]{nocedal2006numerical}). 
Particularly, \citep[Theorem~6.7]{nocedal2006numerical} ensures that $\{\mathbf{x}_c^k\}_{k \ge K}$ converges superlinearly to a stationary point 
$\mathbf{x}^\star$.
\end{proof}
\begin{remark}
The result above characterizes the asymptotic behavior of the algorithm in the
absence of stopping due to the early stopping criterion or budget restrictions, and therefore isolates the intrinsic convergence
properties of the local refinement mechanism.
\end{remark}

\section{Synthetic benchmarks and PDE-constrained optimization definitions}\label[appendix]{appendix:benchmarks_pde_opt_problem}

We discuss here the synthetic benchmarks used throughout the manuscript, as well as the PDE-constrained problem used in \Cref{sec:experiments}.

\subsection{Synthetic benchmarks}\label[appendix]{appendix:synthetic_problems}

We consider a set of standard synthetic benchmarks that capture different
optimization challenges. The \textbf{Branin} function is a classical
low-dimensional test problem with three minima (all global ones). The \textbf{perturbed Branin} variant
introduces a slight bias toward one minimizer, enabling the assessment of basin
identification and sensitivity to small variations in objective values.
The \textbf{Styblinski--Tang} (ST) function is multimodal with a growing number of local minima as $d$ grows,
and is considered in $2$-$5$-$10$D to evaluate scalability and
behavior in moderately multimodal settings. The \textbf{Rosenbrock} function
features a narrow valley with strong ill-conditioning, making it a
canonical test for local refinement and optimization in anisotropic landscapes.
The \textbf{Lévy} function combines multimodality with flat regions and sharp
basins, providing a challenging setting where both global exploration and local
refinement are required. To assess robustness in highly multimodal regimes, we include the
\textbf{Rastrigin} and \textbf{Griewank} functions, both characterized by a
very large number of regularly distributed local minima. These functions are known
to challenge global optimization methods, as the landscape is dominated by
frequent oscillations that can hinder convergence.

Together, these benchmarks allow us to evaluate LAGO across a range of regimes, from smooth and
ill-conditioned problems to highly multimodal landscapes, highlighting the
trade-offs between global exploration and local refinement. We report in \Cref{table:synthetic_problems} the function definitions, domains and global
minima of the functions, and \Cref{fig:2D_contours} shows contours and surfaces of the $2$D
functions.

\begin{table*}[ht]
    \tiny
\begin{center}
\caption{Information about the synthetic test functions used in \Cref{sec:experiments}.}
\label{table:synthetic_problems}
\begin{tabular}{c c c c} \toprule
    Objective & Definition $f: \mathbb{R}^d \to \mathbb{R}$ & Domain &  Global minimum \\ \midrule
    Branin        & $\left(\mathbf{x}_2-\frac{5.1}{4\pi^2}\mathbf{x}_1^2+\frac{5}{\pi}\mathbf{x}_1-6\right)^2
    +10\left(1-\frac{1}{8\pi}\right)\cos(\mathbf{x}_1)+10$ & $[-5, 10] \times [0, 15]$ & $0.39789$\\ 
    Styblinski-Tang & $\frac{1}{2}\sum_{i=1}^d \left(\mathbf{x}_i^4 - 16 \mathbf{x}_i^2 + 5 \mathbf{x}_i \right)$ &$[-5, 5]^d$ & $-39.16599d$\\ 
    Rosenbrock & $(1-\mathbf{x}_1)^2 + 100(\mathbf{x}_2 -\mathbf{x}_1^2)^2$& $[-5,10]^2$& $0.0$\\ 
    Lévy & $\sin^2(\pi w_1)
    + (w_1-1)^2\left[1+10\sin^2(\pi w_1+1)\right]$ & $[-10,10]^2$ & $0.0$ \\
& $+(w_2-1)^2\left[1+\sin^2(2\pi w_2)\right],\;
    w_i = 1+\frac{\mathbf{x}_i-1}{4}$ &&\\ 
    Perturbed Branin & $\operatorname{Branin}(\mathbf{x}) + 10^{-6}\|\mathbf{x} - [-\pi,12.275]\|^2$ & $[-5, 10] \times [0, 15]$ & $0.39789$\\
    Rastrigin & $10d + \sum_{i=1}^d \mathbf{x}_i^2 - 10 \cos (2\pi \mathbf{x}_i)$& $[-50, 50]^d$ & $0.0$\\
    Griewank & $\sum_{i=1}^d \frac{\mathbf{x}_i^2}{4000} - \prod_{i=1}^d \cos (\frac{\mathbf{x}_i}{\sqrt{i}})$& $[-50, 50]^d$ & $0.0$\\
\bottomrule
\end{tabular}
\end{center}
\end{table*}

\begin{figure*}[ht]
\centering
\includegraphics[width=0.32\textwidth]{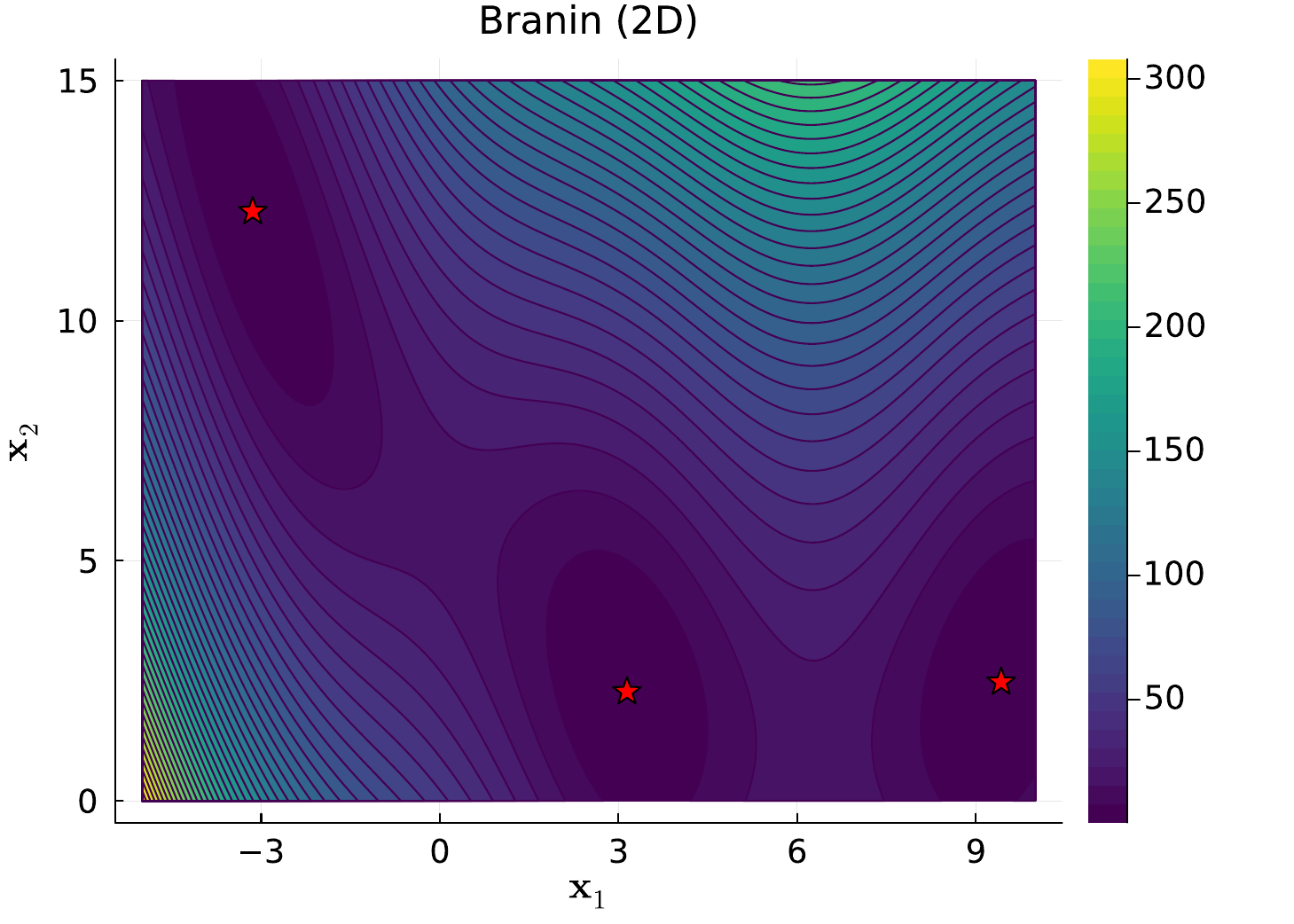}
\includegraphics[width=0.32\textwidth]{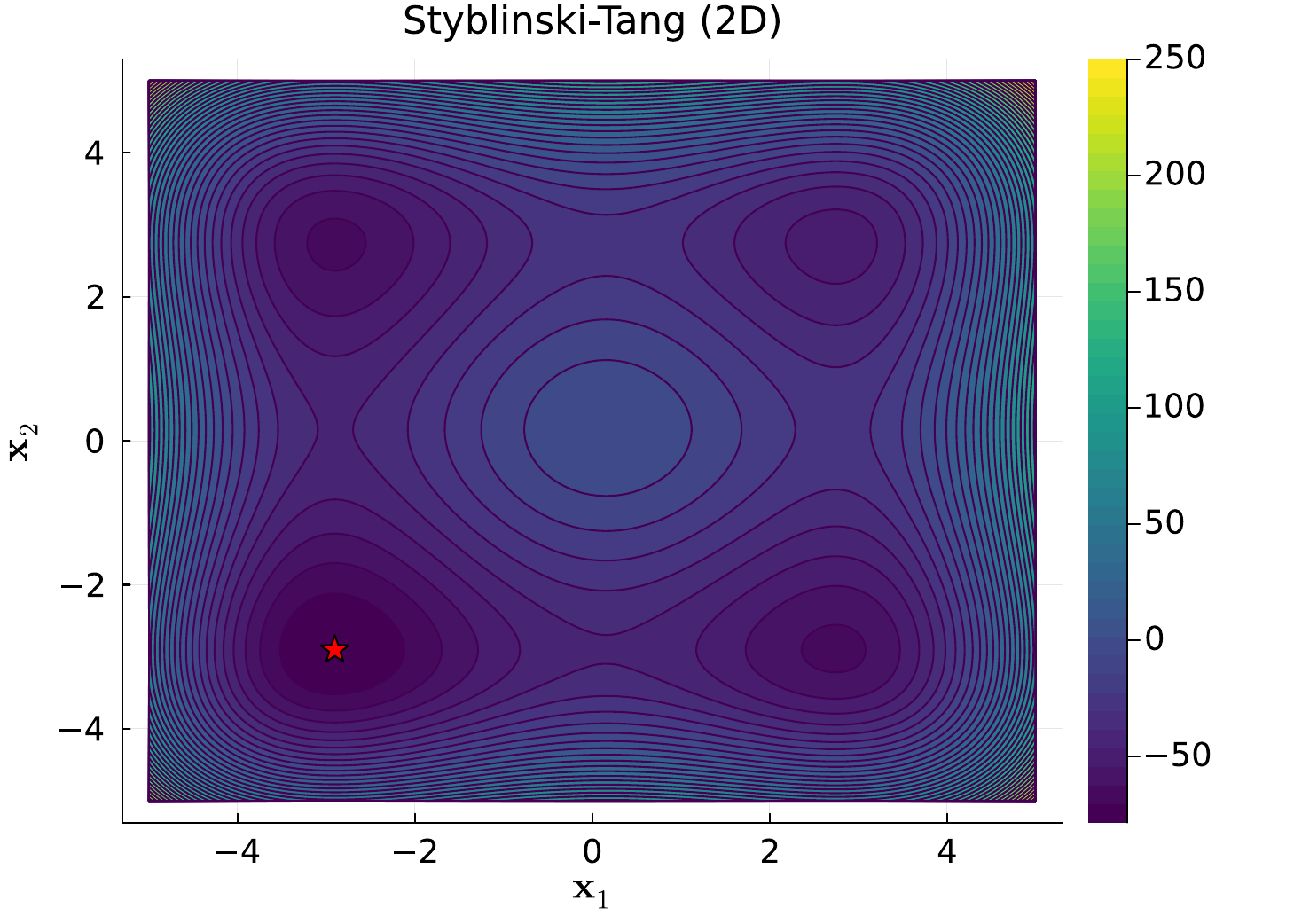}
\includegraphics[width=0.32\textwidth]{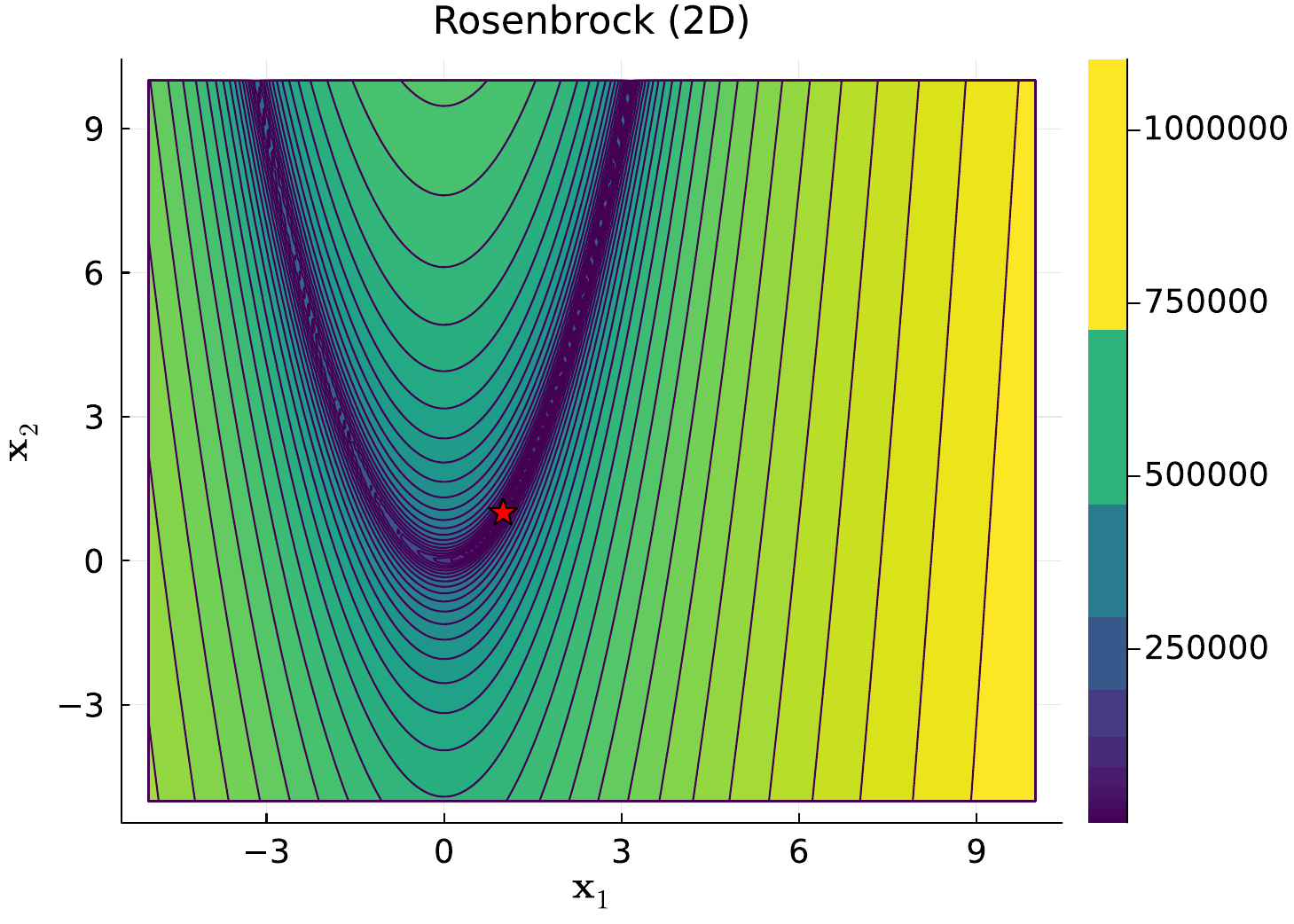}
\includegraphics[width=0.32\textwidth]{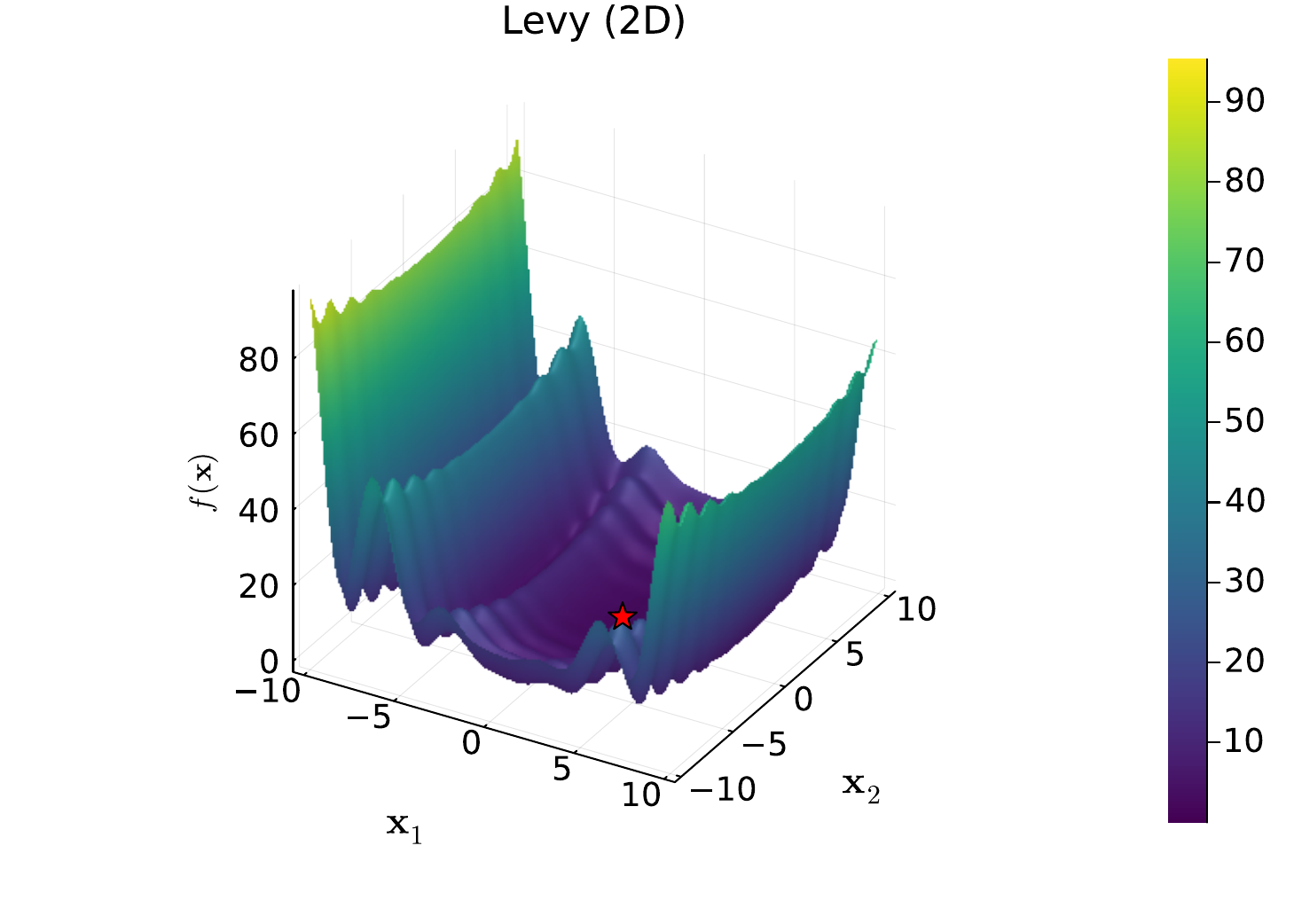}
\includegraphics[width=0.32\textwidth]{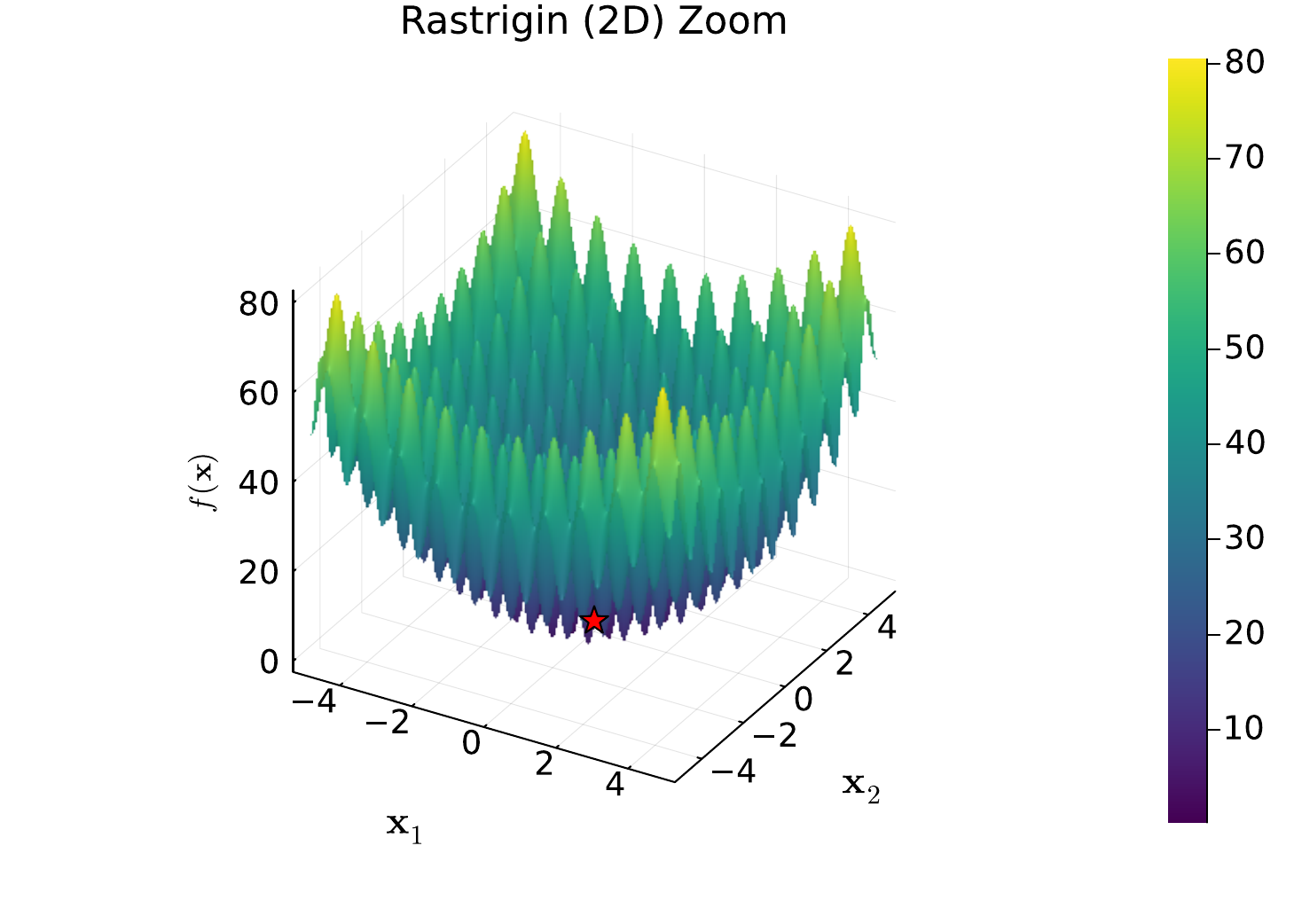}
\includegraphics[width=0.32\textwidth]{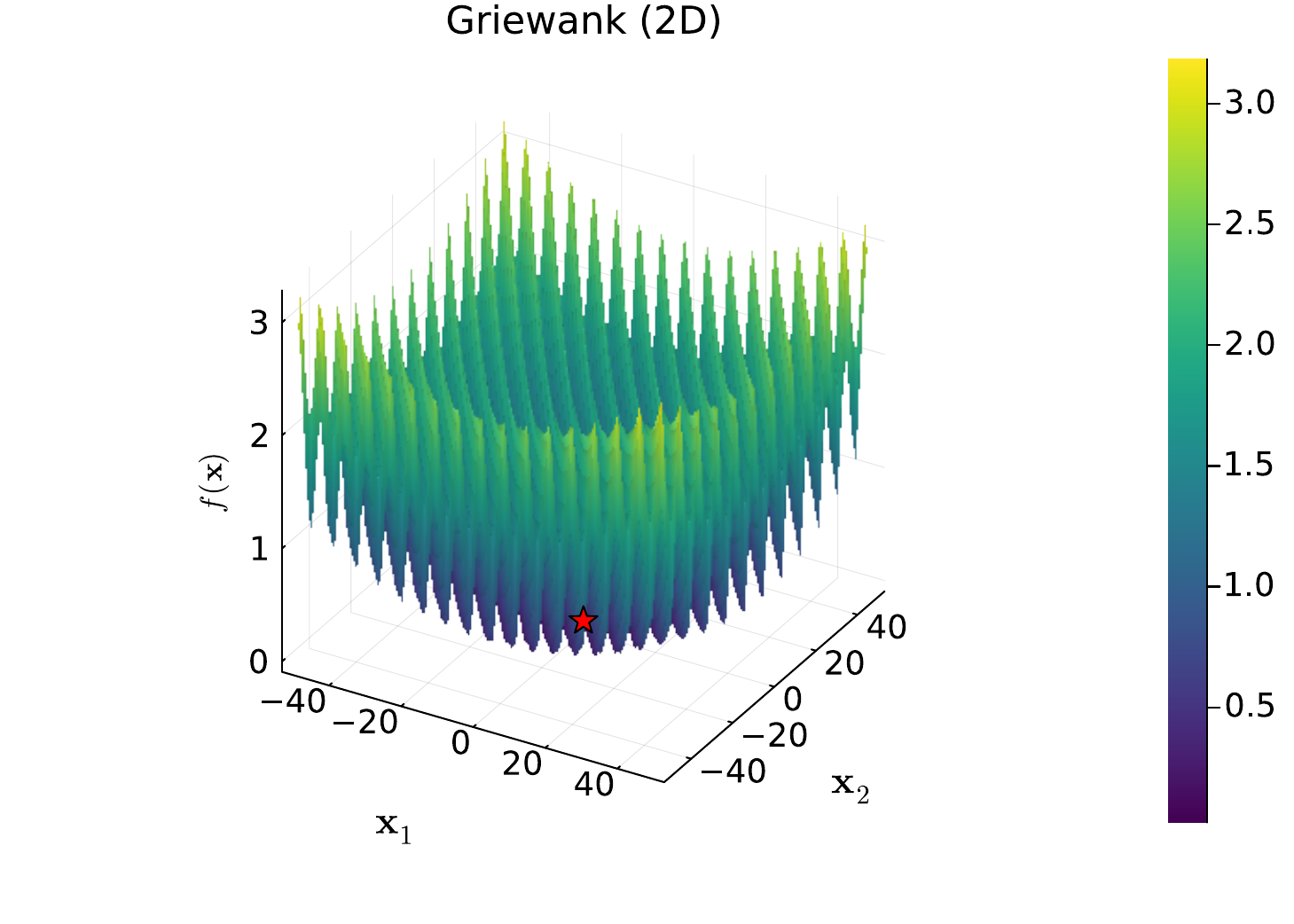}
\caption{Contour and surface plots of the 2D synthetic test functions. The red
stars show the global minima. Surface plots are used for highly multimodal
functions for clarity purposes, and the Rastrigin one is zoomed in a smaller domain $[-5,5]^2$.}
\label{fig:2D_contours}
\end{figure*}

\subsection{PDE-constrained optimization problem}\label[appendix]{appendix:pde_constrained_problem}

We now provide the problem description and specifications of the PDE-constrained optimization problem studied in \Cref{sec:experiments}.

\textbf{Problem setting.} Let $\mathcal{D} \subset \mathbb{R}^d$ be a spatial domain. 
We consider the optimization problem
\begin{equation}\label{eq:pde_problem}
\begin{aligned}
\min_{\mathbf{c} \in \mathcal{D}} \quad & \tilde{J}(y,\mathbf{c}):= \frac{1}{2} \|y-y_d\|^2_{L^2(\mathcal{D})}&\\
&\textrm{such that } y \in H_0^1(\mathcal{D}) \textrm{ satisfies} \\ 
& \int_{\mathcal{D}} \kappa(\mathbf{x})\nabla y(\mathbf{x}) \cdot \nabla v(\mathbf{x})d\mathbf{x} = \int_{\mathcal{D}}u(\mathbf{x},\mathbf{c})v(\mathbf{x})d\mathbf{x} \quad \forall v \in H^1_0(\mathcal{D}),
\end{aligned}
\end{equation}
where $u(\mathbf{x},\mathbf{c}) := \alpha\exp(-\beta
\|\mathbf{x}-\mathbf{c}\|^2_2)$ with $\alpha,\beta >0$, $\mathbf{c}$ represents
the location of the source term, $y_d : \mathcal{D} \to \mathbb{R}$ is the
desired state, and $\kappa : \mathcal{D} \to \mathbb{R}$ is the diffusion
coefficient. 
We define $J(\mathbf{c}):=\tilde{J}(S(\mathbf{c}),\mathbf{c})$ as the reduced cost function, with $S:\mathcal{D}\rightarrow H^1_0(\mathcal{D})$ being the solution map $\mathbf{c} \mapsto y(\mathbf{c})$ associated with the PDE constraint.
Using the adjoint-state method, the gradient $\nabla J$ can be computed at the additional cost of solving a single adjoint PDE \cite{hinze2008optimization}.
The adjoint state variable $p \in H^1_0(\mathcal{D})$ is obtained by solving 
\begin{equation*}
    \int_{\mathcal{D}} \kappa(\mathbf{x})\nabla p(\mathbf{x}) \cdot \nabla v(\mathbf{x})d\mathbf{x} = \int_{\mathcal{D}} (y(\mathbf{x})-y_d(\mathbf{x}))v(\mathbf{x})d\mathbf{x}, \quad \forall v \in H^1_0(\mathcal{D}).
\end{equation*}
Using a Lagrangian approach (see \citet{hinze2008optimization} for an overview), the gradient reads
\begin{equation*}
    (\nabla J(\mathbf{c}))_i =  - \left(\nabla_{\mathbf{c}} u(\cdot,\mathbf{c})_{i},p \right)_{L^2(\mathcal{D})},
\end{equation*}
with $\nabla_{\mathbf{c}} u(\cdot,\mathbf{c})=2\alpha \beta \exp(-\beta\|\cdot-\mathbf{c}\|^2_2)(\cdot-\mathbf{c})$.
As a result, evaluating the objective function and its full gradient incurs a cost comparable to two PDE solves, independently of the dimension $d$ of the control variable.
This makes PDE-constrained optimization problems with adjoint-based gradients
particularly well suited for LAGO, which explicitly leverages gradient
information throughout its optimization steps, either only for local steps
(LAGO-BO) or also for its gradient-enhanced GP in LAGO-grad.

We now discuss the different parameters used in this problem. As $J$ is very smooth, we decided to use a Matérn $7/2$ kernel whenever possible (BO, gradBO and LAGO(-BO/grad)).

\textbf{Finite Element Method framework.} For this study, we meshed the physical domain $\mathcal{D} := [0, 1]^2$ with $50 \times 50$ squares, further split into two to obtain triangular elements, and used $P^1$ finite elements.

\textbf{Functions used in the system.}
The diffusion coefficient is given as 
$$\kappa(\mathbf{x}) := \exp\bigl(\exp(-1.125)(\cos(\pi \mathbf{x}_1)\sin(\pi \mathbf{x}_2)
                                               +\cos(2\pi \mathbf{x}_1)\sin(\pi \mathbf{x}_2)
                                               +\cos(2\pi \mathbf{x}_1)\sin(2\pi \mathbf{x}_2)\bigl),$$
we set $\alpha = 5 \times 10^{2}$ and $\beta = 5 \times 10^{3}$ for the source term $u$, and the desired state $y_d$ is 
$$y_d(\mathbf{x}) :=  \exp(2\mathbf{x}_1 + 2\mathbf{x}_2)\sin(4\pi \mathbf{x}_1)\sin(4\pi\mathbf{x}_2).$$

The landscape of the cost function of \eqref{eq:pde_problem} is given in
\Cref{fig:cost_function}, which illustrates the presence of local minima and a
global minimum located near the top-right of the search domain. All the
quantities described above are shown in \Cref{fig:pde_constrained_setting}, as
well as the obtained state variable $y$ when we place the source term at
$\mathbf{c}^\star \approx (0.89, 0.89)$.
\begin{figure*}
\centering
\includegraphics[width=0.6\textwidth]{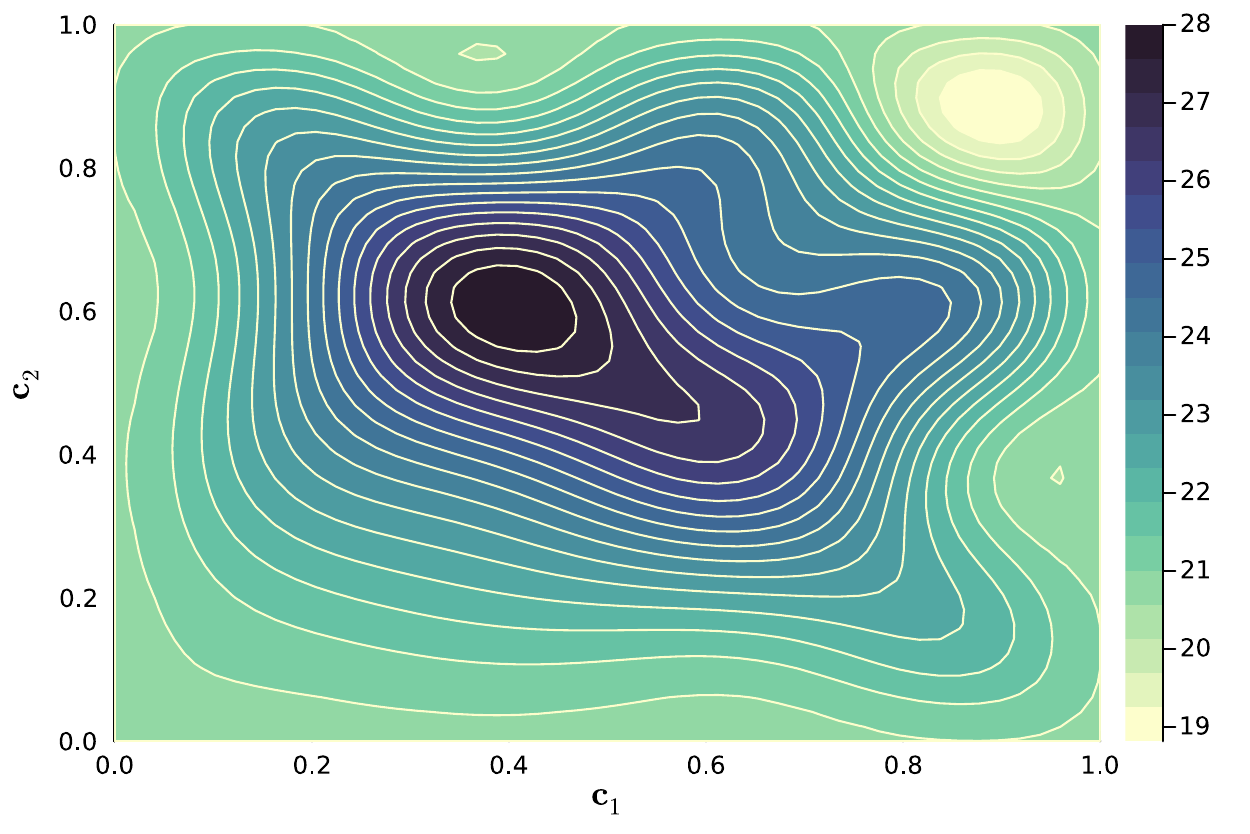}
\caption{Landscape of the reduced cost function $J$ for the
PDE-constrained optimization problem.
The objective exhibits multiple local minima and a distinct global minimizer,
highlighting the need for global exploration combined with local refinement.}
\label{fig:cost_function}
\end{figure*}

\begin{figure*}[ht]
\centering
\includegraphics[width=0.48\textwidth]{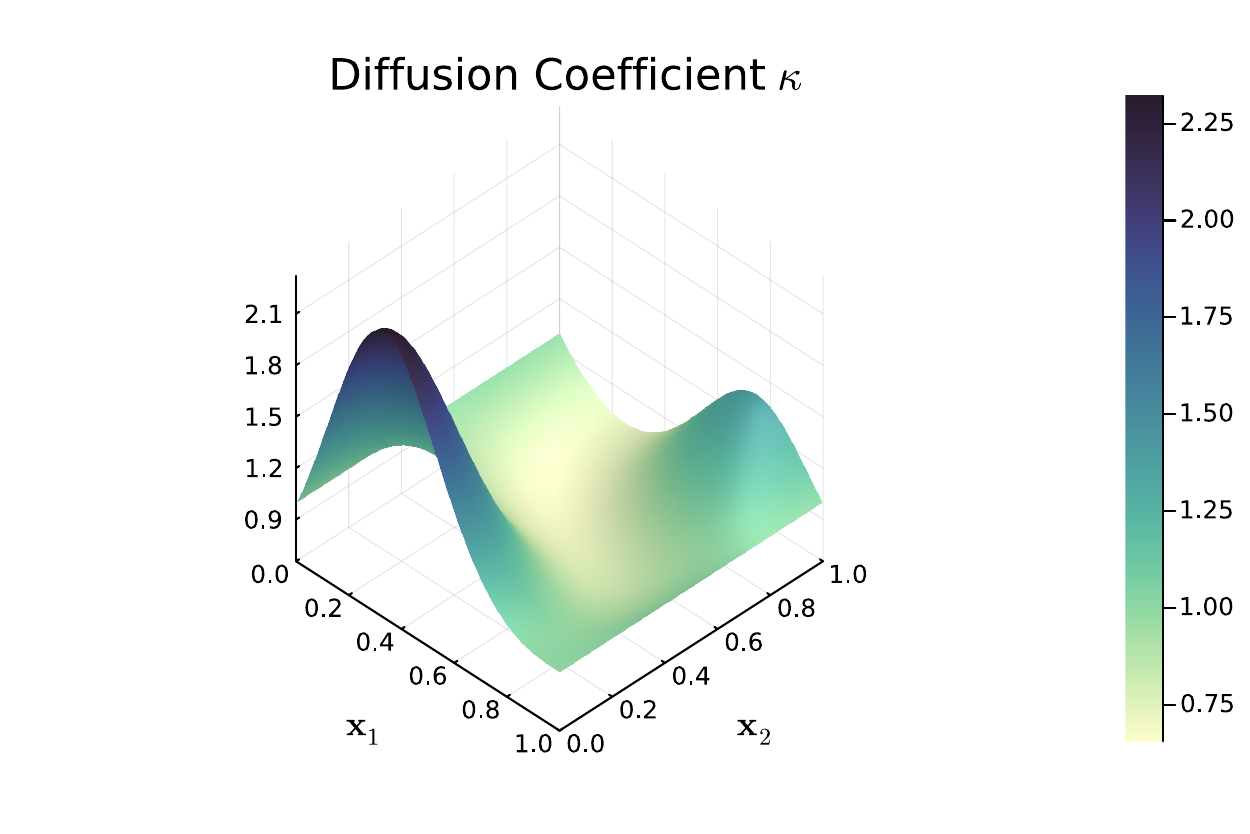}
\includegraphics[width=0.48\textwidth]{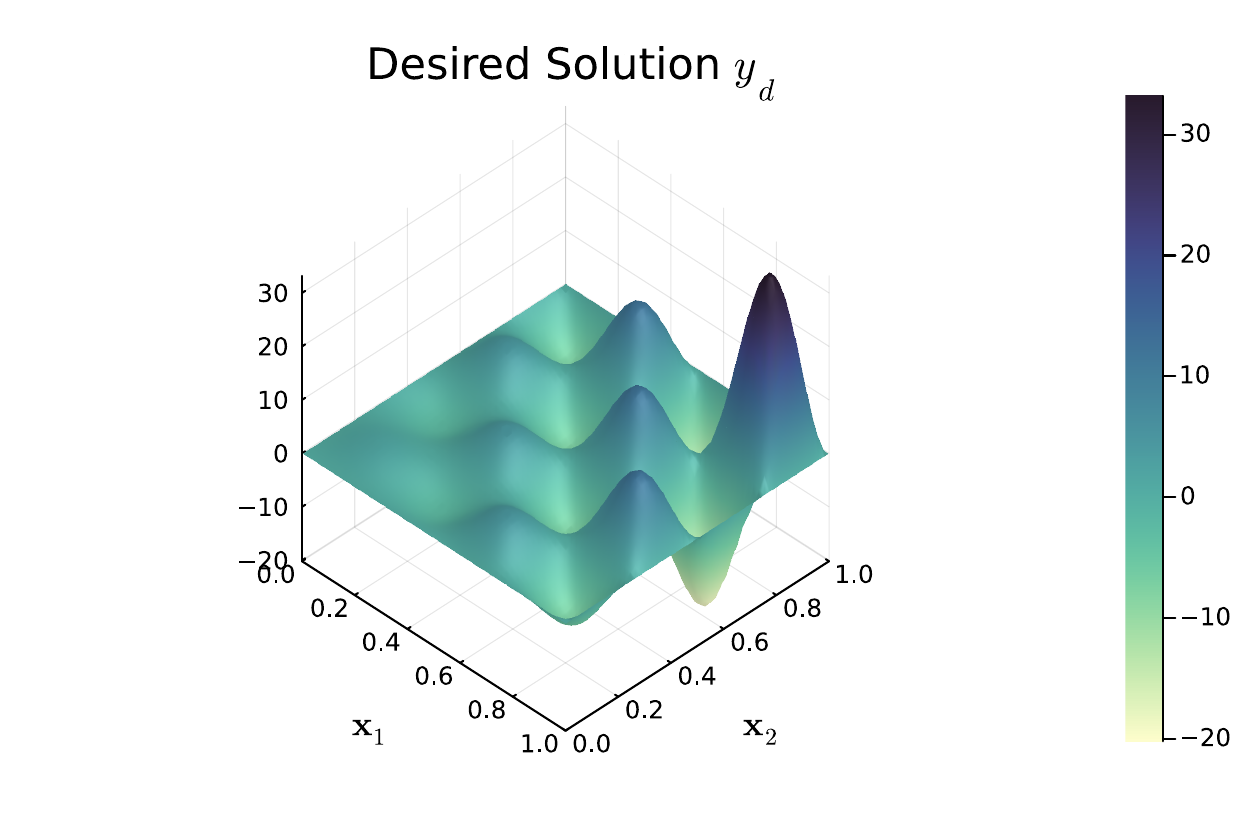}\\
\includegraphics[width=0.48\textwidth]{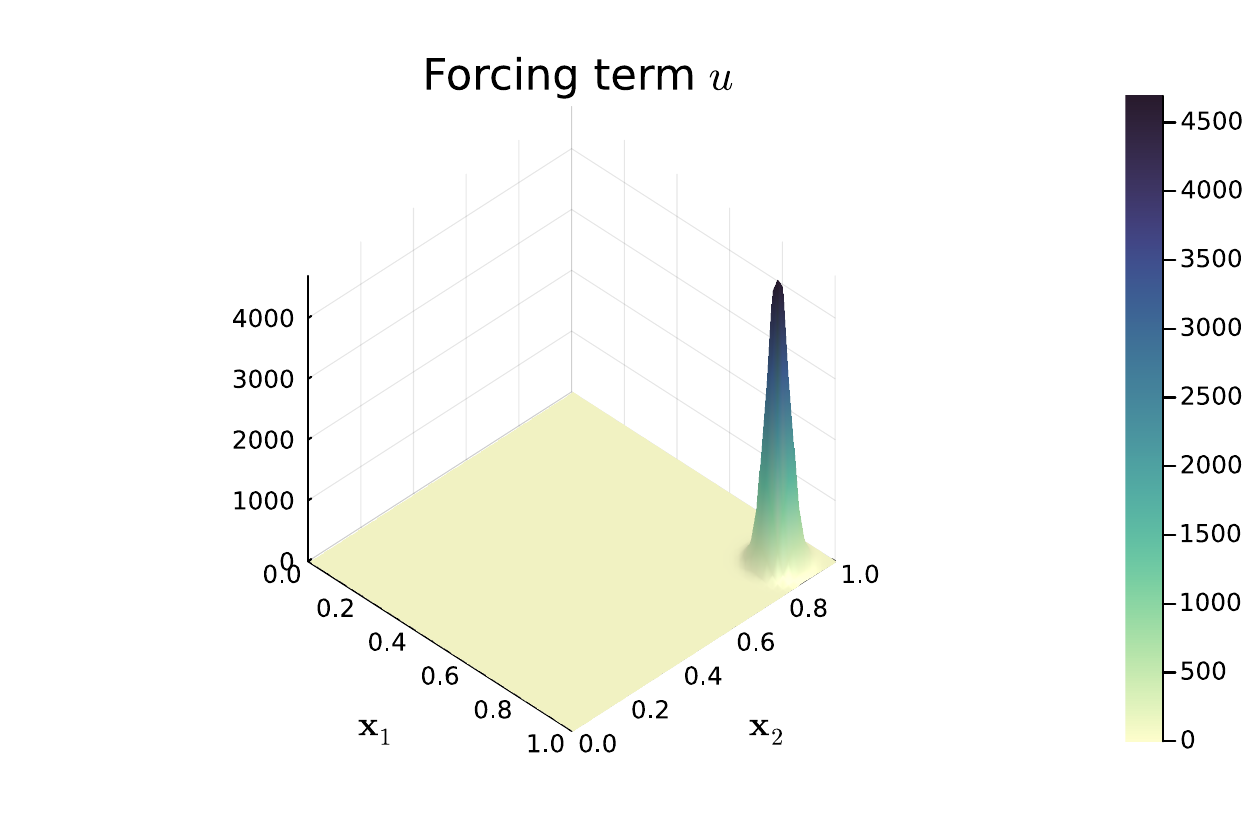}
\includegraphics[width=0.48\textwidth]{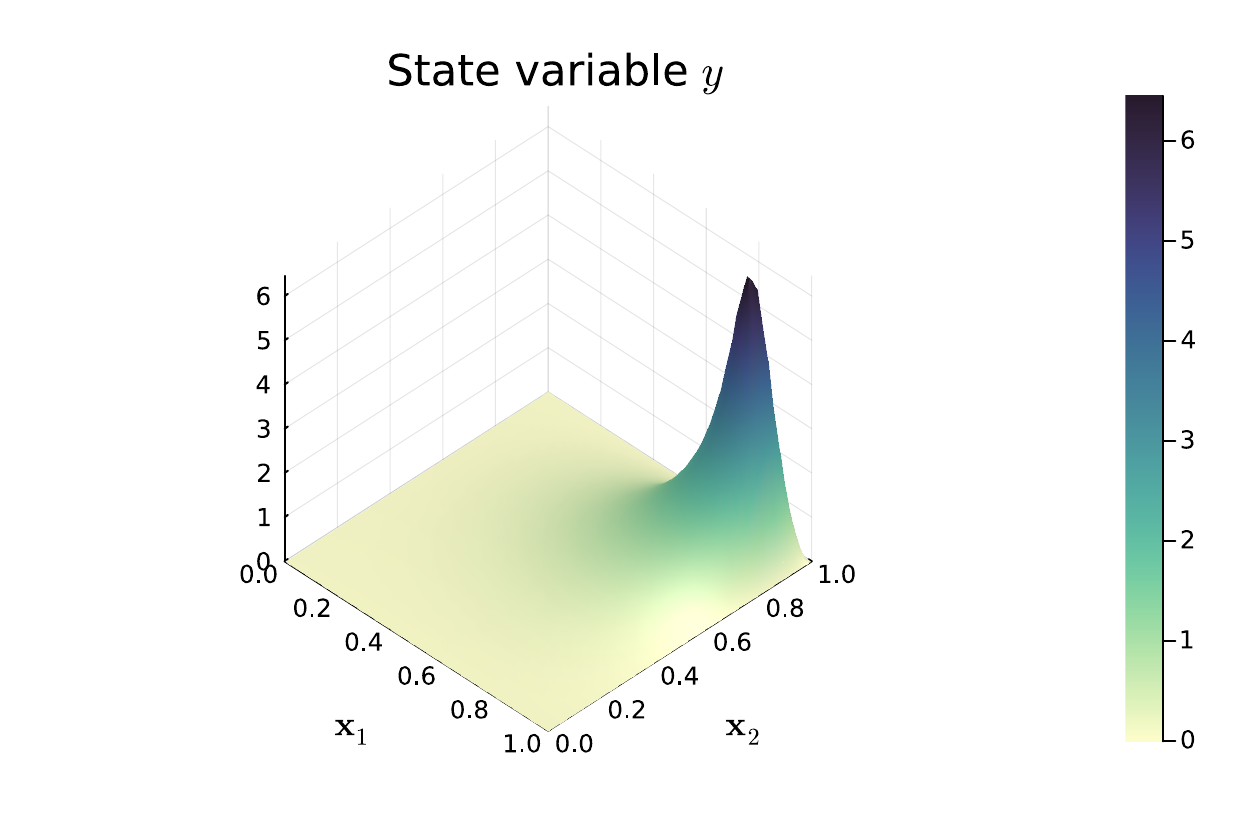}
\caption{Diffusion coefficient $\kappa$, desired state $y_d$, optimal forcing
term $u$, and obtained state variable $y$. The forcing term is approximately
placed at the highest point of the desired state $y_d$, as the resulting state
$y$ will minimize the $L_2$ distance with $y_d$.}
\label{fig:pde_constrained_setting}
\end{figure*}

\section{Sensitivity analysis of hyperparameters}\label[appendix]{appendix:sensitivity}

We discuss here the sensitivity of LAGO to its main hyperparameters ($\nu, \varepsilon_T, \gamma$). The
parameter $\gamma$ is analyzed in
\Cref{appendix:sensitivity_gamma}, the early stopping threshold
$\varepsilon_T$ in \Cref{appendix:sensitivity_eps}, and the parameter $\nu$
is discussed in the main text (\hyperlink{RQ_2}{\textbf{RQ2}}).

\subsection{Analysis of local-global trade-off}\label[appendix]{appendix:sensitivity_gamma}

We illustrate in \Cref{fig:gamma_sensitivity} the effect of the parameter
$\gamma$ in \eqref{eq:global_condition} on the Lévy function. Increasing
$\gamma$ favors local refinement over global exploration. We observe that
larger values of $\gamma$ lead to improved convergence on this benchmark,
indicating that earlier or more frequent local steps can be beneficial in this
setting. Overall, the performance remains stable across a range of $\gamma$ values,
highlighting the robustness of LAGO with respect to this parameter.

\begin{figure*}
\centering
\includegraphics[width=0.6\textwidth]{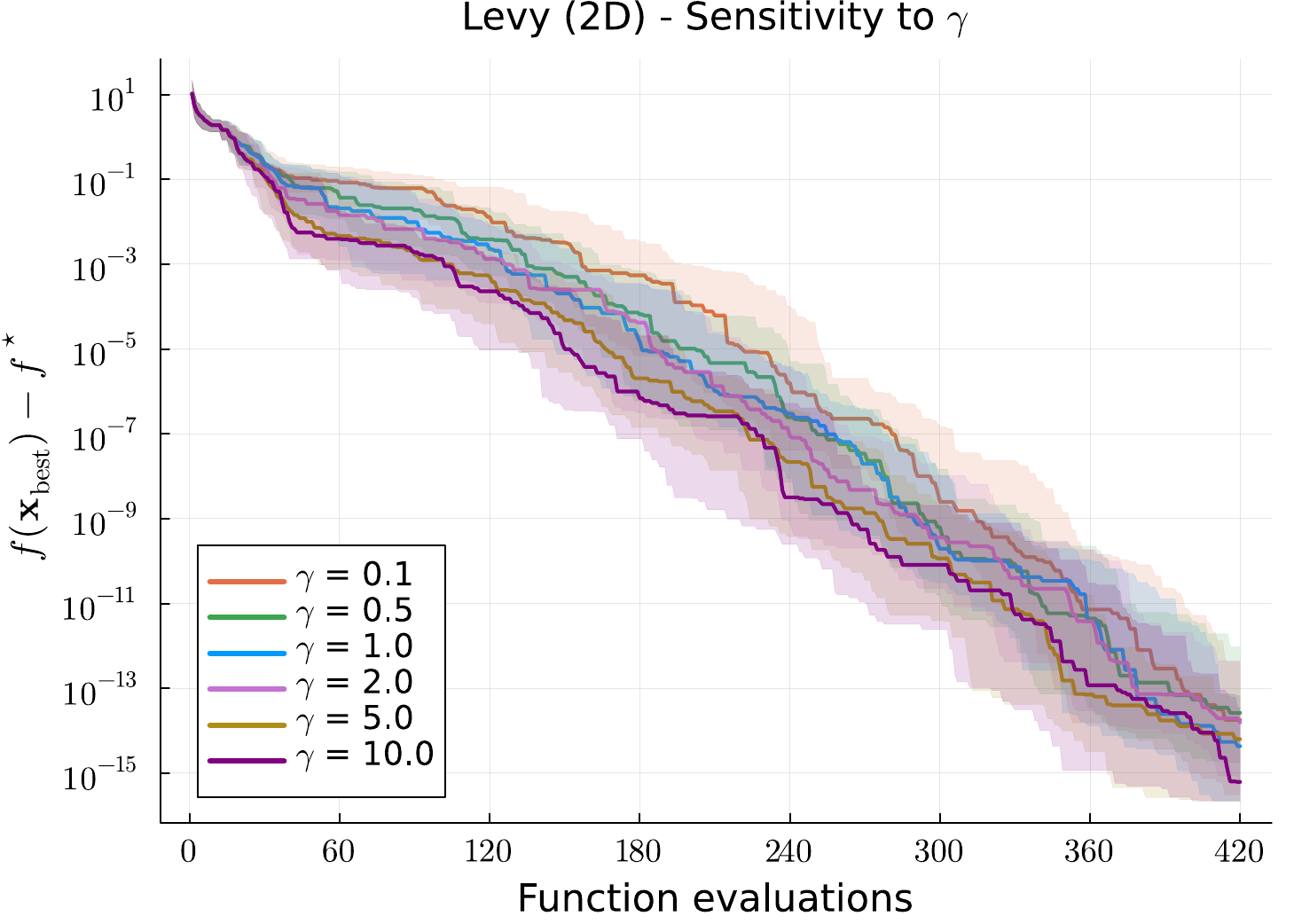}
\caption{$\boldsymbol{\gamma}$\textbf{-ablation.} Performance of LAGO (Median $\pm$ IQR) on the
Lévy function for different values of $\gamma$ in \eqref{eq:global_condition}
(default $\gamma=1$). Larger values of $\gamma$, which place greater emphasis
on local refinement, lead to lower error on this synthetic benchmark.}
\label{fig:gamma_sensitivity}
\end{figure*}

\subsection{Analysis of early stopping threshold}\label[appendix]{appendix:sensitivity_eps}

The tolerance $\varepsilon_T$ can be interpreted as a proxy to the remaining improvement one is willing to ignore.
We report in \Cref{table:epsilonT} the number of function evaluations performed by LAGO before the termination criterion is met on the Branin function, averaged over $50$ runs.
As expected, smaller values of $\varepsilon_T$ lead to stricter stopping criteria and therefore higher evaluation counts.

We further illustrate in \Cref{fig:epsilonT} the evolution of the objective
value across iterations for the same runs. Since termination occurs at
different iterations depending on the seed, trajectories are padded after
termination to match the longest run across seeds and values of $\varepsilon_T$.

This behavior confirms that $\varepsilon_T$ provides a direct control over the
trade-off between solution accuracy and evaluation cost.

\begin{figure*}[ht]
\centering
\begin{minipage}{0.48\textwidth}
\centering
\captionof{table}{Average number of function evaluations as a function of $\varepsilon_T$,
reported as mean $\pm$ standard deviation over $50$ runs for the Branin function. Smaller $\varepsilon_T$ leads to
stricter stopping criteria and higher evaluation counts.}
\label{table:epsilonT}
\begin{tabular}{ccc}
\toprule
$\varepsilon_T$ & Avg. evaluations & std \\
\midrule
$10^{-14}$ & 104.22 & 9.46 \\
$10^{-12}$ & 101.02 & 9.65 \\
$10^{-10}$ & 96.58  & 8.99 \\
$10^{-8}$  & 91.20  & 9.07 \\
$10^{-6}$  & 86.50  & 7.58 \\
$10^{-4}$  & 77.70  & 8.33 \\
$10^{-2}$  & 59.74  & 6.69 \\
$10^{-1}$  & 44.70  & 8.80 \\
\bottomrule
\end{tabular}
\end{minipage}
\hfill
\begin{minipage}{0.48\textwidth}
\centering
\includegraphics[width=\textwidth]{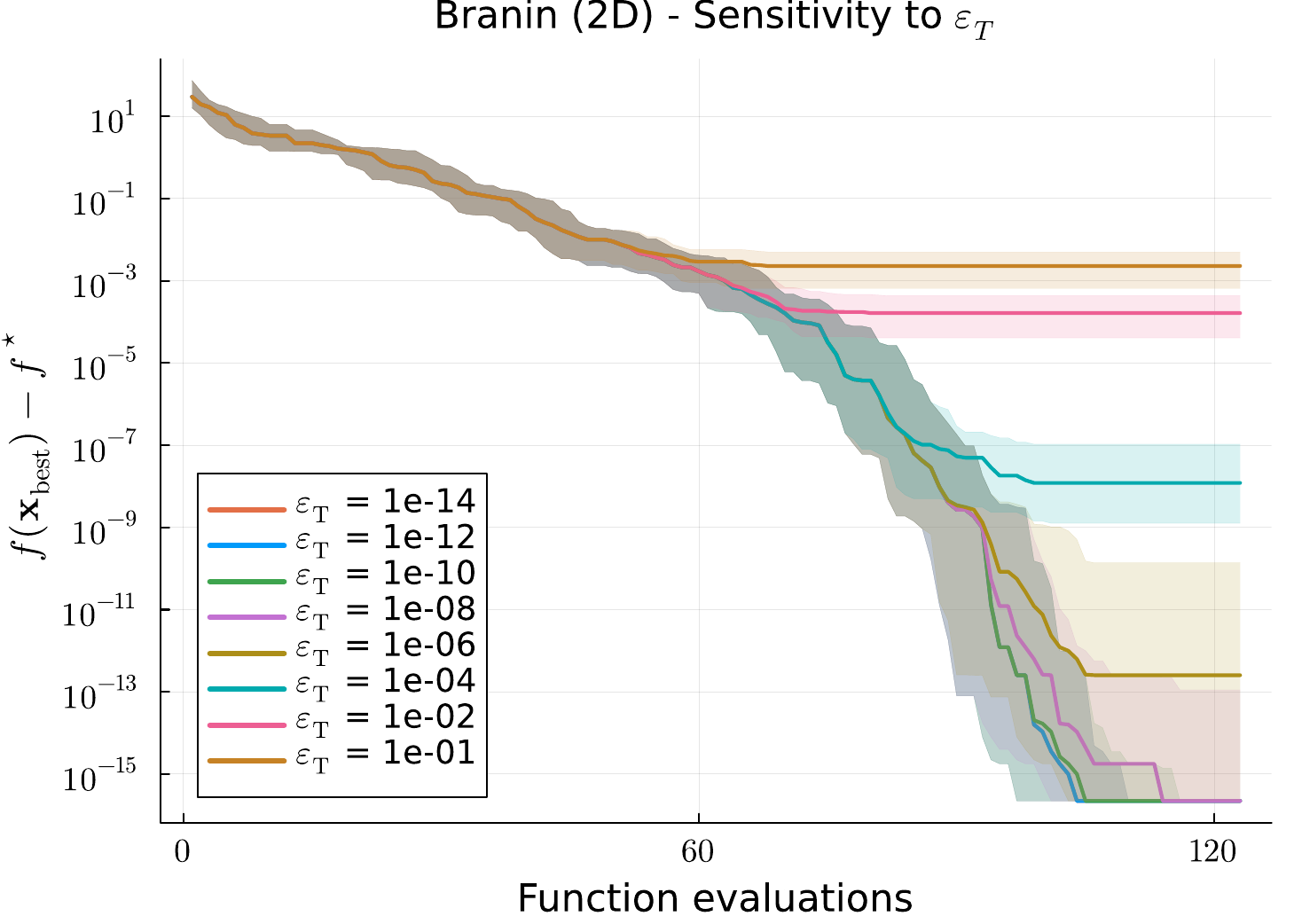}
\captionof{figure}{$\boldsymbol{\varepsilon_T}$\textbf{-ablation.} Effect of $\varepsilon_T$ on early stopping of LAGO on Branin. Smaller values enforce stricter stopping, and lead to lower regret.}
\label{fig:epsilonT}
\end{minipage}

\end{figure*}

\section{Local-global interleaving across benchmarks}\label[appendix]{appendix:interleaving}

The trajectories in \Cref{fig:local_global_interleaving} illustrate how LAGO
dynamically allocates evaluations between global exploration and local
refinement across different problem structures for a fixed random seed. Each trajectory highlights the
sequence of evaluations, distinguishing between global BO steps (blue) and local
trust region steps (green).

In multimodal settings such as Rastrigin, Lévy, Griewank and Styblinski-Tang in $5$-$10$D, LAGO predominantly selects global steps,
with only a small fraction of local refinements. This behavior reflects the
limited usefulness of local optimization when basin identification remains the
dominant challenge. In contrast, for convex or weakly multimodal problems
such as Branin, Rosenbrock, or low-dimensional Styblinski–Tang, LAGO increases the
frequency of local steps, exploiting gradient information to accelerate
convergence once a promising region has been identified.

This behavior is further quantified in \Cref{table:local_global_ratio}, which reports the proportion of
local steps across independent runs. The results confirm that LAGO consistently
adapts the balance between exploration and refinement to the underlying
landscape. In particular, the proportion of local steps remains low in
highly multimodal problems (typically below $5\%$), while increasing
significantly in settings where local refinement is beneficial.

Overall, these results provide empirical evidence that the selection criterion \eqref{eq:global_condition} effectively drives the interleaving mechanism, while the method
naturally adapts to the problem structure, reverting to BO-like behavior
when exploration dominates and leveraging local refinement when appropriate.

\begin{figure*}
\centering
\includegraphics[width=0.32\textwidth]{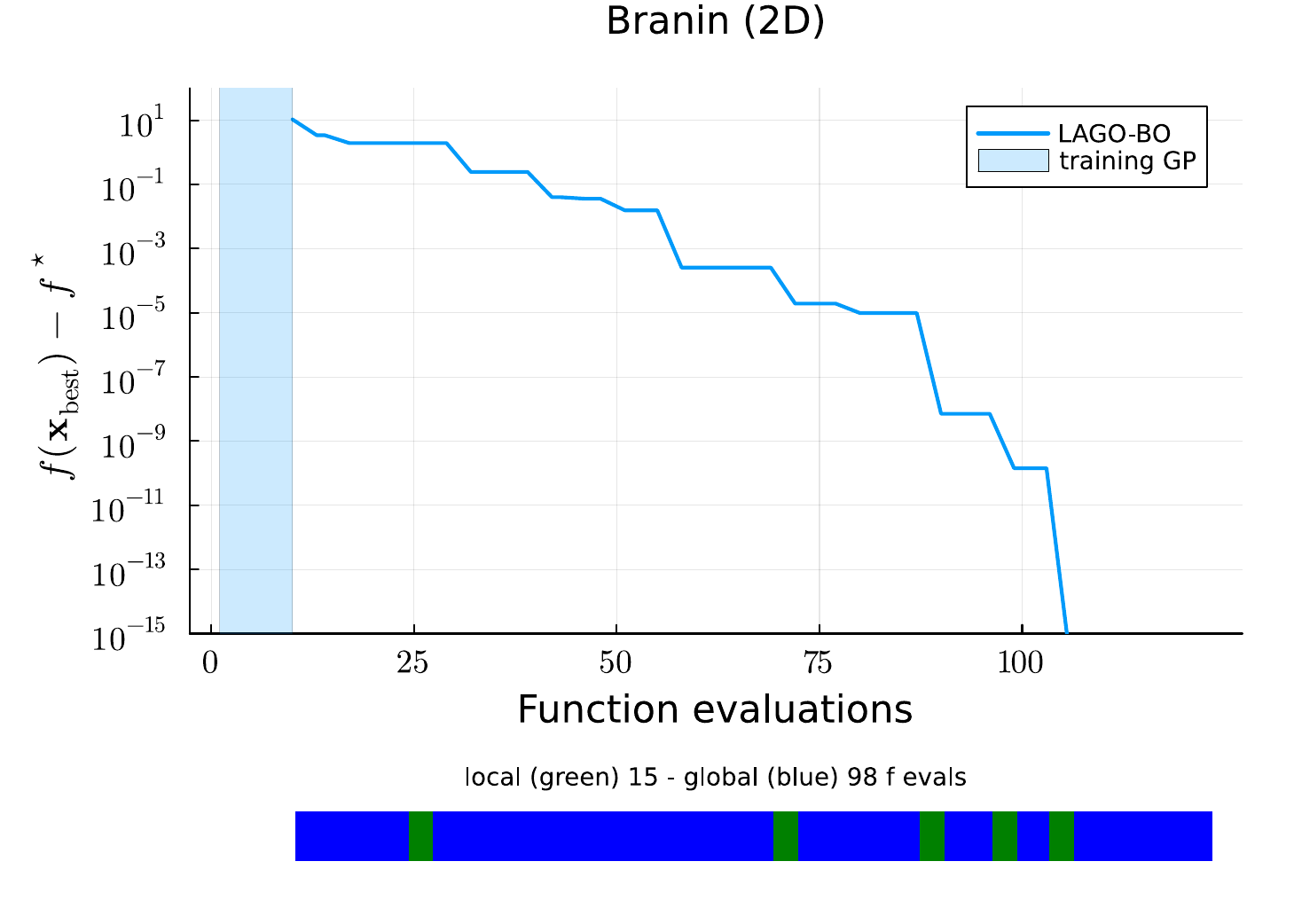}
\includegraphics[width=0.32\textwidth]{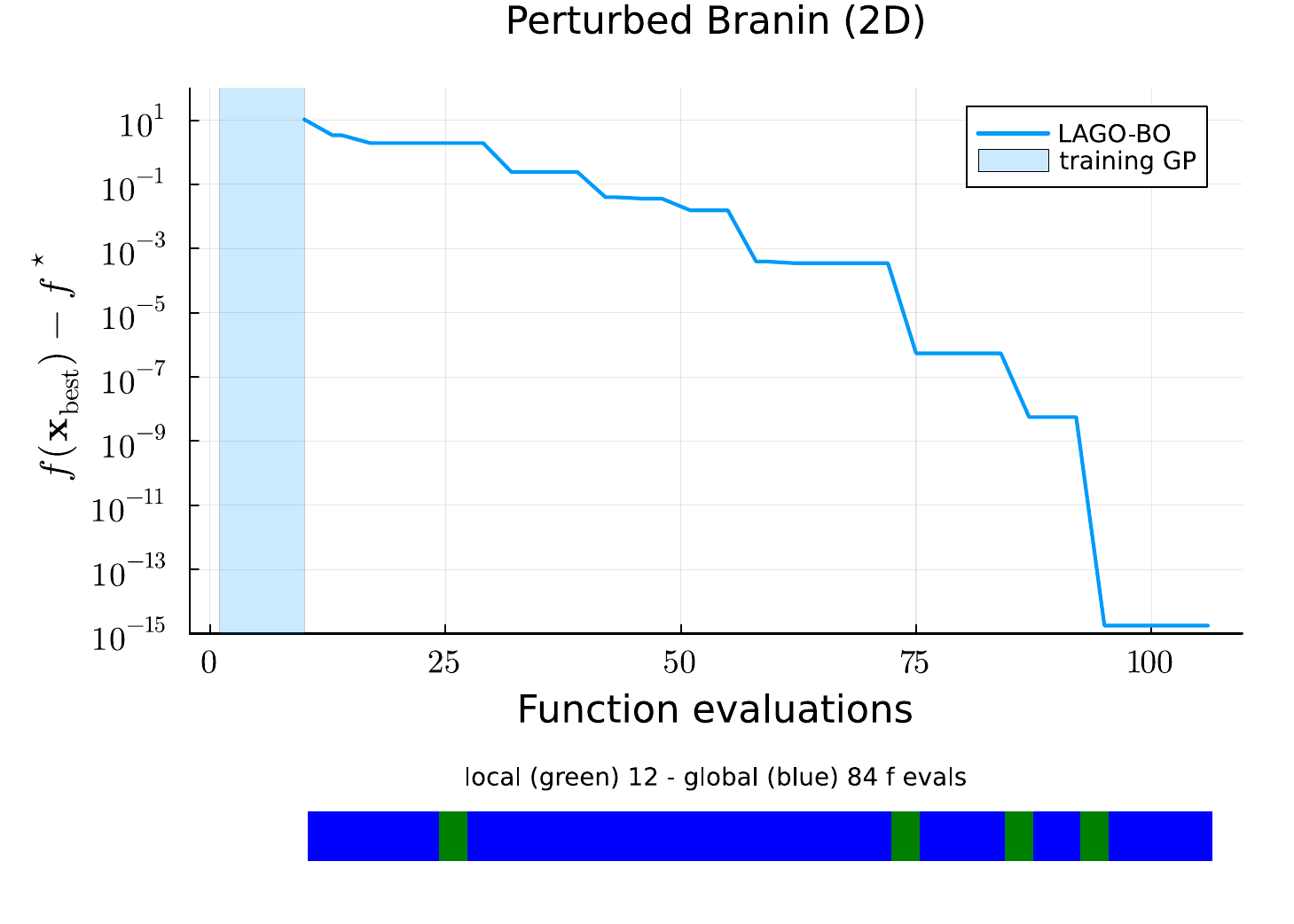}
\includegraphics[width=0.32\textwidth]{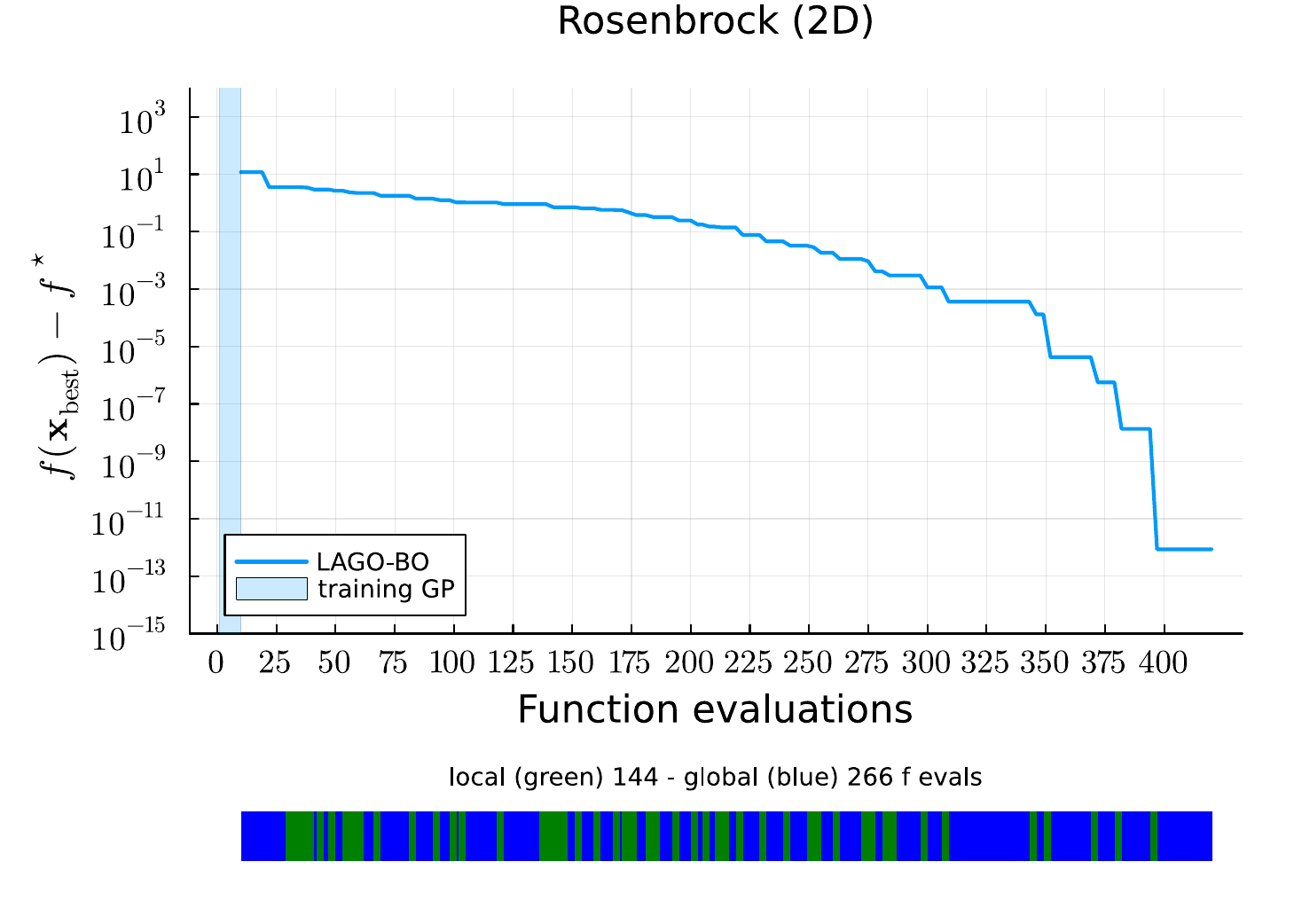}
\includegraphics[width=0.32\textwidth]{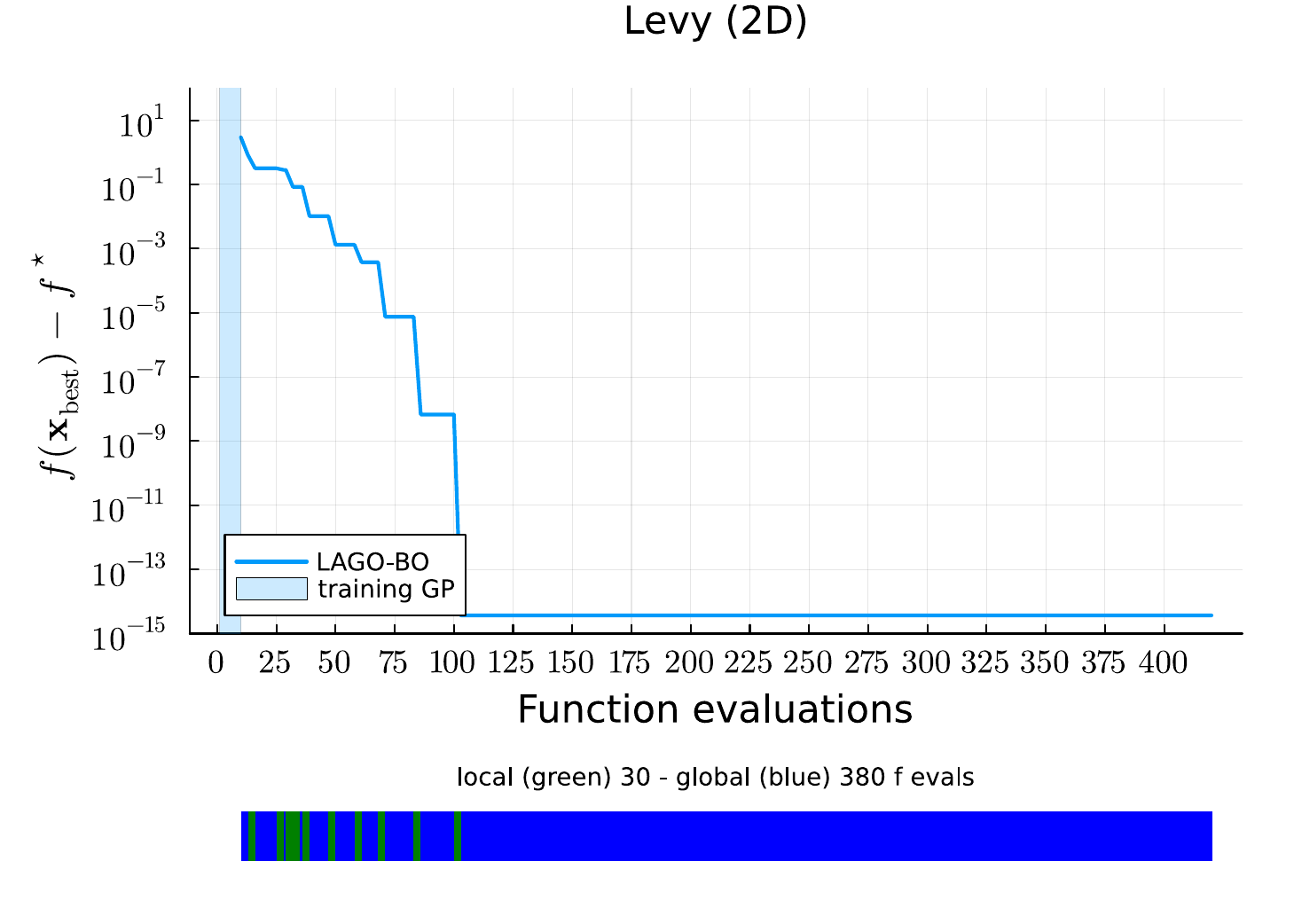}
\includegraphics[width=0.32\textwidth]{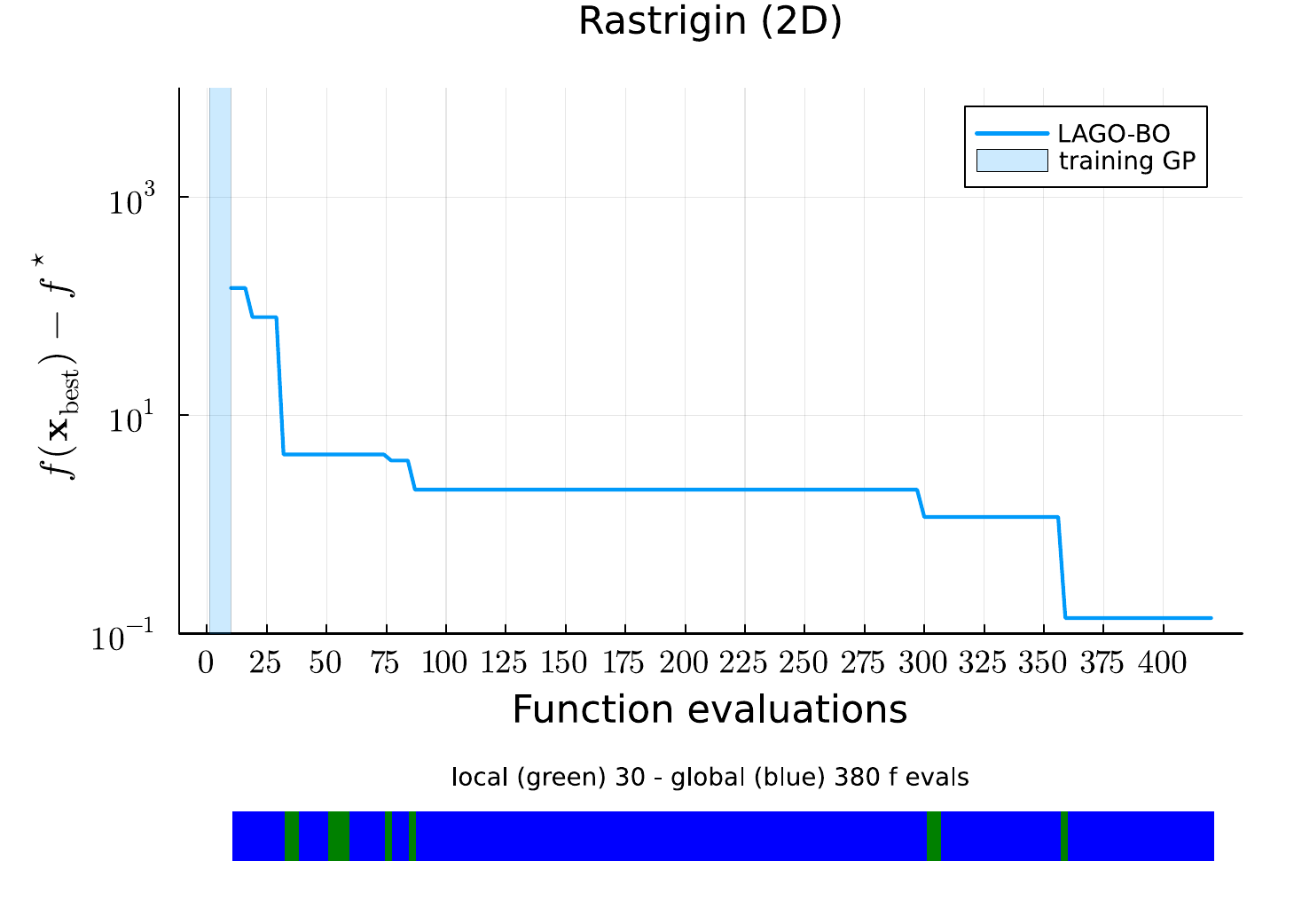}
\includegraphics[width=0.32\textwidth]{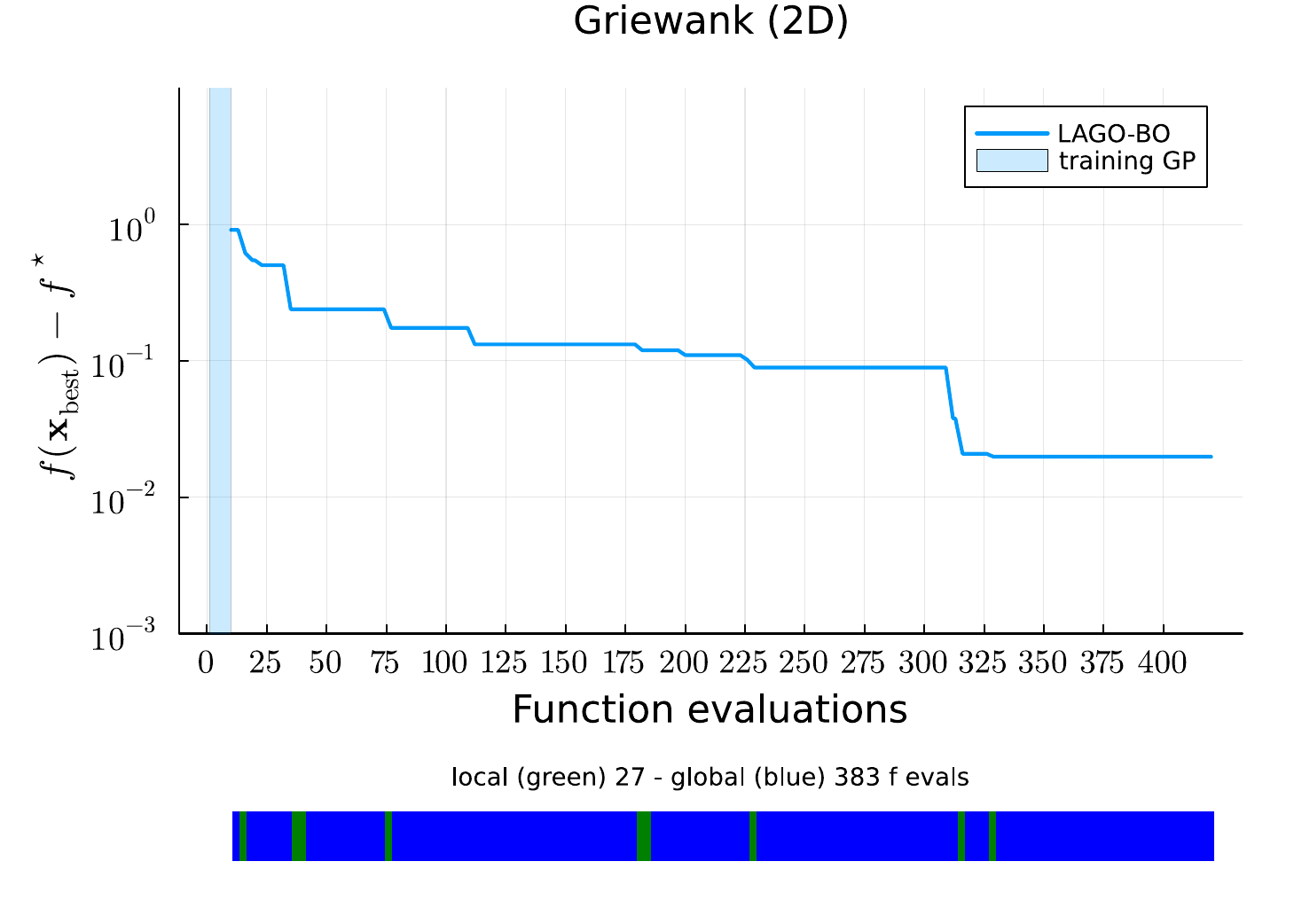}
\includegraphics[width=0.32\textwidth]{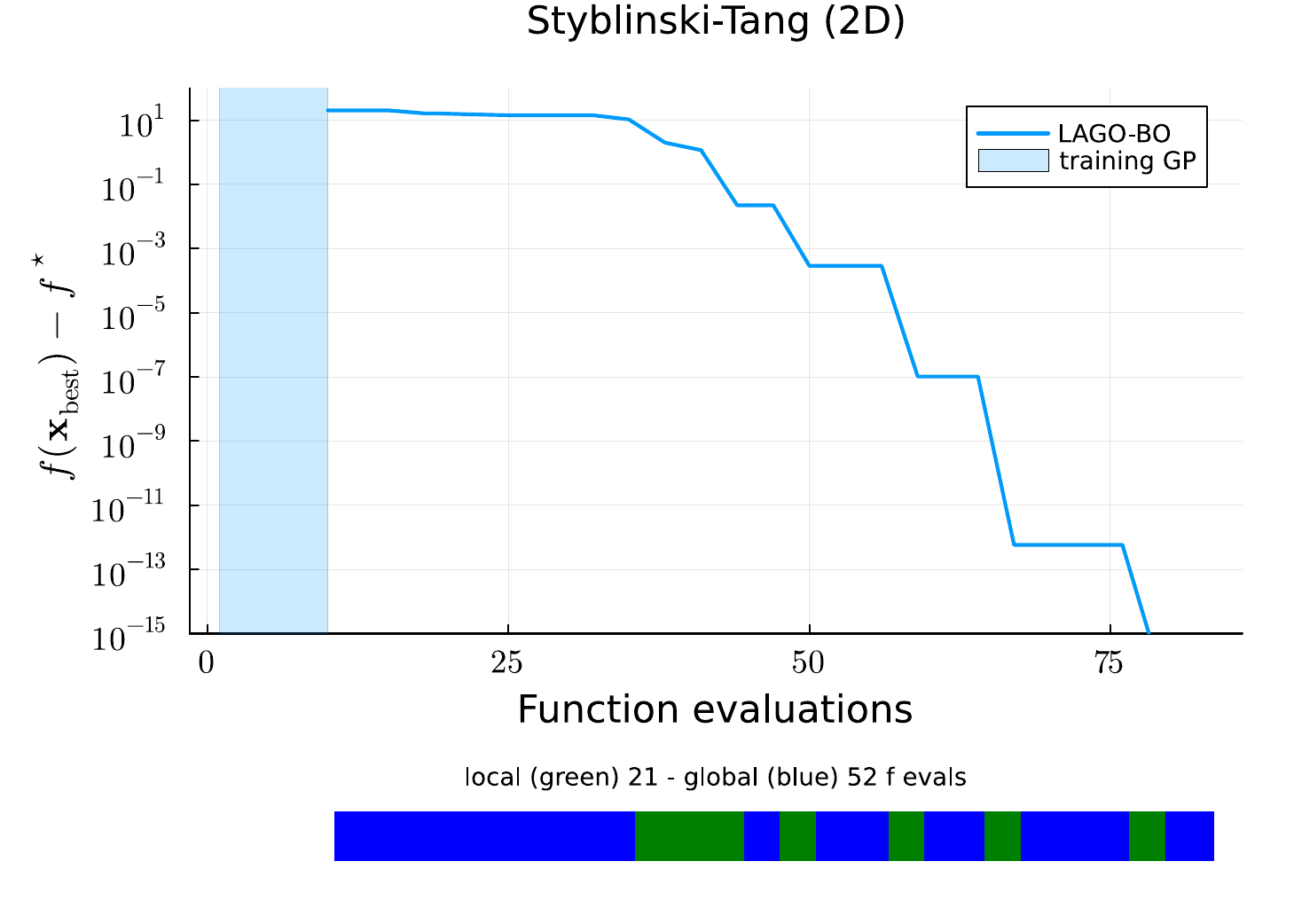}
\includegraphics[width=0.32\textwidth]{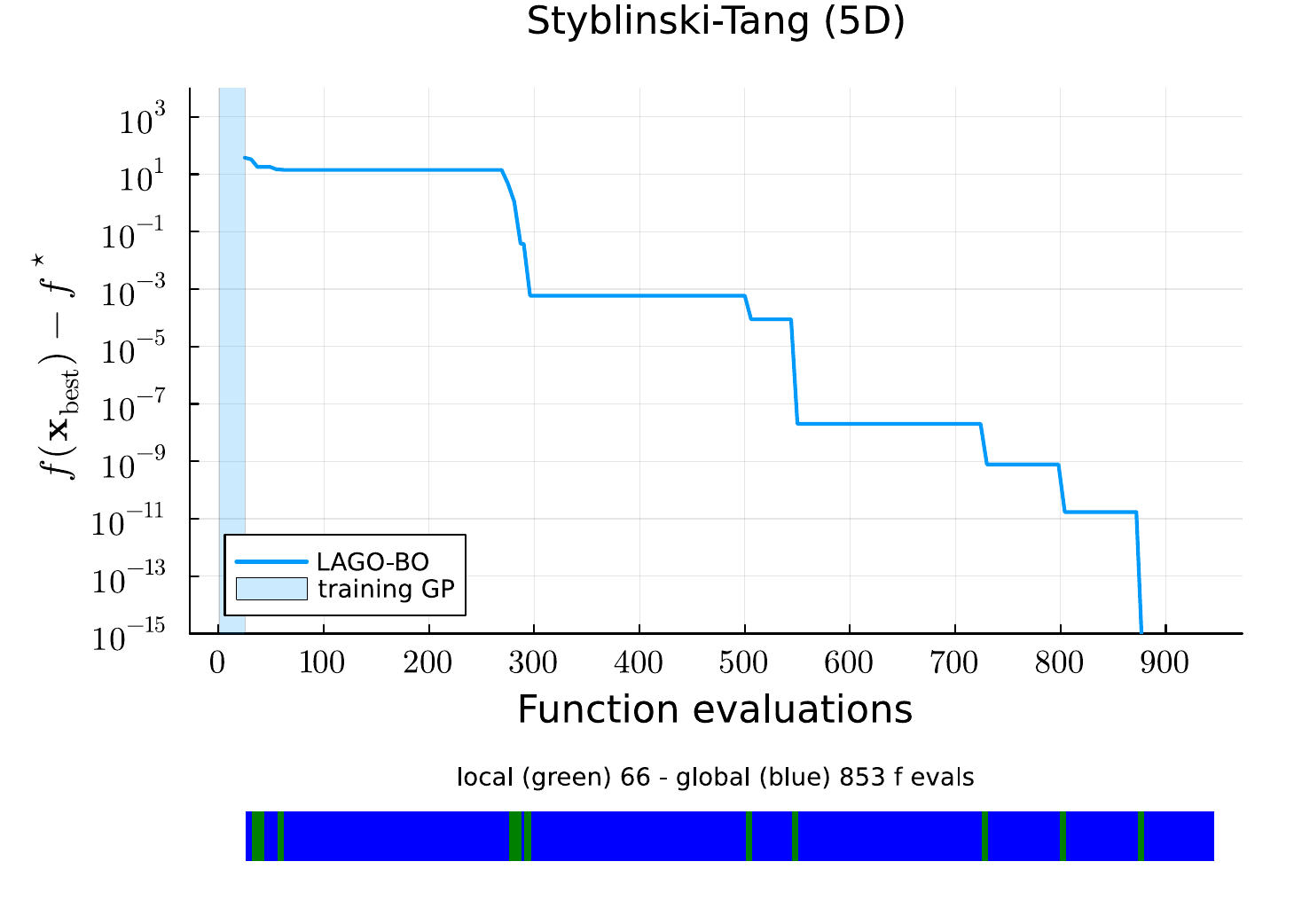}
\includegraphics[width=0.32\textwidth]{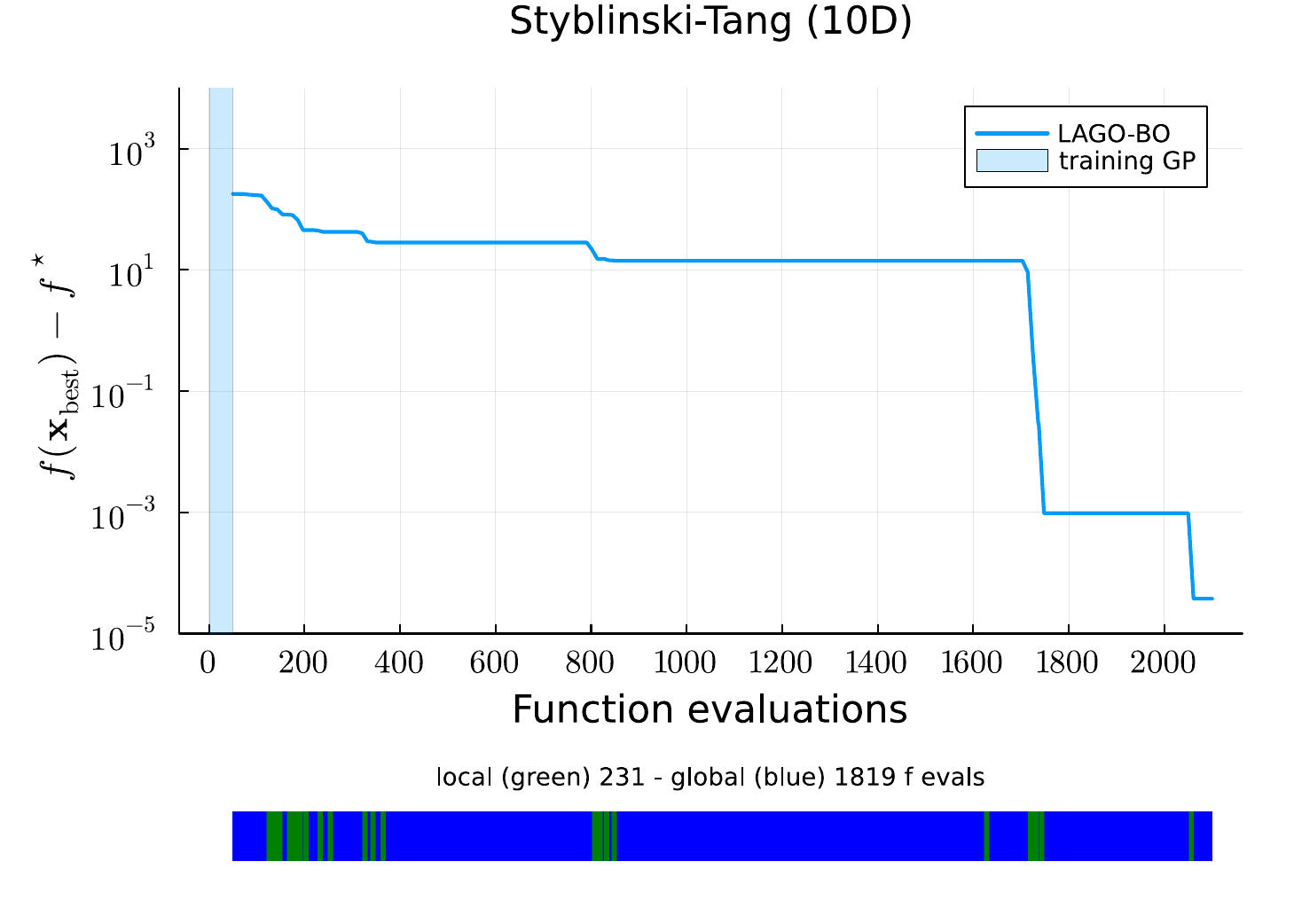}
\caption{\textbf{Local-global interleaving.} LAGO trajectories across benchmarks with local (green) and global (blue) steps. 
LAGO predominantly selects global steps in moderate and highly multimodal settings, while increasing the use of local refinement in convex or weakly multimodal regimes.
}
\label{fig:local_global_interleaving}
\end{figure*}

\begin{table*}
\begin{center}
\caption{Percentage of local steps versus global steps across benchmark problems (reported as mean $\pm$ standard deviation).}
\label{table:local_global_ratio}
\begin{tabular}{lc}
\toprule
Objective & Local steps (\%) \\
\midrule
\multicolumn{2}{l}{\textit{Mostly global behavior: local steps $<5\%$}} \\
Styblinski-Tang (10D)   & 1.0  $\pm$ 0.2 \\
Styblinski-Tang (5D)    & 1.5  $\pm$ 0.3 \\
Griewank (2D)           & 1.9  $\pm$ 0.9 \\
Lévy (2D)               & 2.1  $\pm$ 0.5 \\
Rastrigin (2D)          & 3.9  $\pm$ 2.0 \\
\midrule
\multicolumn{2}{l}{\textit{Increased local refinement: local steps $>5\%$}} \\
\addlinespace[0.25em]
Branin (2D)             & 6.3  $\pm$ 2.0 \\
Perturbed Branin (2D)   & 6.5  $\pm$ 2.0 \\
Styblinski-Tang (2D)    & 11.4 $\pm$ 3.8 \\
Rosenbrock (2D)         & 17.0 $\pm$ 6.2 \\
\bottomrule
\end{tabular}
\end{center}
\end{table*}

\section{Experimental design, LAGO and baselines specifications}\label[appendix]{appendix:experimental_setting}

\paragraph{Experimental design.} We run each algorithm with the same budget of function evaluation ($210d$), over $50$
different initial designs of experiments of size $5d$ using the Latin Hypercube Sampling method \cite{mckay1979comparison}. 
When possible, this initial design is given, but some exceptions apply (see below). 
If the optimization routine requires only one initial starting point, it is selected randomly from the $5d$ initial data points.
If multiple restarts occur, then the next restart point is selected within the remaining points of the initial dataset. 

For synthetic benchmarks, each full gradient evaluation is charged as $d$
function evaluations. This conservative accounting avoids treating
gradient information as essentially free in settings where no adjoint solver or
application-specific gradient-cost model is available. For the PDE-constrained
optimization problem, the adjoint method computes the full gradient with one additional PDE
solve \cite{hinze2008optimization}, so the gradient is charged at one function evaluation cost.

We now discuss the algorithms used in \Cref{sec:experiments} for reproducibility purposes. The code to perform the experiments is available in the supplementary material \cite{van_dieren_2026_20557676}.
\paragraph{LAGO} LAGO(-BO/grad) was implemented in an in-house Julia library. In our experiments, we set $\gamma = 1, \nu = 0.1$, unless stated otherwise.
The nugget is set to $10^{-9}$ and serves as a numerical guarantee for the global GP, the number of acquisition function values needed in Condition \hyperlink{termination_1}{(S$_1$)} is
$N=5$, with $\varepsilon_{\text{T}} = 10^{-12}$. Across all problems, we set $\varepsilon_{\text{step}} = 10^{-7}$.
The kernels chosen are a Matérn $5/2$ for LAGO-BO and a Matérn $7/2$ for LAGO-grad, which respectively have the lowest sufficient smoothness to compute the Hessian of the posterior mean, appearing at the initialization of the trust region.
We set the prior mean $m$ to the mean of the function evaluated at the initial data points. For LAGO-grad, the gradient prior mean is zero, by applying the gradient operator to $m$.
For the MLE hyperparameter tuning of the GP, we leverage the Julia library \texttt{AbstractBayesOpt.jl} \cite{eliott_van_dieren_2026_19607347}, under the version $0.1.5$,
and train the hyperparameters (lengthscale and magnitude) every $10$ iterations. This MLE optimization and prior mean definition are also used for BO and gradBO methods in this work.

The acquisition function is optimized using random sampling ($10^4$ points) outside the TR, followed by L-BFGS initialized from the top candidates ($100$), which follows the procedure in \texttt{AbstractBayesOpt.jl}.
To ensure feasibility, i.e. that the global candidate lies outside the TR, a
penalty based on the distance to the TR center is used to discourage solutions
inside the TR. If violated, we keep the next best candidate outside the TR.

The trust region subproblem \eqref{eq:tr_subproblem} is solved using a
subroutine of \texttt{Optim.jl} \cite{mogensen2018optim} (version $1.13.3$)
implementing the iterative method proposed in \citet[Section~4.3]{nocedal2006numerical}.
The SR1 algorithm has been rewritten, and follows \Cref{alg:TR_algorithm}. As for the trust region acceptance, we set $\eta = 5\times 10^{-4}$, and $r = 10^{-8}$ in \eqref{eq:condition_update_H}.
The initial trust region radius is set as $\min(\ell, \operatorname{diam}(\mathcal{X}))/2$.

\paragraph{BO} Classical BO was used from \texttt{AbstractBayesOpt.jl}, under the version $0.1.5$.
The noise nugget is set to $\sigma^2 = 10^{-9}$, and we use a Matérn $5/2$ kernel. 

For the BO-LOCAL techniques (ACQ, SPLIT) used in
\hyperlink{RQ_1}{\textbf{RQ1}}, the SR1-TR algorithm with the same parameters
as in LAGO are used. The stopping criteria for the SR1-TR to have converged are
\emph{either} $\|\mathbf{s}_k\| < 10^{-7}$, or 
$I_t < 10^{-12}$, which mimics the criteria in LAGO.

\paragraph{gradBO} Gradient-enhanced BO was also taken from the \texttt{AbstractBayesOpt.jl} library, under the version $0.1.5$.
The noise nugget is set to $\sigma^2 = 10^{-9}$, for both the function and its gradient, and we use a Matérn $5/2$ kernel.

\paragraph{L-BFGS} The L-BFGS routine is from \texttt{Optim.jl} ($1.13.3$) with out of the box parameters, and no preconditioners.

\paragraph{TuRBO} The TuRBO algorithm was taken from the original repository of the authors of \citet{eriksson2019scalable}. TuRBO-5 was also tested but gave similar results to TuRBO-1, and was hence not included in the plots.
Two modifications have been made to the original code:
\begin{enumerate}
    \item For TuRBO-1, we initialize the design of experiments with the same initial design as the other algorithms, but this was not possible for TuRBO-5 where all trust regions have their own initial datasets.
    \item The nugget MLE interval has been reduced to $[10^{-9}; 10^{-8}]$ as we deal with noise-free measurements.
\end{enumerate}

\paragraph{TREGO} The TREGO routine followed the tutorial provided in the \texttt{Trieste} \cite{trieste2023} documentation, as it was also provided as a legitimate source in \citet{diouane2023trego}.
We faced some stability issues with $\sigma^2 = 10^{-9}$, which we increased to $10^{-7}$ for all test cases. 
For TREGO, numerical instabilities were observed on the Rosenbrock function despite tuning the nugget parameter; results for this case have seen their nugget increased until success ($\sigma^2 = 10^{-3}$). 

\paragraph{LABCAT} The LABCAT algorithm was taken from the repository of the authors of \citet{visser2025labcat}, using the Python interface.
The initialization step prevents us to provide an original dataset using the Python framework. 
We therefore left the original dataset being initialized using a Latin Hypercube Sampling scheme as prescribed in the original paper \citep{visser2025labcat}.
As a stopping criterion, we put a tolerance on the range of output values of $10^{-12}$.
If this stopping criterion is met, we restart the optimization loop with the remaining budget, until exhausted.

\paragraph{BADS} BADS is using the \texttt{PyBADS} \citep{singh2024pybads} package, with default parameters and multiple restarts.

\paragraph{BLOSSOM} BLOSSOM was taken from the repository of the authors of
\citet{mcleod2018optimization}, and the global regret threshold was set to
$0.1$, as it was seen as the best performing in \citet{mcleod2018optimization}
on the synthetic benchmarks.

\paragraph{Julia-Python interface.} LAGO, BO, gradBO and L-BFGS scripts were written and run in Julia $1.12.2$. 
The remaining algorithms were run in Python $3.11.5$, except for BLOSSOM which was run on a Python $3.6.13$ instance, due to the requirement restrictions of the authors of \citet{mcleod2018optimization}. 
All evaluated points and objective values were stored in NumPy arrays \citep{harris2020array} and imported into Julia for post-processing and visualization.

All experiments were run with fixed random seeds, and the DOE were shared and initialized with a master seed. 

\paragraph{Licenses.} The implementation uses publicly available software under their respective
licenses, including \texttt{AbstractBayesOpt.jl} (MIT), \texttt{Optim.jl} (MIT), \texttt{Trieste}
(Apache-2.0), \texttt{PyBADS} (BSD-3-Clause), \texttt{LABCAT} (MIT), \texttt{TuRBO}
(non-commercial research license from Uber), and \texttt{BLOSSOM} (AGPL-3.0).

\paragraph{Compute resources.}
Experiments on low-dimensional problems ($d \leq 5$) were conducted on a local
workstation equipped with an AMD Ryzen $9$ $9950$X ($16$ cores) and $92$\,GB of RAM.
For more computationally demanding settings, such as certain baselines (e.g., BO, BLOSSOM)
and LAGO-BO on higher-dimensional problems such as Styblinski-Tang in $10$D, experiments were executed
on a shared CPU cluster with Intel Xeon Platinum processors.

The computational cost varies significantly across methods and problem
dimensions. For the proposed LAGO framework, a single run (one seed) typically
completes within a few minutes in low-dimensional settings, and remains under
30 minutes for dimensions up to $d = 5$. In higher-dimensional settings (e.g.,
$d = 10$), runtimes increase to roughly a day due to (i) harder optimization of the acquisition function, and (ii) the larger evaluation budget
($210d$), requiring the use of a cluster to parallelize over the seeds. 
Some baseline methods require significantly longer runtimes, with
certain configurations for $d \geq 5$ taking multiple days per seed.

Overall, the experiments do not require specialized hardware (e.g., GPUs) and
can be reproduced on CPU-based systems, although total runtime depends
on the method and problem dimensionality.

\end{document}